\documentclass{article}

\PassOptionsToPackage{numbers, compress}{natbib}



    \usepackage[final]{neurips_2021}


\usepackage[utf8]{inputenc} 
\usepackage[T1]{fontenc}    
\usepackage{hyperref}       
\usepackage{url}            
\usepackage{booktabs}       
\usepackage{amsfonts}       
\usepackage{nicefrac}       
\usepackage{microtype}      
\usepackage{xcolor}         

\definecolor{BrickRed}{rgb}{0.6,0,0}
\definecolor{RoyalBlue}{rgb}{0,0,0.8}
\hypersetup{colorlinks=true, citecolor=BrickRed, linkcolor=RoyalBlue}

\usepackage{lipsum}
\usepackage{multirow}
\usepackage{graphicx}
\usepackage{wrapfig}
\usepackage{amsmath}
\usepackage{amsthm}
\usepackage{bm}
\usepackage{dsfont}
\newtheorem{theorem}{Theorem}
\newtheorem{lemma}[theorem]{Lemma}

\usepackage{subfigure}
\usepackage{enumitem}

\def \linf {\ell_{\infty}}

\title{Better Safe Than Sorry: Preventing Delusive Adversaries with Adversarial Training}

\author{
Lue Tao\textsuperscript{\rm 1,2}\quad\quad Lei Feng\textsuperscript{\rm 3}\quad\quad Jinfeng Yi\textsuperscript{\rm 4}\quad\quad Sheng-Jun Huang\textsuperscript{\rm 1,2}\quad\quad Songcan Chen\textsuperscript{\rm 1,2}\footnotemark[2]\\
\small \textsuperscript{\rm 1}College of Computer Science and Technology, Nanjing University of Aeronautics and Astronautics\\
\small \textsuperscript{\rm 2}MIIT Key Laboratory of Pattern Analysis and Machine Intelligence\\
\small \textsuperscript{\rm 3}College of Computer Science, Chongqing University\\
\small \textsuperscript{\rm 4}JD AI Research\\
}

\begin{document}

\maketitle

\renewcommand{\thefootnote}{\fnsymbol{footnote}}
\footnotetext[2]{Corresponding author: Songcan Chen <s.chen@nuaa.edu.cn>.}
\renewcommand{\thefootnote}{\arabic{footnote}}

\begin{abstract}
\textit{Delusive attacks} aim to substantially deteriorate the test accuracy of the learning model by \textit{slightly} perturbing the \textit{features} of correctly labeled training examples. By formalizing this malicious attack as finding the worst-case training data within a specific $\infty$-Wasserstein ball, we show that minimizing adversarial risk on the \emph{perturbed data} is equivalent to optimizing an upper bound of natural risk on the \emph{original data}. This implies that adversarial training can serve as a \textit{principled} defense against delusive attacks. Thus, the test accuracy decreased by delusive attacks can be largely recovered by adversarial training. To further understand the internal mechanism of the defense, we disclose that adversarial training can resist the delusive perturbations by preventing the learner from overly relying on non-robust features in a natural setting. Finally, we complement our theoretical findings with a set of experiments on popular benchmark datasets, which show that the defense withstands six different practical attacks. Both theoretical and empirical results vote for adversarial training when confronted with delusive adversaries.
\end{abstract}

\section{Introduction}
\label{sec:intro}

Although machine learning (ML) models have achieved superior performance on many challenging tasks, their performance can be largely deteriorated when the training and test distributions are different. For instance, standard models are prone to make mistakes on the adversarial examples that are considered as worst-case data at \textit{test} time \cite{biggio2013evasion, szegedy2013intriguing}. Compared with that, a more threatening and easily overlooked threat is the malicious perturbations at \textit{training} time, i.e., the \textit{delusive attacks} \cite{newsome2006paragraph} that aim to maximize test error by slightly perturbing the correctly labeled training examples \cite{barreno2010security, biggio2018wild}.

In the era of big data, many practitioners collect training data from untrusted sources where delusive adversaries may exist. In particular, many companies are scraping large datasets from unknown users or public websites for commercial use. For example, Kaspersky Lab, a leading antivirus company, has been accused of poisoning competing products \cite{biggio2018wild}. Although they denied any wrongdoings and clarified the false rumors, one can still imagine the disastrous consequences if that really happens in the security-critical applications. Furthermore, a recent survey of 28 organizations found that these industry practitioners are obviously more afraid of data poisoning than other threats from adversarial ML \cite{kumar2020adversarial}. In a nutshell, delusive attacks has become a realistic and horrible threat to practitioners.

Recently, \citet{feng2019learning} showed for the first time that delusive attacks are feasible for deep networks, by proposing ``training-time adversarial data'' that can significantly deteriorate model performance on clean test data. However, how to design learning algorithms that are robust to delusive attacks still remains an open question due to several crucial challenges \cite{newsome2006paragraph, feng2019learning}. First of all, delusive attacks cannot be avoided by standard data cleaning~\citep{kandel2011wrangler}, since they does not require mislabeling, and the perturbed examples will maintain their malice even when they are correctly labeled by experts. In addition, even if the perturbed examples could be distinguished by some detection techniques, it is wasteful to filter out these correctly labeled examples, considering that deep models are data-hungry. In an extreme case where all examples in the training set are perturbed by a delusive adversary, there will leave no training examples after the filtering stage, thus the learning process is still obstructed. Given these challenges, we aim to examine the following question in this study: 
\textit{Is it possible to defend against delusive attacks without abandoning the perturbed examples?}


\begin{figure*}[t]
    \vspace{-5px}
    \centering
    \begin{minipage}[b]{0.4\linewidth}
        \centering
        \raisebox{4.0mm}{
        \includegraphics[width=1\textwidth]{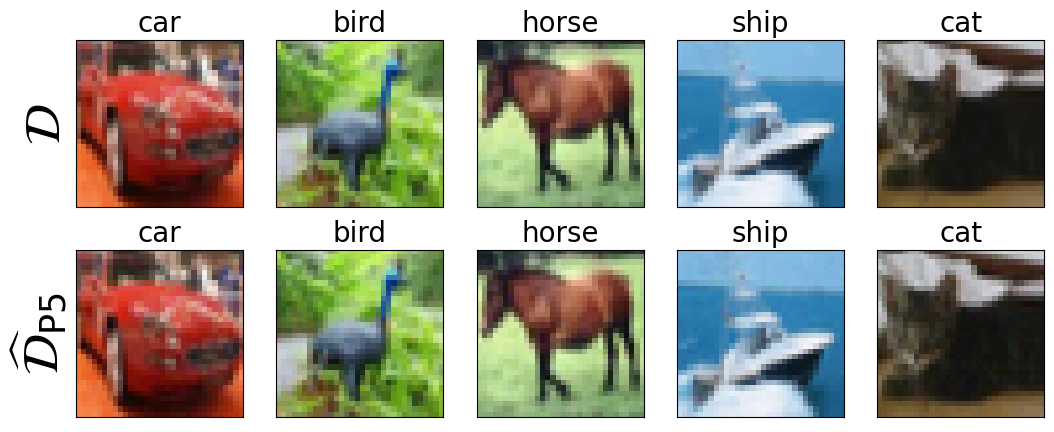}
        }
    \end{minipage}
    \quad
    \quad
    \begin{minipage}[b]{0.4\linewidth}
        \centering
        \includegraphics[width=1\textwidth]{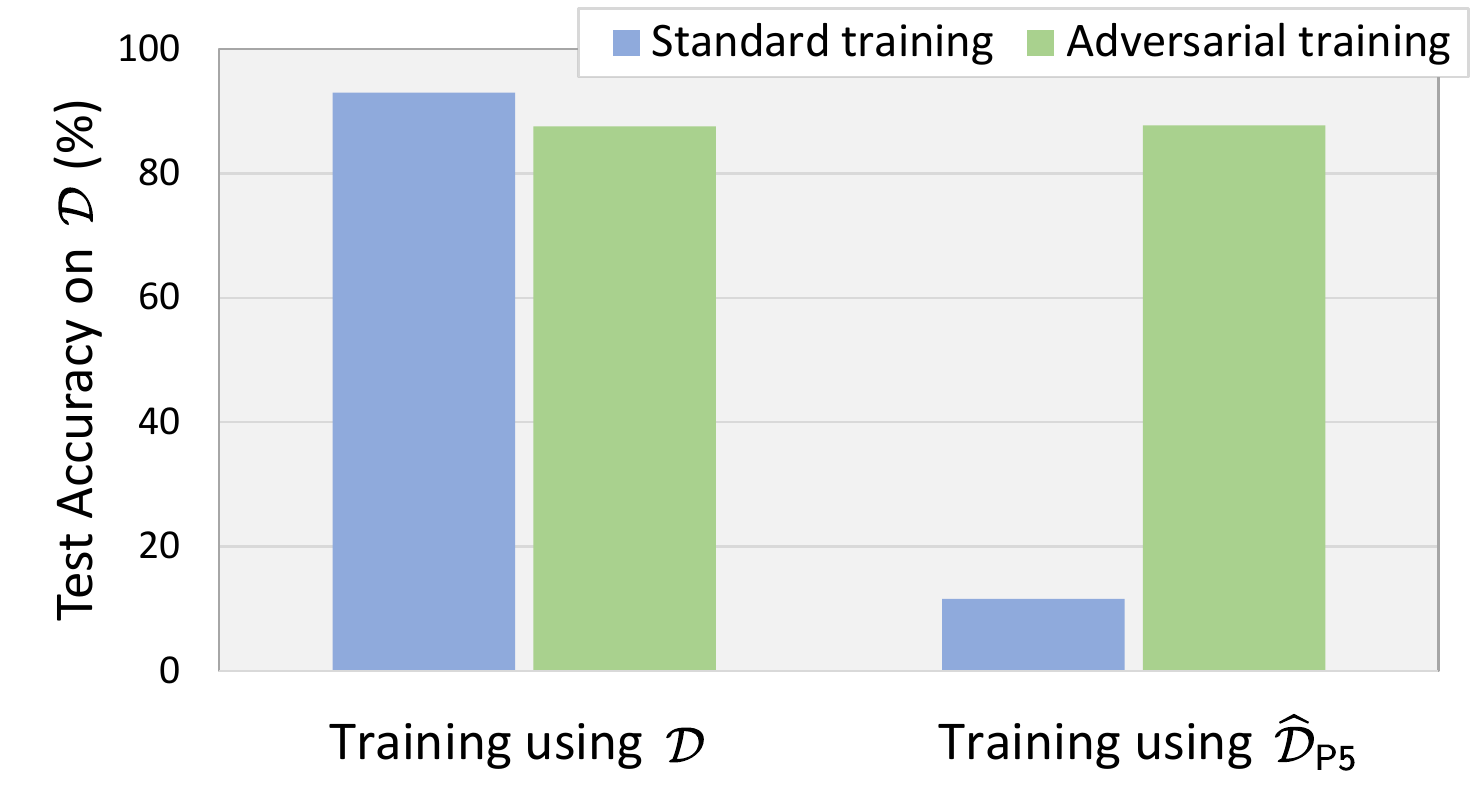}
    \end{minipage}
    \vspace{-3px}
   
   \caption{
   An illustration of delusive attacks and adversarial training.
   \textbf{Left}: Random samples from the CIFAR-10 \cite{krizhevsky2009learning} training set: the original training set $\mathcal{D}$; the perturbed training set $\widehat{\mathcal{D}}_{\mathsf{P5}}$, generated using the \texttt{P5} attack described in Section \ref{section.attacks}. \textbf{Right}: Natural accuracy evaluated on clean test set for models trained with: \textit{i)} standard training on $\mathcal{D}$; \textit{ii)} adversarial training on $\mathcal{D}$; \textit{iii)} standard training on $\widehat{\mathcal{D}}_{\mathsf{P5}}$; \textit{iv)} adversarial training on $\widehat{\mathcal{D}}_{\mathsf{P5}}$.  While standard training on $\widehat{\mathcal{D}}_{\mathsf{P5}}$ incurs poor generalization performance on $\mathcal{D}$, adversarial training can help a lot. Details are deferred to Section \ref{section.exp.understanding}.
   }
   \label{fig:illustration}
\vspace{-10px}
\end{figure*}

In this work, we provide an affirmative answer to this question. We first formulate the task of delusive attacks as finding the worst-case data at training time within a specific $\infty$-Wasserstein ball that prevents label changes (Section \ref{section.prelimi}). By doing so, we find that minimizing the \textit{adversarial risk} on the \emph{perturbed data} is equivalent to optimizing an upper bound of natural risk on the \emph{original data} (Section \ref{section.general_case}). This implies that \textit{adversarial training} \cite{goodfellow2014explaining, madry2018towards} on the perturbed training examples can maximize the natural accuracy on the clean examples. Further, we disclose that adversarial training can resist the delusive perturbations by preventing the learner from overly relying on the non-robust features (that are predictive, yet brittle or incomprehensible to humans) in a simple and natural setting. Specifically, two opposite perturbation directions are studied, and adversarial training helps in both cases with different mechanisms (Section \ref{section.motivating_example}). All these evidences suggest that adversarial training is a promising solution to defend against delusive attacks.

Importantly, our findings widen the scope of application of adversarial training, which was only considered as a principled defense method against test-time adversarial examples~\cite{madry2018towards, croce2021robustbench}. Note that adversarial training usually leads to a drop in natural accuracy \cite{tsipras2018robustness}. This makes it less practical in many real-world applications where test-time attacks are rare and a high accuracy on clean test data is required~\citep{lechner2021adversarial}. However, this study shows that adversarial training can also defend against a more threatening and invisible threat called delusive adversaries (see Figure \ref{fig:illustration} for an illustration). We believe that adversarial training will be more widely used in practical applications in the future.

Finally, we present five practical attacks to empirically evaluate the proposed defense (Section \ref{section.attacks}). Extensive experiments on various datasets (CIFAR-10, SVHN, and a subset of ImageNet) and tasks (supervised learning, self-supervised learning, and overcoming simplicity bias) demonstrate the effectiveness and versatility of adversarial training, which significantly mitigates the destructiveness of various delusive attacks (Section \ref{section.experiments}).
Our main contributions are summarized as follows:
\begin{itemize}
    \item \textbf{Formulation of delusive attacks.} We provide the first attempt to formulate delusive attacks using the $\infty$-Wasserstein distance. This formulation is novel and general, and can cover the formulation of the attack proposed by~\citet{feng2019learning}.
    \item \textbf{The principled defense.} Equipped with the novel characterization of delusive attacks, we are able to show that, for the first time, adversarial training can serve as a \textit{principled} defense against delusive attacks with theoretical guarantee (Theorem~\ref{thm.3}).
    \item \textbf{Internal Mechanisms.} We further disclose the internal mechanisms of the defense in a popular mixture-Gaussian setting (Theorem~\ref{thm.1} and Theorem~\ref{thm.2}).
    \item \textbf{Empirical evidences.} We complement our theoretical findings with extensive experiments across a wide range of datasets and tasks.
\end{itemize}

\section{Preliminaries}
\label{section.prelimi}

In this section, we introduce some notations and the main ideas we build upon: natural risk, adversarial risk, Wasserstein distance, and delusive attacks.

\textbf{Notation.}
Consider a classification task with data $(\boldsymbol{x}, y) \in \mathcal{X} \times \mathcal{Y}$ from an underlying distribution $\mathcal{D}$. We seek to learn a model $f: \mathcal{X} \rightarrow \mathcal{Y}$ by minimizing a loss function $\ell (f(\boldsymbol{x}), y)$. Let $\Delta : \mathcal{X} \times \mathcal{X} \rightarrow \mathbb{R}$ be some distance metric. Let $\mathcal{B}(\boldsymbol{x}, \epsilon, \Delta)=\{\boldsymbol{x}^{\prime} \in \mathcal{X}: \Delta(\boldsymbol{x}, \boldsymbol{x}^{\prime}) \leq \epsilon\}$ be the ball around $\boldsymbol{x}$ with radius $\epsilon$. When $\Delta$ is free of context, we simply write $\mathcal{B}(\boldsymbol{x}, \epsilon, \Delta) = \mathcal{B}(\boldsymbol{x}, \epsilon)$. Throughout the paper, the adversary is allowed to perturb only the inputs, not the labels. Thus, similar to \citet{sinha2018certifiable}, we define the cost function $c : \mathcal{Z} \times \mathcal{Z} \rightarrow \mathbb{R} \cup \{\infty\}$ by $c(\boldsymbol{z}, \boldsymbol{z}^{\prime}) = \Delta(\boldsymbol{x}, \boldsymbol{x}^{\prime}) + \infty \cdot \mathbf{1}\{y \neq y^{\prime}\}$, where $\boldsymbol{z} = (\boldsymbol{x}, y)$ and $\mathcal{Z}$ is the set of possible values for $(\boldsymbol{x}, y)$. Denote by $\mathcal{P}(\mathcal{Z})$ the set of all probability measures on $\mathcal{Z}$. For any $\boldsymbol{\mu} \in \mathbb{R}^d$ and positive definite matrix $\boldsymbol{\Sigma} \in \mathbb{R}^{d \times d}$, denote by $\mathcal{N}(\boldsymbol{\mu}, \boldsymbol{\Sigma})$ the $d$-dimensional Gaussian distribution with mean vector $\boldsymbol{\mu}$ and covariance matrix $\boldsymbol{\Sigma}$.

\textbf{Natural risk.}
Standard training (ST) aims to minimize the natural risk, which is defined as
\begin{equation}
\textstyle
    \mathcal{R}_{\mathsf{nat}}(f, \mathcal{D}) = \mathbb{E} _{(\boldsymbol{x}, y) \sim \mathcal{D}}  \left[ \ell (f( \boldsymbol{x}), y) \right].
\end{equation}
The term ``natural accuracy'' refers to the accuracy of a model evaluated on the unperturbed data.

\textbf{Adversarial risk.}
The goal of adversarial training (AT) is to minimize the adversarial risk defined as
\begin{equation}
\textstyle
    \mathcal{R}_{\mathsf{adv}}(f, \mathcal{D}) = \mathbb{E} _{(\boldsymbol{x}, y) \sim \mathcal{D}}  [ \max_{\boldsymbol{x}' \in \mathcal{B}(\boldsymbol{x}, \epsilon)} \ell (f( \boldsymbol{x}'), y) ],
\end{equation}
which is a robust optimization problem that considers the worst-case performance under pointwise perturbations within an $\epsilon$-ball~\citep{madry2018towards}. The main assumption here is that the inputs satisfying $\Delta(\boldsymbol{x}, \boldsymbol{x}^{\prime}) \leq \epsilon$ preserve the label $y$ of the original input $\boldsymbol{x}$.

\textbf{Wasserstein distance.} 
Wasserstein distance is a distance function defined between two probability distributions, which represents the cost of an optimal mass transportation plan. Given two data distributions $\mathcal{D}$ and $\mathcal{D}^{\prime}$, the $p$\emph{-th Wasserstein distance}, for any $p \geq 1$, is defined as:
\begin{equation}
\label{equa.wasserstein_distance}
\textstyle
    \mathrm{W}_p(\mathcal{D}, \mathcal{D}^{\prime}) =
    (\inf _{\gamma \in \Pi(\mathcal{D}, \mathcal{D}^{\prime})} \int _{\mathcal{Z} \times \mathcal{Z}} c(\boldsymbol{z}, \boldsymbol{z}^{\prime})^{p} d \gamma(\boldsymbol{z}, \boldsymbol{z}^{\prime}))^{1 / p},
\end{equation}
where $\Pi(\mathcal{D}, \mathcal{D}^{\prime})$ is the collection of all probability measures on $\mathcal{Z} \times \mathcal{Z}$ with $\mathcal{D}$ and $\mathcal{D}^{\prime}$ being the marginals of the first and second factor, respectively. The $\infty$-Wasserstein distance is defined as the limit of $p$-th Wasserstein distance, i.e., $\mathrm{W}_{\infty}(\mathcal{D}, \mathcal{D}^{\prime})=\lim _{p \rightarrow \infty} \mathrm{W}_{p}(\mathcal{D}, \mathcal{D}^{\prime})$. The $p$\emph{-th Wasserstein ball} with respect to $\mathcal{D}$ and radius $\epsilon \geq 0$ is defined as: $\mathcal{B}_{\mathrm{W}_p}(\mathcal{D}, \epsilon) = \left\{\mathcal{D}^{\prime} \in \mathcal{P}(\mathcal{Z}): \mathrm{W}_{p}\left(\mathcal{D}, \mathcal{D}^{\prime}\right) \leq \epsilon\right\}$.

\textbf{Delusive adversary.} 
The \textit{attacker} is capable of manipulating the training data, as long as the training data is correctly labeled, to prevent the \textit{defender} from learning an accurate classifier~\citep{newsome2006paragraph}. Following~\citet{feng2019learning}, the game between the attacker and the defender proceeds as follows:
\begin{itemize}
    \item $n$ data points are drawn from $\mathcal{D}$ to produce a clean training dataset $\mathcal{D}_n$.
    \item The attacker perturbs some inputs $\boldsymbol{x}$ in $\mathcal{D}_n$ by adding small perturbations to produce $\boldsymbol{x}'$ such that $\Delta(\boldsymbol{x}, \boldsymbol{x}^{\prime}) \leq \epsilon$, where $\epsilon$ is a small constant that represents the attacker's budget. The (partially) perturbed inputs and their original labels constitute the perturbed dataset $\widehat{\mathcal{D}}_n$.
    \item The defender trains on the perturbed dataset $\widehat{\mathcal{D}}_n$ to produce a model, and incurs natural risk.
\end{itemize}
The attacker's goal is to maximize the natural risk while the defender's task is to minimize it. We then formulate the attacker's goal as the following bi-level optimization problem:
\begin{equation}
\label{equa.delusive_adversary}
\textstyle
    \underset{\widehat{\mathcal{D}} \in \mathcal{B}_{\mathrm{W}_{\infty}}(\mathcal{D}, \epsilon)}{\max}
    \mathcal{R}_{\mathsf{nat}}(f_{\widehat{\mathcal{D}}}, \mathcal{D}),
    \quad
    \text { s.t. }
    f_{\widehat{\mathcal{D}}}=\arg \underset{f}{\min} \ \mathcal{R}_{\mathsf{nat}}(f, \widehat{\mathcal{D}}).
\end{equation}
In other words, Eq.~(\ref{equa.delusive_adversary}) is seeking the training data bounded by the $\infty$-Wasserstein ball with radius $\epsilon$, so that the model trained on it has the worst performance on the original distribution.

\textbf{Remark 1.}  
It is worth noting that using the $\infty$-Wasserstein distance to constrain delusive attacks possesses several advantages. Firstly, the cost function $c$ used in Eq. (\ref{equa.wasserstein_distance}) prevents label changes after perturbations since we only consider clean-label attacks. Secondly, our formulation does not restrict the choice of the distance metric $\Delta$ of the input space, thus our theoretical analysis works with any metric, including the $\ell_{\infty}$ threat model considered in \citet{feng2019learning}. Finally, the $\infty$-Wasserstein ball is more aligned with adversarial risk than other uncertainty sets~\cite{staib2017distributionally, zhu2020learning}.

\textbf{Remark 2.}  
Our formulation assumes an underlying distribution that represents the perturbed dataset. This assumption has been widely adopted by existing works~\citep{steinhardt2017certified, tran2018spectral, zhu2019transferable}. The rationale behind the assumption is that generally, the defender treats the training dataset as an empirical distribution and trains the model on randomly shuffled examples (e.g., training deep networks via stochastic gradient descent). It is also easy to see that our formulation covers the formulation of~\citet{feng2019learning}. On the other hand, this assumption has its limitations. For example, if the defender treats the training examples as sequential data~\citep{dietterich2002machine}, the attacker may utilize the dependence in the sequence to construct perturbations. This situation is beyond the scope of this work, and we leave it as future work.

\section{Adversarial Training Beats Delusive Adversaries}
\label{section.defense}

In this section, we first justify the rationality of adversarial training as a principled defense method against delusive attacks in the \textit{general} case for \textit{any} data distribution. Further, to understand the internal mechanism of the defense, we explicitly explore the space that delusive attacks can exploit in a simple and natural setting. This indicates that adversarial training resists the delusive perturbations by preventing the learner from overly relying on the non-robust features.

\subsection{Adversarial Risk Bounds Natural Risk}
\label{section.general_case}

Intuitively, the original training data is close to the data perturbed by delusive attacks, so it should be found in the vicinity of the perturbed data. Thus, training models around the perturbed data can translate to a good generalization on the original data. We make the intuition formal in the following theorem, which indicates that adversarial training on the perturbed data is actually minimizing an upper bound of natural risk on the original data.

\begin{theorem}
\label{thm.3}
Given a classifier $f: \mathcal{X} \rightarrow \mathcal{Y}$, for any data distribution $\mathcal{D}$ and any perturbed distribution $\widehat{\mathcal{D}}$ such that $\widehat{\mathcal{D}} \in \mathcal{B}_{\mathrm{W}_{\infty}}(\mathcal{D}, \epsilon)$, we have
\begin{equation*}
\begin{aligned}
    \mathcal{R}_{\mathsf{nat}}(f, \mathcal{D})
    \leq
    \underset{\mathcal{D}^{\prime} \in \mathcal{B}_{\mathrm{W}_{\infty}}(\widehat{\mathcal{D}}, \epsilon)}{\max} \mathcal{R}_{\mathsf{nat}}(f, \mathcal{D}^{\prime}) 
    =
    \mathcal{R}_{\mathsf{adv}}(f, \widehat{\mathcal{D}}).
\end{aligned}
\end{equation*}
\end{theorem}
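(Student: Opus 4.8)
My plan is to treat the displayed inequality and the displayed equality separately; essentially all of the work lies in the equality. The inequality is immediate: since $\Delta$, and hence the cost $c$, is symmetric, the $\infty$-Wasserstein distance is symmetric, so $\widehat{\mathcal{D}} \in \mathcal{B}_{\mathrm{W}_{\infty}}(\mathcal{D}, \epsilon)$ is equivalent to $\mathcal{D} \in \mathcal{B}_{\mathrm{W}_{\infty}}(\widehat{\mathcal{D}}, \epsilon)$; thus $\mathcal{D}$ is a feasible point of the maximization on the right, and $\mathcal{R}_{\mathsf{nat}}(f, \mathcal{D}) \leq \max_{\mathcal{D}' \in \mathcal{B}_{\mathrm{W}_{\infty}}(\widehat{\mathcal{D}}, \epsilon)} \mathcal{R}_{\mathsf{nat}}(f, \mathcal{D}')$ follows at once. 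For the equality I would first record the primal description of the $\infty$-Wasserstein ball: $\mathcal{D}' \in \mathcal{B}_{\mathrm{W}_{\infty}}(\widehat{\mathcal{D}}, \epsilon)$ if and only if there is a coupling $\gamma \in \Pi(\widehat{\mathcal{D}}, \mathcal{D}')$ with $c(\boldsymbol{z}, \boldsymbol{z}') \leq \epsilon$ for $\gamma$-almost every $(\boldsymbol{z}, \boldsymbol{z}')$ --- that is, $\mathrm{W}_{\infty}$ is the infimum over couplings of the $\gamma$-essential supremum of $c$, which is the $p \to \infty$ limit of Eq.~(\ref{equa.wasserstein_distance}). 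Because of the shape of $c$, every such coupling is supported on $\{y = y'\} \cap \{\Delta(\boldsymbol{x}, \boldsymbol{x}') \leq \epsilon\}$: it moves mass only by label-preserving, $\Delta$-$\epsilon$-bounded displacements.

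For the direction ``$\leq$'', I would fix a feasible $\mathcal{D}'$ and a coupling $\gamma$ as above. Then $\mathcal{R}_{\mathsf{nat}}(f, \mathcal{D}') = \mathbb{E}_{\gamma}[\ell(f(\boldsymbol{x}'), y')] = \mathbb{E}_{\gamma}[\ell(f(\boldsymbol{x}'), y)]$ using $y = y'$ $\gamma$-almost surely, and since $\boldsymbol{x}' \in \mathcal{B}(\boldsymbol{x}, \epsilon)$ $\gamma$-almost surely we have $\ell(f(\boldsymbol{x}'), y) \leq \max_{\boldsymbol{x}'' \in \mathcal{B}(\boldsymbol{x}, \epsilon)} \ell(f(\boldsymbol{x}''), y)$ pointwise. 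Taking $\gamma$-expectations and using that the first marginal of $\gamma$ is $\widehat{\mathcal{D}}$ gives $\mathcal{R}_{\mathsf{nat}}(f, \mathcal{D}') \leq \mathbb{E}_{(\boldsymbol{x}, y) \sim \widehat{\mathcal{D}}}[\max_{\boldsymbol{x}'' \in \mathcal{B}(\boldsymbol{x}, \epsilon)} \ell(f(\boldsymbol{x}''), y)] = \mathcal{R}_{\mathsf{adv}}(f, \widehat{\mathcal{D}})$; taking the supremum over $\mathcal{D}'$ closes this half.

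For ``$\geq$'', I would exhibit an explicit maximizer. Let $\boldsymbol{x}^{\star}(\boldsymbol{x}, y) \in \arg\max_{\boldsymbol{x}'' \in \mathcal{B}(\boldsymbol{x}, \epsilon)} \ell(f(\boldsymbol{x}''), y)$, put $T(\boldsymbol{x}, y) = (\boldsymbol{x}^{\star}(\boldsymbol{x}, y), y)$, and set $\mathcal{D}' := T_{\#} \widehat{\mathcal{D}}$. The coupling $(\mathrm{id}, T)_{\#} \widehat{\mathcal{D}} \in \Pi(\widehat{\mathcal{D}}, \mathcal{D}')$ has cost $c((\boldsymbol{x}, y), (\boldsymbol{x}^{\star}, y)) = \Delta(\boldsymbol{x}, \boldsymbol{x}^{\star}) \leq \epsilon$ everywhere, so $\mathcal{D}' \in \mathcal{B}_{\mathrm{W}_{\infty}}(\widehat{\mathcal{D}}, \epsilon)$; and $\mathcal{R}_{\mathsf{nat}}(f, \mathcal{D}') = \mathbb{E}_{(\boldsymbol{x}, y) \sim \widehat{\mathcal{D}}}[\ell(f(\boldsymbol{x}^{\star}(\boldsymbol{x}, y)), y)] = \mathcal{R}_{\mathsf{adv}}(f, \widehat{\mathcal{D}})$ by the choice of $\boldsymbol{x}^{\star}$. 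Hence the maximum is at least $\mathcal{R}_{\mathsf{adv}}(f, \widehat{\mathcal{D}})$, which together with the previous paragraph yields the equality and shows it is attained.

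The delicate points --- which I expect to be the main obstacle --- are measure-theoretic rather than conceptual. First, one must justify the coupling form of $\mathrm{W}_{\infty}$ as a genuine $p \to \infty$ limit; I would invoke the standard theory of $\infty$-optimal transport, noting the mild standing hypotheses under which it applies (Polish $\mathcal{Z}$, lower semicontinuous cost, and here compact perturbation balls $\mathcal{B}(\boldsymbol{x}, \epsilon)$). Second, and more subtle, the selection $\boldsymbol{x}^{\star}(\cdot, \cdot)$ used in ``$\geq$'' must be measurable for the pushforward to be well defined, and the inner maximum must be attained. I would secure both from the regularity usually assumed in adversarial training --- $\boldsymbol{x} \mapsto \ell(f(\boldsymbol{x}), y)$ continuous and $\mathcal{B}(\boldsymbol{x}, \epsilon)$ compact --- so that the arg-max is nonempty and admits a measurable selection (Kuratowski--Ryll-Nardzewski). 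To avoid selection theorems altogether, one can work instead with $\eta$-approximate maximizers, form the corresponding $\mathcal{D}'_{\eta} \in \mathcal{B}_{\mathrm{W}_{\infty}}(\widehat{\mathcal{D}}, \epsilon)$, and let $\eta \downarrow 0$; this variant also clarifies that if the inner max is not attained, the ``$\max$'' in the statement is the supremum, still approached arbitrarily closely within the ball.
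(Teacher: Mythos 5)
Your proposal is correct and follows the same two-step structure as the paper's proof: the inequality comes from the symmetry of $\mathrm{W}_{\infty}$, which makes $\mathcal{D}$ a feasible point of the maximization over $\mathcal{B}_{\mathrm{W}_{\infty}}(\widehat{\mathcal{D}}, \epsilon)$, and the equality is the identification of adversarial risk with distributionally robust optimization over the $\infty$-Wasserstein ball. The only difference is that the paper delegates that identification to Lemma~\ref{lemma.DRO_is_adv} (citing \citet{staib2017distributionally} and \citet{zhu2020learning}), whereas you reprove it from first principles via the coupling characterization and a measurable selection of the inner argmax --- which is essentially the standard proof of the cited lemma, with the measure-theoretic caveats correctly flagged.
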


The proof is provided in Appendix \ref{appendix.proofs.general_case}. Theorem~\ref{thm.3} suggests that adversarial training is a principled defense method against delusive attacks. Therefore, when our training data is collected from untrusted sources where delusive adversaries may exist, adversarial training can be applied to minimize the desired natural risk.
Besides, Theorem \ref{thm.3} also highlights the importance of the budget $\epsilon$. On the one hand, if the defender is overly pessimistic (i.e., the defender's budget is larger than the attacker's budget), the tightness of the upper bound cannot be guaranteed. On the other hand, if the defender is overly optimistic (i.e., the defender's budget is relatively small or even equals to zero), the natural risk on the original data cannot be upper bounded anymore by the adversarial risk. Our experiments in Section \ref{section.exp.understanding} cover these cases when the attacker's budget is not specified.

\subsection{Internal Mechanism of the Defense}
\label{section.motivating_example}

To further understand the internal mechanism of the defense, in this subsection, we consider a simple and natural setting that allows us to explicitly manipulate the non-robust features. It turns out that, similar to the situation in adversarial examples \citep{tsipras2018robustness, ilyas2019adversarial}, the model's reliance on non-robust features also allows delusive adversaries to take advantage of it, and adversarial training can resist delusive perturbations by preventing the learner from overly relying on the non-robust features.

As \citet{ilyas2019adversarial} has clarified that both robust and non-robust features in data constitute useful signals for standard classification, we are motivated to consider the following binary classification problem on a mixture of two Gaussian distributions $\mathcal{D}$:
\begin{equation}
\label{equa.mixGau.ori}
y \stackrel{u \cdot a \cdot r}{\sim}\{-1,+1\}, \quad \boldsymbol{x} \sim \mathcal{N}(y \cdot \boldsymbol{\mu}, \sigma^2 \boldsymbol{I}), 
\end{equation}
where $\boldsymbol{\mu} = (1, \eta, \ldots, \eta) \in \mathbb{R}^{d+1}$ is the mean vector which consists of $1$ robust feature with center $1$ and $d$ non-robust features with corresponding centers $\eta$, similar to the settings in \citet{tsipras2018robustness}.
Typically, there are far more non-robust features than robust features (i.e., $d \gg 1$). To restrict the capability of delusive attacks, here we chose the metric function $\Delta(\boldsymbol{x}, \boldsymbol{x}^{\prime}) = \| \boldsymbol{x} - \boldsymbol{x}^{\prime} \|_{\infty}$.
We assume that the attacker's budget $\epsilon$ satisfies $\epsilon \ge 2\eta$ and $\eta \le 1/3$, so that the attacker: \textit{i)} can shift each non-robust feature towards becoming anti-correlated with the correct label; \textit{ii)} cannot shift each non-robust feature to be strongly correlated with the correct label (as strong as the robust feature).

\textbf{Delusive attack is easy.}
For the sake of illustration, here we choose $\epsilon=2\eta$ and consider two opposite perturbation directions. One direction is that all non-robust features shift towards $-y$, the other is to shift towards $y$. These settings are chosen for mathematical convenience. The following analysis can be easily adapted to any $\epsilon \geq 2\eta$ and any combinations of the two directions on non-robust features.

Note that the Bayes optimal classifier (i.e., minimization of the natural risk with 0-1 loss) for the distribution $\mathcal{D}$ is $f_{\mathcal{D}} (\boldsymbol{x}) = \operatorname{sign}(\boldsymbol{\mu}^{\top} \boldsymbol{x})$, which relies on both robust feature and non-robust features. As a result, an $\ell_{\infty}$-bounded delusive adversary that is only allowed to perturb each non-robust feature by a moderate $\epsilon$ can take advantage of the space of non-robust features. Formally, the original distribution $\mathcal{D}$ can be perturbed to the delusive distribution $\widehat{\mathcal{D}}_1$:
\begin{equation}
\label{equa.mixGau.del1}
\textstyle
    y \stackrel{u \cdot a \cdot r}{\sim}\{-1,+1\}, \quad
    \boldsymbol{x} \sim \mathcal{N}(y \cdot \widehat{\boldsymbol{\mu}}_1, \sigma^2 \boldsymbol{I}),
\end{equation}
where $\widehat{\boldsymbol{\mu}}_1=(1, -\eta, \ldots, -\eta)$ is the shifted mean vector. After perturbation, every non-robust feature is correlated with $-y$, thus the Bayes optimal classifier for $\widehat{\mathcal{D}}_1$ would yield extremely poor performance on $\mathcal{D}$, for $d$ large enough. Another interesting perturbation direction is to strengthen the magnitude of non-robust features. This leads to the delusive distribution $\widehat{\mathcal{D}}_2$:
\begin{equation}
\label{equa.mixGau.del2}
\textstyle
    y \stackrel{u \cdot a \cdot r}{\sim}\{-1,+1\}, \quad
    \boldsymbol{x} \sim \mathcal{N}(y \cdot \widehat{\boldsymbol{\mu}}_2, \sigma^2 \boldsymbol{I}),
\end{equation}
where $\widehat{\boldsymbol{\mu}}_2=(1, 3\eta, \ldots, 3\eta)$ is the shifted mean vector. Then, the Bayes optimal classifier for $\widehat{\mathcal{D}}_2$ will overly rely on the non-robust features, thus likewise yielding poor performance on $\mathcal{D}$.

The above two attacks are legal because the delusive distributions are close enough to the original distribution, that is, $\mathrm{W}_{\infty}(\mathcal{D}, \widehat{\mathcal{D}}_1) \leq \epsilon$ and $\mathrm{W}_{\infty}(\mathcal{D}, \widehat{\mathcal{D}}_2) \leq \epsilon$. Meanwhile, the attacks are also harmful. The following theorem directly compares the destructiveness of the attacks.

\begin{theorem}
\label{thm.1}
Let $f_{\mathcal{D}}$, $f_{\widehat{\mathcal{D}}_1}$, and $f_{\widehat{\mathcal{D}}_2}$ be the Bayes optimal classifiers for the mixture-Gaussian distributions $\mathcal{D}$, $\widehat{\mathcal{D}}_1$, and $\widehat{\mathcal{D}}_2$, defined in Eqs.~(\ref{equa.mixGau.ori}),~(\ref{equa.mixGau.del1}), and~(\ref{equa.mixGau.del2}), respectively. For any $\eta > 0$, we have
\begin{equation*}
    \mathcal{R}_{\mathsf{nat}}(f_{\mathcal{D}}, \mathcal{D}) < \mathcal{R}_{\mathsf{nat}}(f_{\widehat{\mathcal{D}}_2}, \mathcal{D}) < \mathcal{R}_{\mathsf{nat}}(f_{\widehat{\mathcal{D}}_1}, \mathcal{D}).
\end{equation*}
\end{theorem}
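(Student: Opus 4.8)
The plan is to reduce the whole statement to a one-variable algebraic comparison. First I would record that each of $\mathcal{D}$, $\widehat{\mathcal{D}}_1$, $\widehat{\mathcal{D}}_2$ is an equiprobable two-component Gaussian mixture, $y \sim \mathrm{unif}\{-1,+1\}$ and $\boldsymbol{x} \sim \mathcal{N}(y\boldsymbol{\nu}, \sigma^2\boldsymbol{I})$ with $\boldsymbol{\nu} \in \{\boldsymbol{\mu}, \widehat{\boldsymbol{\mu}}_1, \widehat{\boldsymbol{\mu}}_2\}$. For such a mixture the log-likelihood ratio of the two components is affine in $\boldsymbol{x}$, and since the priors and the covariances coincide the decision boundary passes through the origin; hence, exactly as already noted in the text for $\mathcal{D}$, the Bayes optimal classifier for the $0$-$1$ loss is $\boldsymbol{x} \mapsto \operatorname{sign}(\boldsymbol{\nu}^\top\boldsymbol{x})$. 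Thus $f_{\mathcal{D}}$, $f_{\widehat{\mathcal{D}}_1}$, $f_{\widehat{\mathcal{D}}_2}$ are precisely the linear classifiers with weight vectors $\boldsymbol{\mu}$, $\widehat{\boldsymbol{\mu}}_1$, $\widehat{\boldsymbol{\mu}}_2$.

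Next I would compute the natural risk on $\mathcal{D}$ of a generic linear classifier $\boldsymbol{x}\mapsto\operatorname{sign}(\boldsymbol{w}^\top\boldsymbol{x})$ with $\boldsymbol{w}\neq\boldsymbol{0}$. Conditioning on $y$ and using $\boldsymbol{w}^\top\boldsymbol{x}\mid y \sim \mathcal{N}(y\,\boldsymbol{w}^\top\boldsymbol{\mu}, \sigma^2\|\boldsymbol{w}\|^2)$, the two values of $y$ contribute equally by symmetry and a one-line Gaussian-tail calculation gives the risk
\[
    \mathcal{R}_{\mathsf{nat}}(\operatorname{sign}(\boldsymbol{w}^\top\cdot), \mathcal{D}) = \Phi\!\left(-\frac{\boldsymbol{w}^\top\boldsymbol{\mu}}{\sigma\,\|\boldsymbol{w}\|}\right),
\]
where $\Phi$ is the standard normal CDF and the identity holds regardless of the sign of $\boldsymbol{w}^\top\boldsymbol{\mu}$. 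Since $\Phi$ is strictly increasing, the claimed chain $\mathcal{R}_{\mathsf{nat}}(f_{\mathcal{D}},\mathcal{D}) < \mathcal{R}_{\mathsf{nat}}(f_{\widehat{\mathcal{D}}_2},\mathcal{D}) < \mathcal{R}_{\mathsf{nat}}(f_{\widehat{\mathcal{D}}_1},\mathcal{D})$ is equivalent to the reversed chain $S(\boldsymbol{\mu}) > S(\widehat{\boldsymbol{\mu}}_2) > S(\widehat{\boldsymbol{\mu}}_1)$ for the alignment functional $S(\boldsymbol{w}) := \boldsymbol{w}^\top\boldsymbol{\mu}/\|\boldsymbol{w}\|$.

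It then remains to verify these two inequalities. Writing $t := d\eta^2 > 0$ and plugging in the explicit vectors, $S(\boldsymbol{\mu}) = \sqrt{1+t}$, $S(\widehat{\boldsymbol{\mu}}_2) = (1+3t)/\sqrt{1+9t}$, and $S(\widehat{\boldsymbol{\mu}}_1) = (1-t)/\sqrt{1+t}$. For $S(\boldsymbol{\mu}) > S(\widehat{\boldsymbol{\mu}}_2)$ both sides are positive, so squaring turns it into $(1+t)(1+9t) > (1+3t)^2$, i.e.\ $4t > 0$. For $S(\widehat{\boldsymbol{\mu}}_2) > S(\widehat{\boldsymbol{\mu}}_1)$ I would split into cases: if $t \ge 1$ the right-hand side is $\le 0$ while the left-hand side is positive; if $t < 1$ both sides are positive and squaring reduces the inequality to $(1+3t)^2(1+t) > (1-t)^2(1+9t)$, i.e.\ $32t^2 > 0$. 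Both hold for every $\eta > 0$, which proves the theorem. The only step needing care is this last case split: because $\widehat{\boldsymbol{\mu}}_1^\top\boldsymbol{\mu} = 1 - d\eta^2$ is negative precisely in the large-$d$ regime of interest, one cannot square $S(\widehat{\boldsymbol{\mu}}_2) > S(\widehat{\boldsymbol{\mu}}_1)$ unconditionally; everything else is a routine Gaussian-tail computation plus elementary algebra, and in particular the argument uses neither $\epsilon \ge 2\eta$ nor $\eta \le 1/3$, consistent with the statement holding for all $\eta > 0$.
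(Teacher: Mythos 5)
Your proposal is correct and follows essentially the same route as the paper: identify each Bayes classifier as $\operatorname{sign}(\boldsymbol{\nu}^\top\boldsymbol{x})$, reduce each natural risk to the Gaussian tail probability $\Phi\bigl(-\boldsymbol{\nu}^\top\boldsymbol{\mu}/(\sigma\|\boldsymbol{\nu}\|)\bigr)$, and compare the three resulting arguments $\sqrt{1+d\eta^2}$, $(1+3d\eta^2)/\sqrt{1+9d\eta^2}$, and $(1-d\eta^2)/\sqrt{1+d\eta^2}$. If anything, you are more careful than the paper at the last step, where the appendix simply asserts the inequality chain while you supply the squaring algebra and the sign case split needed when $1-d\eta^2<0$.
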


The proof is provided in Appendix \ref{appendix.proofs.gaussian_case_1}. Theorem~\ref{thm.1} indicates that both attacks will increase the natural risk of the Bayes optimal classifier.
Moreover, $\widehat{\mathcal{D}}_1$ is more harmful because it always incurs higher natural risk than $\widehat{\mathcal{D}}_2$. The destructiveness depends on the dimension of non-robust features.
For intuitive understanding, we plot the natural accuracy of the classifiers as a function of $d$ in Figure \ref{fig.gaussian_vary_dim}. We observe that, as the number of non-robust features increases, the natural accuracy of the standard model $f_{\widehat{\mathcal{D}}_1}$ continues to decline, while the natural accuracy of $f_{\widehat{\mathcal{D}}_2}$ first decreases and then increases.

\begin{wrapfigure}{r}{0.46\linewidth}
	\centering
	\vspace{-18px}
	\includegraphics[width=\linewidth]{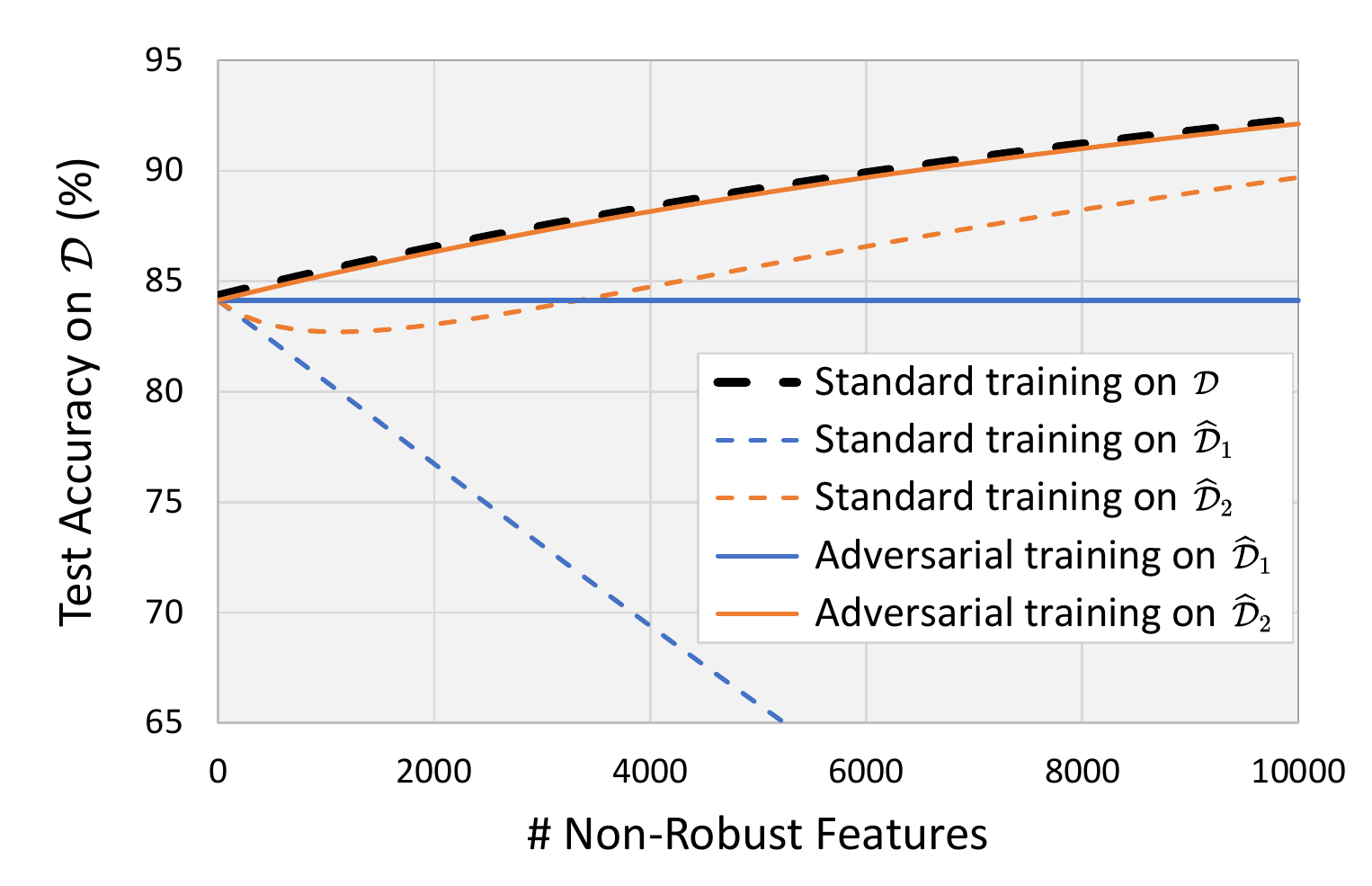}
	\vspace{-20px}
	\caption{
	The natural accuracy of five models trained on the mixture-Gaussian distributions as a function of the number of non-robust features.
	As a concrete example, here we set $\sigma=1$, $\eta=0.01$ and vary $d$.
	}
	\label{fig.gaussian_vary_dim}
    \vspace{-15px}
\end{wrapfigure}

\textbf{Adversarial training matters.}
Adversarial training with proper $\epsilon$ will mitigate the reliance on non-robust features. For $\widehat{\mathcal{D}}_1$ the internal mechanism is similar to the case in~\citet{tsipras2018robustness}, while for $\widehat{\mathcal{D}}_2$ the mechanism is different, and there was no such analysis before. Specifically, the optimal linear $\ell_{\infty}$ robust classifier (i.e., minimization of the adversarial risk with 0-1 loss) for $\widehat{\mathcal{D}}_1$ will rely solely on the robust feature. In contrast, the optimal robust classifier for $\widehat{\mathcal{D}}_2$ will rely on both robust and non-robust features, but the excessive reliance on non-robust features is mitigated. Hence, adversarial training matters in both cases and achieves better natural accuracy when compared with standard training. We make this formal in the following theorem.

\begin{theorem}
\label{thm.2}
Let $f_{\widehat{\mathcal{D}}_1, \mathsf{rob}}$ and $f_{\widehat{\mathcal{D}}_2, \mathsf{rob}}$ be the optimal linear $\ell_{\infty}$ robust classifiers for the delusive distributions $\widehat{\mathcal{D}}_1$ and $\widehat{\mathcal{D}}_2$, defined in Eqs.~(\ref{equa.mixGau.del1}) and~(\ref{equa.mixGau.del2}), respectively. For any $0 < \eta < 1/3$, we have
\begin{equation*}
    \mathcal{R}_{\mathsf{nat}}(f_{\widehat{\mathcal{D}}_1}, \mathcal{D})
    >
    \mathcal{R}_{\mathsf{nat}}(f_{\widehat{\mathcal{D}}_1, \mathsf{rob}}, \mathcal{D})
    \quad \text{and} \quad
    \mathcal{R}_{\mathsf{nat}}(f_{\widehat{\mathcal{D}}_2}, \mathcal{D})
    >
    \mathcal{R}_{\mathsf{nat}}(f_{\widehat{\mathcal{D}}_2, \mathsf{rob}}, \mathcal{D}).
\end{equation*}
\end{theorem}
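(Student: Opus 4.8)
The plan is to carry everything out with homogeneous linear classifiers $\mathrm{sign}(\boldsymbol{w}^{\top}\boldsymbol{x})$ (a bias term is unnecessary by the origin-symmetry of the two class-conditional Gaussians) and to reduce the statement to two elementary closed forms. First I would observe that, under any distribution with $\boldsymbol{x}\mid y\sim\mathcal{N}(y\boldsymbol{\nu},\sigma^{2}\boldsymbol{I})$, the signed margin $y\,\boldsymbol{w}^{\top}\boldsymbol{x}$ is $\mathcal{N}(\boldsymbol{w}^{\top}\boldsymbol{\nu},\sigma^{2}\|\boldsymbol{w}\|_{2}^{2})$, and that the worst-case $\linf$ perturbation of radius $\epsilon$ shifts this margin by exactly $-\epsilon\|\boldsymbol{w}\|_{1}$, so the adversarial risk of $\mathrm{sign}(\boldsymbol{w}^{\top}\cdot)$ equals $\Phi\!\big((\epsilon\|\boldsymbol{w}\|_{1}-\boldsymbol{w}^{\top}\boldsymbol{\nu})/(\sigma\|\boldsymbol{w}\|_{2})\big)$; taking $\epsilon=0$, $\boldsymbol{\nu}=\boldsymbol{\mu}$ gives $\mathcal{R}_{\mathsf{nat}}(\boldsymbol{w},\mathcal{D})=\Phi\!\big(-\boldsymbol{w}^{\top}\boldsymbol{\mu}/(\sigma\|\boldsymbol{w}\|_{2})\big)$. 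Since $\Phi$ is increasing and $\sigma$ is a common factor, comparing natural risks amounts to comparing the signal-to-noise ratios $\rho(\boldsymbol{w})=\boldsymbol{w}^{\top}\boldsymbol{\mu}/\|\boldsymbol{w}\|_{2}$.

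Next I would compute the two robust classifiers. By permutation symmetry of the $d$ non-robust coordinates, and because flipping a sign of $\boldsymbol{w}$ toward $\mathrm{sign}(\widehat{\boldsymbol{\mu}}_{i})$ raises $\boldsymbol{w}^{\top}\widehat{\boldsymbol{\mu}}_{i}$ while leaving $\|\boldsymbol{w}\|_{1}$ and $\|\boldsymbol{w}\|_{2}$ unchanged, it suffices to optimize over $\boldsymbol{w}=(w_{0},\kappa_{i}w_{1},\dots,\kappa_{i}w_{1})$ with $w_{0},w_{1}\ge0$, $\kappa_{1}=-1$, $\kappa_{2}=+1$. Plugging in $\epsilon=2\eta$, maximizing $(\boldsymbol{w}^{\top}\widehat{\boldsymbol{\mu}}_{i}-2\eta\|\boldsymbol{w}\|_{1})/\|\boldsymbol{w}\|_{2}$ reduces to maximizing $\big((1-2\eta)w_{0}+\kappa_{i}\,d\eta\,w_{1}\big)/\sqrt{w_{0}^{2}+dw_{1}^{2}}$. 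For $\widehat{\mathcal{D}}_{1}$ the $w_{1}$-coefficient is $\le0$ while $1-2\eta>0$, forcing $w_{1}=0$, so $f_{\widehat{\mathcal{D}}_{1},\mathsf{rob}}=\mathrm{sign}(x_{0})$ uses only the robust feature. For $\widehat{\mathcal{D}}_{2}$ both coefficients are positive; setting $v=\sqrt{d}\,w_{1}$ and invoking Cauchy--Schwarz gives the optimal direction $\boldsymbol{w}_{\mathsf{rob},2}\propto(1-2\eta,\eta,\dots,\eta)$, a nonzero but attenuated non-robust component.

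It then remains to compare $\rho(\boldsymbol{w}_{\mathsf{rob},i})$ with $\rho(\widehat{\boldsymbol{\mu}}_{i})$, using the Bayes form $f_{\widehat{\mathcal{D}}_{i}}=\mathrm{sign}(\widehat{\boldsymbol{\mu}}_{i}^{\top}\cdot)$ noted above. Write $t=d\eta^{2}>0$. For $i=1$: $\rho(\boldsymbol{e}_{0})=1$ versus $\rho(\widehat{\boldsymbol{\mu}}_{1})=(1-t)/\sqrt{1+t}$, and $1>(1-t)/\sqrt{1+t}$ for every $t>0$ (trivial for $t\ge1$; for $t<1$ square to get $3t>t^{2}$). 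For $i=2$, with $s=1-2\eta$, the claim $\rho(\boldsymbol{w}_{\mathsf{rob},2})>\rho(\widehat{\boldsymbol{\mu}}_{2})$ reads $(s+t)^{2}/(s^{2}+t)>(1+3t)^{2}/(1+9t)$, and clearing denominators the difference of the two sides factors as $t(3s-1)\big[(s+1)+t(5-3s)\big]$; since $0<\eta<1/3$ gives $s\in(1/3,1)$, all three factors $3s-1$, $s+1$, $5-3s$ are positive, so the difference is strictly positive — and this is exactly the place where the hypothesis $\eta<1/3$ is used.

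The main obstacle is the second step: pinning down $\boldsymbol{w}_{\mathsf{rob},2}$ and making the symmetry-and-sign reductions rigorous, since (unlike the $\widehat{\mathcal{D}}_{1}$ case, which mirrors \citet{tsipras2018robustness}) the optimal robust classifier for $\widehat{\mathcal{D}}_{2}$ does not discard the non-robust features but merely down-weights them. Once that is in hand, the third step is routine algebra whose only subtlety is spotting the factorization above and recognizing that $\eta<1/3$ is precisely what makes its leading factor positive.
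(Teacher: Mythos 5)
Your proposal is correct and lands on exactly the same optimal robust classifiers as the paper, namely $\widehat{\boldsymbol{\mu}}_{1,\mathsf{rob}}=(1,0,\dots,0)$ and $\widehat{\boldsymbol{\mu}}_{2,\mathsf{rob}}=(1-2\eta,\eta,\dots,\eta)$, and on the same final comparison of Gaussian tail arguments; but the machinery you use to get there is genuinely different. The paper first proves two sign lemmas by contradiction (a coordinate-wise perturbation argument showing the optimal weights vanish on the non-robust coordinates for $\widehat{\mathcal{D}}_1$ and are strictly positive everywhere for $\widehat{\mathcal{D}}_2$), then resolves the inner maximization coordinate-by-coordinate to show that the adversarial risk on $\widehat{\mathcal{D}}_i$ equals the natural risk on a shifted mixture-Gaussian $\widehat{\mathcal{D}}_i^{*}$ with mean $(1-\epsilon,0,\dots,0)$ resp.\ $(1-2\eta,\eta,\dots,\eta)$, whose Bayes classifier is then read off. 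You instead write down the closed form $\mathcal{R}_{\mathsf{adv}}(\boldsymbol{w})=\Phi\big((\epsilon\|\boldsymbol{w}\|_{1}-\boldsymbol{w}^{\top}\boldsymbol{\nu})/(\sigma\|\boldsymbol{w}\|_{2})\big)$ for an arbitrary linear classifier, reduce to the sign-aligned symmetric orthant, and finish with Cauchy--Schwarz; this collapses both sign lemmas and the shifted-distribution trick into one optimization, at the cost of having to justify the symmetry reduction (equalizing the non-robust coordinates at fixed numerator and $\ell_2$-norm) a bit more carefully than you currently state it. Your final step is also a strict improvement in one respect: the paper merely asserts that $\tfrac{1+3d\eta^{2}}{\sqrt{1+9d\eta^{2}}}<\tfrac{1-2\eta+d\eta^{2}}{\sqrt{(1-2\eta)^{2}+d\eta^{2}}}$ holds for $0<\eta<1/3$, whereas your factorization $t(3s-1)\big[(s+1)+t(5-3s)\big]$ of the cross-multiplied difference (with $s=1-2\eta$, $t=d\eta^{2}$) actually proves it and makes transparent where the hypothesis $\eta<1/3$ enters; I checked the expansion and it is correct, and since both signal-to-noise ratios are positive, comparing their squares is legitimate.
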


The proof is provided in Appendix \ref{appendix.proofs.gaussian_case_2}. Theorem~\ref{thm.2} indicates that robust models achieve lower natural risk than standard models under delusive attacks.
This is also reflected in Figure \ref{fig.gaussian_vary_dim}: After adversarial training on $\widehat{\mathcal{D}}_1$, natural accuracy is largely recovered and keeps unchanged as $d$ increases.
While on $\widehat{\mathcal{D}}_2$, natural accuracy can be recovered better and keeps increasing as $d$ increases. Beyond the theoretical analyses for these simple cases, we also observe that the phenomena in Theorem \ref{thm.1} and Theorem \ref{thm.2} generalize well to our empirical experiments on real-world datasets in Section \ref{section.exp.understanding}.

\section{Practical Attacks for Testing Defense}
\label{section.attacks}

Here we briefly describe five heuristic attacks. A detailed description is deferred to Appendix~\ref{section.attacks.details}.
The five attacks along with the \texttt{L2C} attack proposed by~\citet{feng2019learning} will be used in next section for validating the destructiveness of delusive attacks and thus the necessity of adversarial training.

In practice, we focus on the empirical distribution $\mathcal{D}_n$ over $n$ data points drawn from $\mathcal{D}$.
Inspired by ``non-robust features suffice for classification'' \citep{ilyas2019adversarial}, we propose to construct delusive perturbations by injecting non-robust features correlated consistently with a specific label.
Given a standard model $f_{\mathcal{D}}$ trained on $\mathcal{D}_n$, the attacks perturb each input $\boldsymbol{x}$ (with label $y$) in $\mathcal{D}_n$ as follows:
\begin{itemize}
    \item \textbf{\texttt{P1}: Adversarial perturbations.} It adds a small adversarial perturbation to $\boldsymbol{x}$ to ensure that it is misclassified as a target $t$ by minimizing $\ell\left(f_{\mathcal{D}}(\boldsymbol{x}^{\prime}), t\right)$ such that $\boldsymbol{x}' \in \mathcal{B}(\boldsymbol{x}, \epsilon)$, where $t$ is chosen deterministially based on $y$.
    \item \textbf{\texttt{P2}: Hypocritical perturbations.} It adds a small helpful perturbation to $\boldsymbol{x}$ by minimizing $\ell\left(f_{\mathcal{D}}(\boldsymbol{x}^{\prime}), y\right)$ such that $\boldsymbol{x}' \in \mathcal{B}(\boldsymbol{x}, \epsilon)$, so that the standard model could easily produce a correct prediction.
    \item \textbf{\texttt{P3}: Universal adversarial perturbations.} This attack is a variant of \texttt{P1}. It adds the class-specific universal adversarial perturbation $\boldsymbol{\xi}_t$ to $\boldsymbol{x}$. All inputs with the same label $y$ are perturbed with the same perturbation $\boldsymbol{\xi}_t$, where $t$ is chosen deterministially based on $y$.
    \item \textbf{\texttt{P4}: Universal hypocritical perturbations.} This attack is a variant of \texttt{P2}. It adds the class-specific universal helpful perturbation $\boldsymbol{\xi}_y$ to $\boldsymbol{x}$. All inputs with the same label $y$ are perturbed with the same perturbation $\boldsymbol{\xi}_y$.
    \item \textbf{\texttt{P5}: Universal random perturbations.} This attack injects class-specific random perturbation $\boldsymbol{r}_y$ to each $\boldsymbol{x}$. All inputs with the label $y$ is perturbed with the same perturbation $\boldsymbol{r}_y$. Despite the simplicity of this attack, we find that it are surprisingly effective in some cases.
\end{itemize}

Figure~\ref{fig.uap} visualizes the universal perturbations for different datasets and threat models.
The perturbed inputs and their original labels constitute the perturbed datasets $\widehat{\mathcal{D}}_{\mathsf{P1}} \sim \widehat{\mathcal{D}}_{\mathsf{P5}}$.

\begin{figure}[h]
    \centering
    \vspace{-5px}
    \subfigure[CIFAR-10 ($\ell_2$)]{
      \centering
      \includegraphics[width=0.19\textwidth]{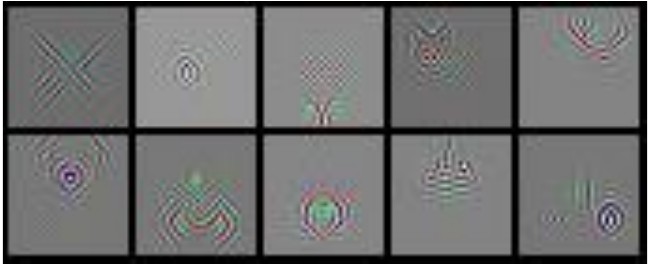}
      \label{fig.uap.cifar10.l2}
    }
    \subfigure[SVHN ($\ell_2$)]{
      \centering
      \includegraphics[width=0.19\textwidth]{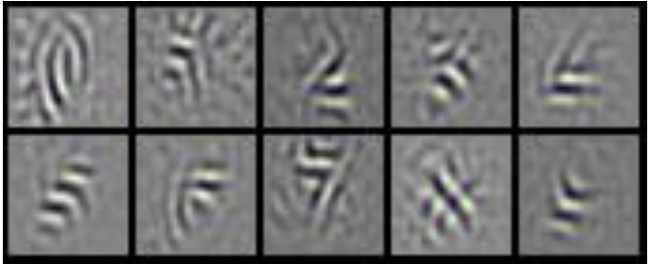}
      \label{fig.uap.svhn.l2}
    }
    \subfigure[CIFAR-10 ($\ell_{\infty}$)]{
      \centering
      \includegraphics[width=0.19\textwidth]{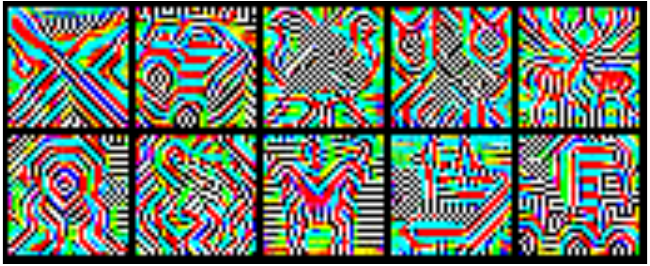}
      \label{fig.uap.cifar10.linf}
    }
    \subfigure[ImageNet ($\ell_{\infty}$)]{
      \centering
      \includegraphics[width=0.16\textwidth]{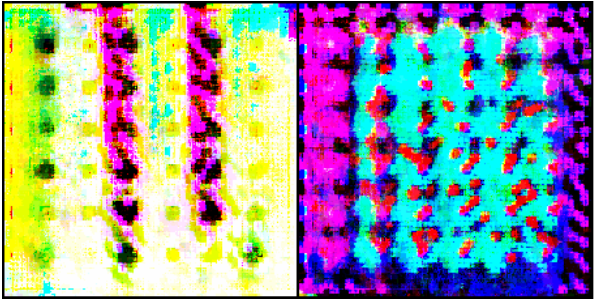}
      \label{fig.uap.imagenet.linf}
    }
    \subfigure[SSL ($\ell_2$)]{
      \centering
      \includegraphics[width=0.16\textwidth]{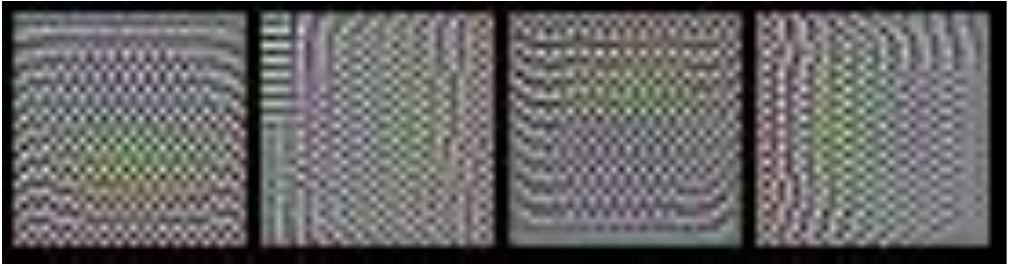}
      \label{fig.uap.ssl.l2}
    }
    \vspace{-5px}
    \caption{
    Universal perturbations for the \texttt{P3} and \texttt{P4} attacks across different datasets and threat models.
    Perturbations are rescaled to lie in the $[0, 1]$ range for display.
    The resulting inputs are nearly indistinguishable from the originals to a human observer (see Appendix \ref{appendix.figures} Figures \ref{fig.examples_l2}, \ref{fig.examples_linf}, and \ref{fig.examples_ssl_l2}).
    }
    \vspace{-5px}
  \label{fig.uap}
\end{figure}

\section{Experiments}
\label{section.experiments}

In order to demonstrate the effectiveness and versatility of the proposed defense, we conduct experiments on CIFAR-10 \cite{krizhevsky2009learning}, SVHN \cite{netzer2011reading}, a subset of ImageNet \cite{russakovsky2015imagenet}, and MNIST-CIFAR \cite{shah2020pitfalls} datasets. More details on experimental setup are provided in Appendix \ref{appendix.exps}. Our code is available at \url{https://github.com/TLMichael/Delusive-Adversary}.

Firstly, we perform a set of experiments on supervised learning to provide a comprehensive understanding of delusive attacks (Section \ref{section.exp.understanding}). Secondly, we demonstrate that the delusive attacks can also obstruct rotation-based self-supervised learning (SSL) \cite{gidaris2018unsupervised} and adversarial training also helps a lot in this case (Section \ref{section.exp.ssl}). Finally, we show that adversarial training is a promising method to overcome the simplicity bias on the MNIST-CIFAR dataset \cite{shah2020pitfalls} if the $\epsilon$-ball is chosen properly (Section \ref{section.exp.sb}).

\subsection{Understanding Delusive Attacks}
\label{section.exp.understanding}

Here, we investigate delusive attacks from six different perspectives: \textit{i)} baseline results on CIFAR-10, \textit{ii)} transferability of delusive perturbations to various architectures, \textit{iii)} performance changes of various defender's budgets, \textit{iv)} a simple countermeasure, \textit{v)} comparison with~\citet{feng2019learning}, and \textit{vi)} performance of other adversarial training variants.

\textbf{Baseline results.}
We consider the typical $\ell_2$ threat model with $\epsilon=0.5$ for CIFAR-10 by following~\citep{ilyas2019adversarial}. We use the attacks described in Section \ref{section.attacks} to generate the delusive perturbations. To execute the attacks \texttt{P1} $\sim$ \texttt{P4}, we pre-train a VGG-16 \cite{simonyan2014very} as the standard model $f_{\mathcal{D}}$ using standard training on the original training set. We then perform standard training and adversarial training on the delusive datasets $\widehat{\mathcal{D}}_{\mathsf{P1}} \sim \widehat{\mathcal{D}}_{\mathsf{P5}}$. Standard data augmentation (i.e., cropping, mirroring) is adopted. The natural accuracy of the models is evaluated on the clean test set of CIFAR-10.

\begin{wrapfigure}{r}{0.48\linewidth}
	\centering
	\vspace{-13px}
	\includegraphics[width=\linewidth]{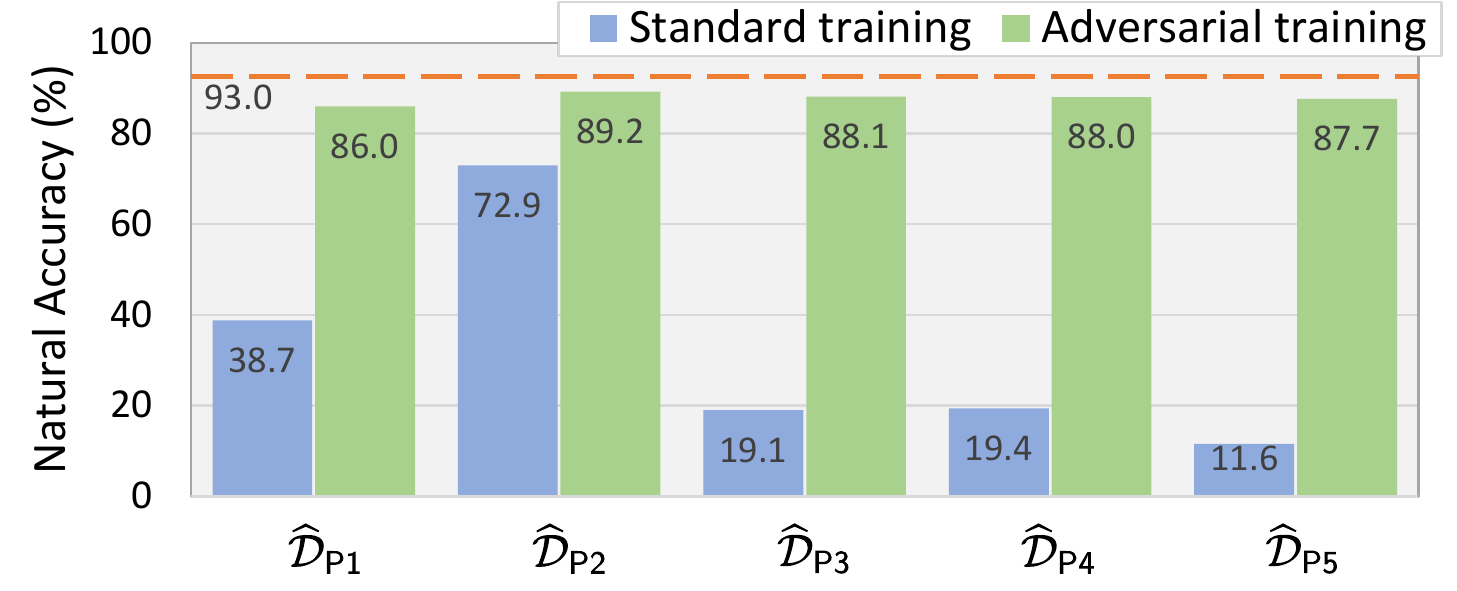}
	\vspace{-15px}
	\caption{
	Natural accuracy on CIFAR-10 using VGG-16 under $\ell_{2}$ threat model. The horizontal line indicates the natural accuracy of a standard model trained on the clean training set.
	}
	\label{fig.baseline-cifar_l2_vgg16}
    \vspace{-8px}
\end{wrapfigure}

Results are summarized in Figure \ref{fig.baseline-cifar_l2_vgg16}. We observe that the natural accuracy of the standard models dramatically decreases when training on the delusive datasets, especially on $\widehat{\mathcal{D}}_{\mathsf{P3}}$, $\widehat{\mathcal{D}}_{\mathsf{P4}}$ and $\widehat{\mathcal{D}}_{\mathsf{P5}}$. The most striking observation to emerge from the results is the effectiveness of the \texttt{P5} attack. It seems that the naturally trained model seems to rely exclusively on the small random patterns in this case, even though there are still abundant natural features in $\widehat{\mathcal{D}}_{\mathsf{P5}}$. Such behaviors resemble the conjunction learner\footnote{The conjunction learner first identifies a subset of features that appears in every examples of a class, then classifies an example as the class if and only if it contains such features~\cite{newsome2006paragraph}.} studied in the pioneering work \cite{newsome2006paragraph}, where they showed that such a learner is highly vulnerable to delusive attacks. Also, we point out that such behaviors could be attributed to the gradient starvation \cite{pezeshki2020gradient} and simplicity bias \cite{shah2020pitfalls} phenomena of neural networks. These recent studies both show that neural networks trained by SGD preferentially capture a subset of features relevant for the task, despite the presence of other predictive features that fail to be discovered \cite{hermann2020shapes}.

Anyway, our results demonstrate that adversarial training can successfully eliminate the delusive features within the $\epsilon$-ball. As shown in Figure~\ref{fig.baseline-cifar_l2_vgg16}, natural accuracy can be significantly improved by adversarial training in all cases. Besides, we observe that \texttt{P1} is more destructive than \texttt{P2}, which is consistent with our theoretical analysis of the hypothetical settings in Section~\ref{section.motivating_example}.

\textbf{Evaluation of transferability.}
A more realistic setting is to attack different classifiers using the same delusive perturbations. We consider various architectures including VGG-19, ResNet-18, ResNet-50, and DenseNet-121 as victim classifiers. The delusive datasets are the same as in the baseline experiments. Results are deferred to Figure \ref{fig.transferability} in Appendix \ref{appendix.figures}. We observe that the attacks have good transferability across the architectures, and again, adversarial training can substantially improve natural accuracy in all cases. One exception is that the \texttt{P5} attack is invalid for DenseNet-121. A possible explanation for this might be that the simplicity bias of DenseNet-121 on random patterns is minor. This means that different architectures may have distinct simplicity biases. Due to space constraints, a detailed investigation is out of the scope of this work.

\textbf{What if the threat model is not specified?}
Our theoretical analysis in Section \ref{section.general_case} highlights the importance of choosing a proper budget $\epsilon$ for AT. Here, we try to explore this situation where the threat model is not specified by varying the defender's budget. Results on CIFAR-10 using ResNet-18 under $\ell_2$ threat model are summarized in Figure \ref{fig.varyeps-cifar_l2}. We observe that a budget that is too large may slightly hurt performance, while a budget that is too small is not enough to mitigate the attacks. Empirically, the optimal budget for \texttt{P3} and \texttt{P4} is about $0.4$ and for \texttt{P1} and \texttt{P2} it is about $0.25$. \texttt{P5} is the easiest to defend---AT with a small budget (about $0.1$) can significantly mitigate its effect.

\textbf{A simple countermeasure.}
In addition to adversarial training, a simple countermeasure is adding clean data to the training set. This will neutralize the perturbed data and make it closer to the original distribution. We explore this countermeasure on SVHN since extensive extra training examples are available in that dataset. Results are summarized in Figure \ref{fig.moredata-svhn_l2}. We observe that the performance of standard training is improved with the increase of the number of additional clean examples, and the performance of adversarial training can also be improved with more data. Overall, it is recommend that combining this simple countermeasure with adversarial training to further improve natural accuracy. Besides the focus on natural accuracy in this work, another interesting measure is the model accuracy on adversarial examples. It turns out that adversarial accuracy of the models can also be improved with more data. We also observe that different delusive attacks have different effects on the adversarial accuracy. A further study with more focus on adversarial accuracy is therefore suggested.

\begin{figure}[t]
    \centering
    \begin{minipage}[b]{0.31\linewidth}
        \centering
        \includegraphics[width=1\textwidth]{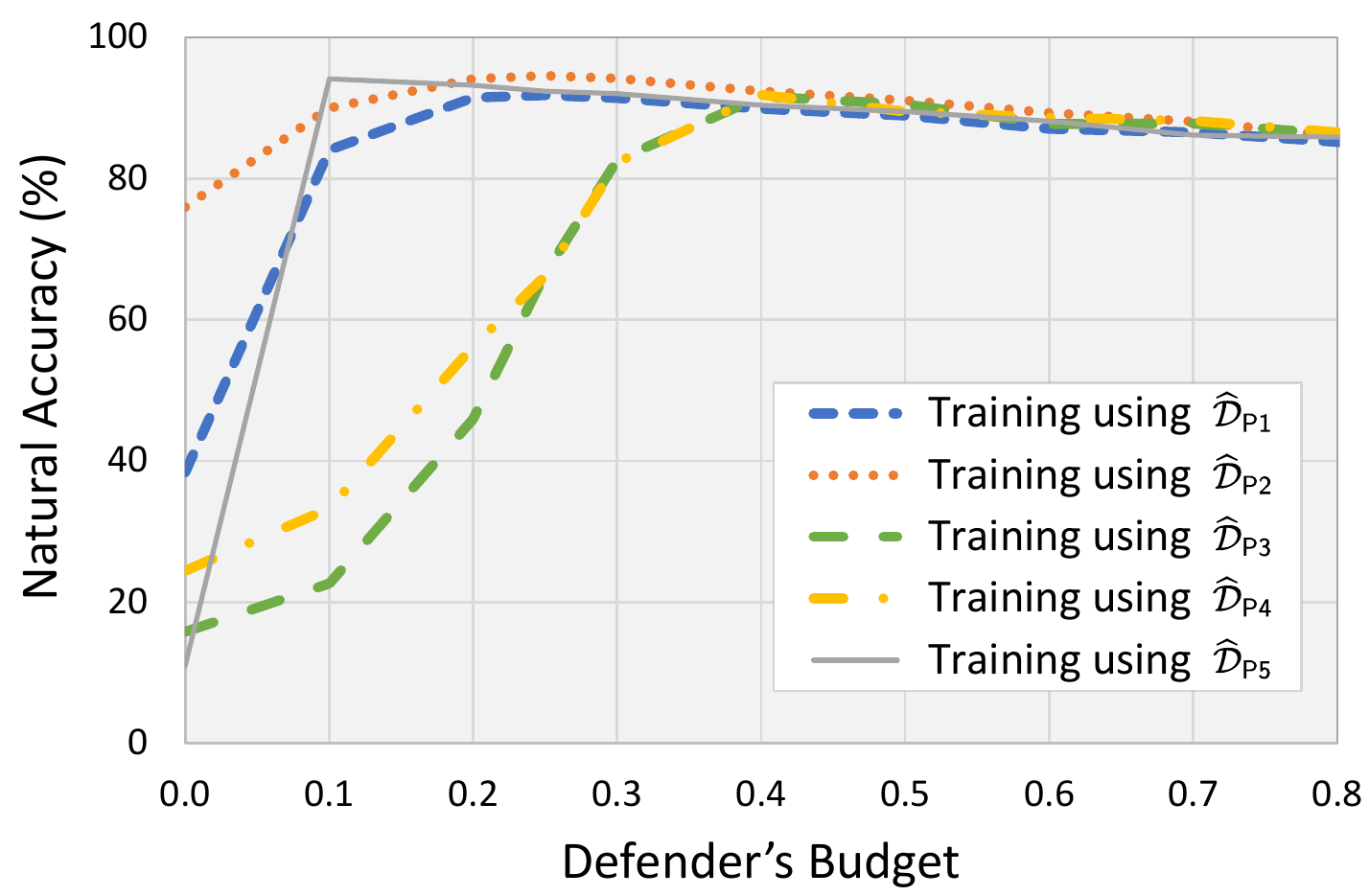}
        \caption{Natural accuracy as a function of the defender's budget on CIFAR-10.}
        \label{fig.varyeps-cifar_l2}
        \vspace{-5px}
    \end{minipage}
    \hfill
    \begin{minipage}[b]{0.66\linewidth}
        \centering
        \includegraphics[width=0.47\textwidth]{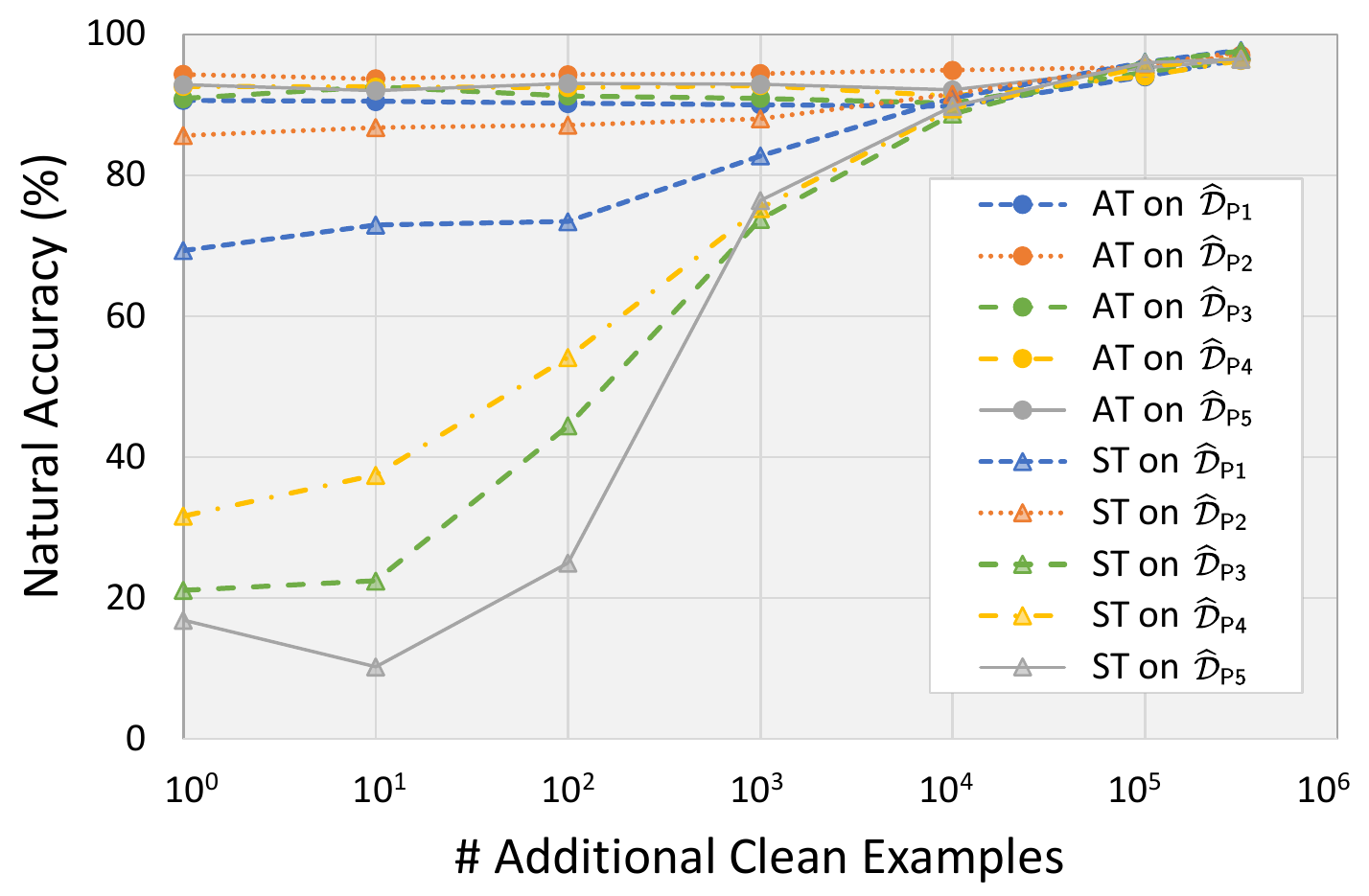}
        \quad
        \includegraphics[width=0.47\textwidth]{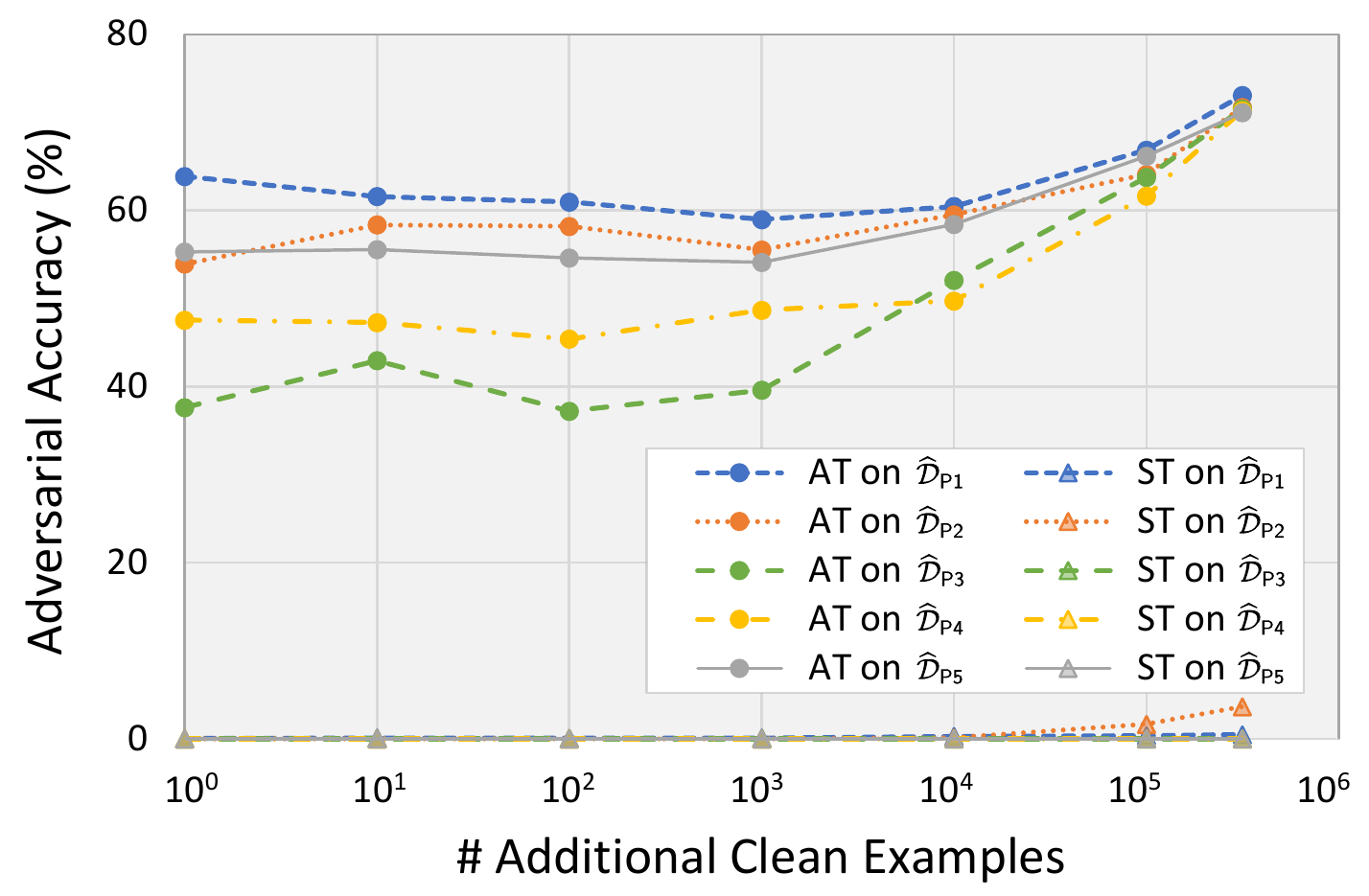}
        \caption{Natural accuracy (left) and adversarial accuracy (right) as a function of the number of additional clean examples on SVHN using ResNet-18 under $\ell_2$ threat model.}
        \label{fig.moredata-svhn_l2}
        \vspace{-5px}
    \end{minipage}
\end{figure}

\begin{figure}[t]
   \vspace{-3px}
   \centering
   \subfigure[CIFAR-10]{
      \centering
      \includegraphics[width=0.45\textwidth]{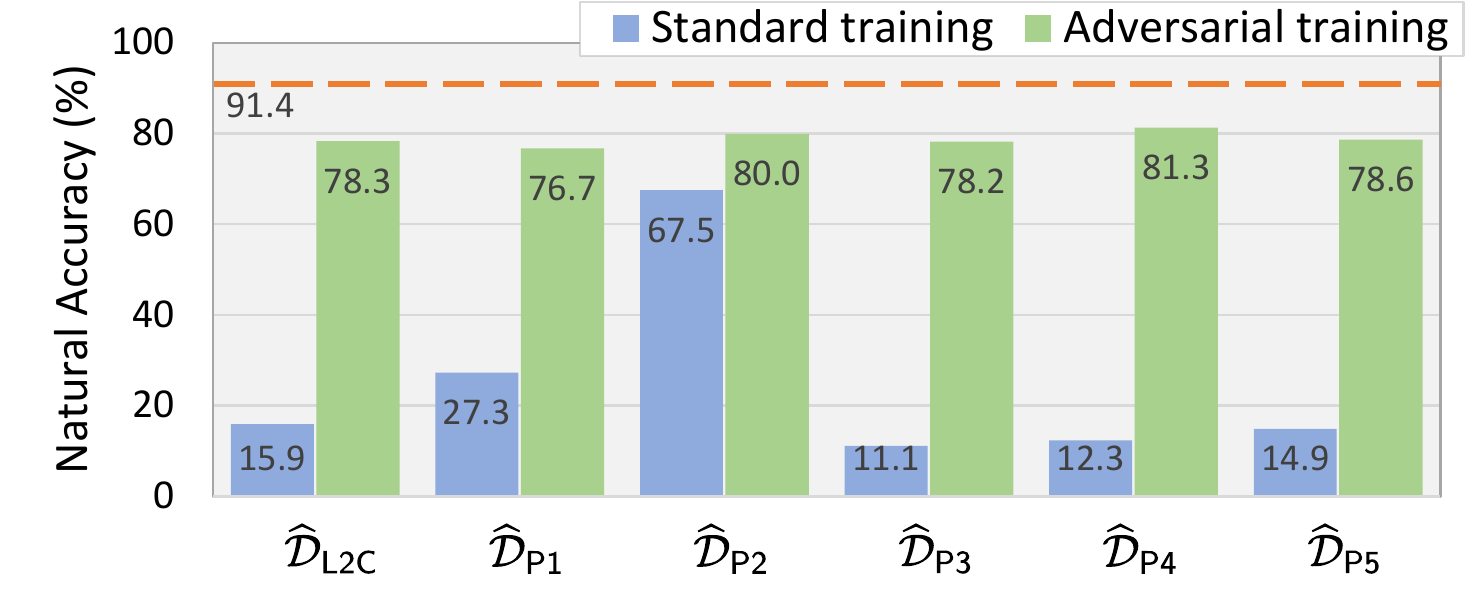}
      \label{fig.l2c-cifar_linf_vgg11}
   }
   \quad
   \subfigure[Two-class ImageNet]{
      \centering
      \includegraphics[width=0.45\textwidth]{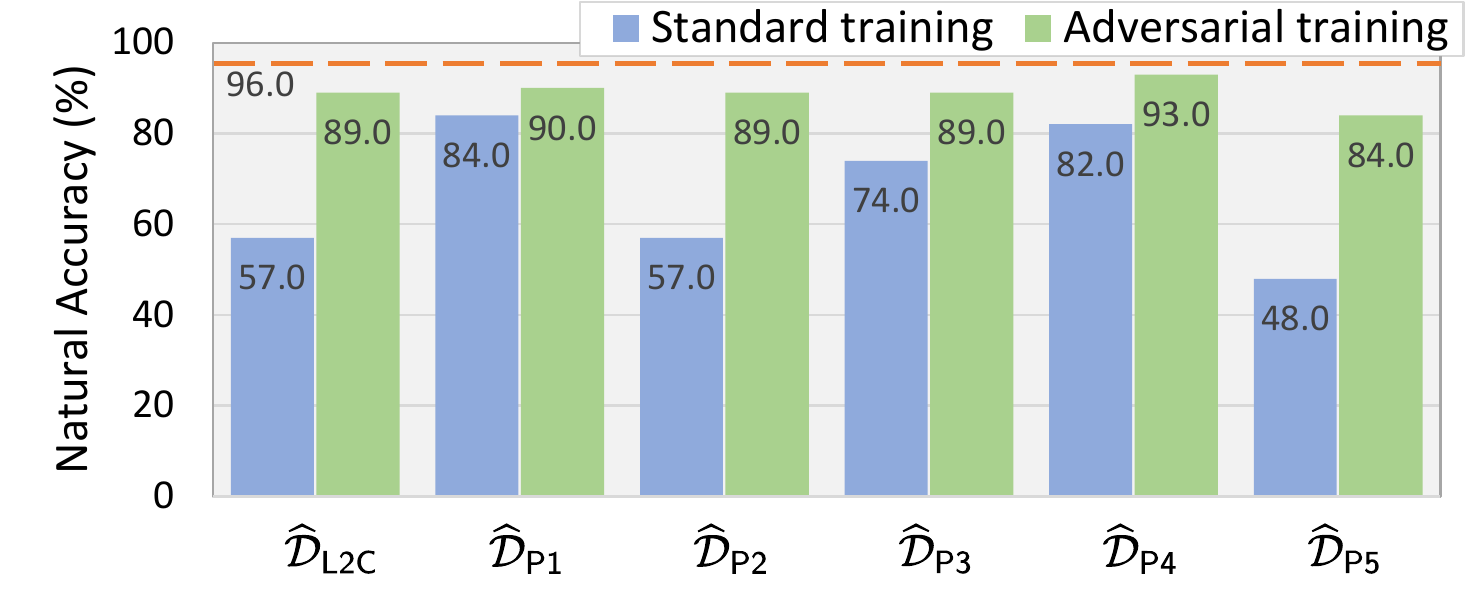}
      \label{fig.l2c-imagenet_linf_resnet18}
   }
   \vspace{-5px}
   \caption{ Natural accuracy on CIFAR-10 using VGG-11 (left) and two-class ImageNet using ResNet-18 (right) under $\linf$ threat model.
   The horizontal orange line indicates the natural accuracy of a standard model trained on the clean training set.
   }
   \label{fig.l2c}
   \vspace{-10px}
\end{figure}

\begin{wraptable}{r}{0.44\linewidth}
\centering
\vspace{-17px}
\caption{Comparison of time cost. The \texttt{L2C} attack needs to train an autoencoder to generate perturbations. The \texttt{P1} $\sim$ \texttt{P4} attacks need to train a standard classifier to generate perturbations, and \texttt{P5} needs not.}
\vspace{3px}
\begin{small}
\begin{tabular}{lrrr}
\toprule
\multirow{2}{*}{Method} & \multicolumn{3}{c}{Time Cost (min)} \\
 & Training & Generating & Total \\
\midrule
\texttt{L2C} & 7411.5 & 0.4 & 7411.9 \\
\texttt{P1} / \texttt{P2} & 25.9 & 12.6 & 38.5 \\
\texttt{P3} / \texttt{P4} & 25.9 & 4.6 & 30.5 \\
\texttt{P5} & 0.0 & 0.1 & 0.1 \\
\bottomrule
\end{tabular}
\end{small}
\vspace{-8px}
\label{tab.time_cost}
\end{wraptable}

\textbf{Comparison with L2C.}
We compare the heuristic attacks with the \texttt{L2C} attack proposed by~\citet{feng2019learning} and, show that adversarial training can mitigate all these attacks. Following their settings on CIFAR-10 and a two-class ImageNet, the $\ell_{\infty}$-norm bounded threat models with $\epsilon=0.032$ and $\epsilon=0.1$ are considered. The victim classifier is VGG-11 and ResNet-18 for CIFAR-10 and the two-class ImageNet, respectively. 
Table \ref{tab.time_cost} reports the time cost for executing six attack methods on CIFAR-10. We find that the heuristic attacks are significantly faster than \texttt{L2C}, since the bi-level optimization process in \texttt{L2C} is extremely time-consuming. Figure \ref{fig.l2c} shows the performance of standard training and adversarial training on delusive datasets. The results indicate that most of the heuristic attacks are comparable with \texttt{L2C}, and AT can improve natural accuracy in all cases. 

\textbf{Performance of adversarial training variants.}
It is noteworthy that AT variants are also effective in our setting, since they aim to tackle the adversarial risk. To support this, we consider instance-dependent-based variants (such as MART~\citep{wang2019improving}, GAIRAT~\citep{zhang2020geometry}, and MAIL~\citep{wang2021probabilistic}) and curriculum-based variants (such as CAT~\citep{cai2018curriculum}, DAT~\citep{wang2019convergence}, FAT~\citep{zhang2020attacks}). Specifically, we chose to experiment with the currently most effective variants among them (i.e., GAIRAT and FAT, according to the latest leaderboard at RobustBench~\cite{croce2021robustbench}). Additionally, we consider random noise training (denoted as RandNoise) using the uniform noise within the $\epsilon$-ball for comparison. We also report the results of standard training (denoted as ST) and the conventional PGD-based AT~\citep{madry2018towards} (denoted as PGD-AT) for reference. The results are summarized in Table~\ref{tab.at-variants}. We observe that the performance of random noise training is marginal. In contrast, all AT methods show significant improvements, thanks to the theoretical analysis provided by Theorem~\ref{thm.3}. Besides, we observe that FAT achieves overall better results than other AT variants. This may be due to the tight upper bound of the adversarial risk pursued by FAT. In summary, these results successfully validate the effectiveness of AT variants.

\begin{table}[t]
\centering
\caption{Natural accuracy on CIFAR-10 using ResNet-18 under $\linf$ threat model with $\epsilon=8/255$. The column of ``\texttt{Clean}'' denotes the natural accuracy of the models trained on the clean training set.}
\label{tab.at-variants}
\begin{small}
\begin{tabular}{@{}lccccccc@{}}
\toprule
Method   & \texttt{Clean} & \texttt{L2C} & \texttt{P1} & \texttt{P2} & \texttt{P3} & \texttt{P4} & \texttt{P5} \\ \midrule
ST        & \textbf{94.62} & 15.76 & 15.70 & 61.35 & 9.40           & 13.58 & 10.12 \\
RandNoise & 94.26          & 17.10 & 17.32 & 63.36 & 10.52          & 14.37 & 27.56 \\
PGD-AT    & 85.18          & 82.84 & 84.18 & 86.74 & \textbf{86.37} & 83.18 & 84.57 \\
GAIRAT    & 81.90          & 79.96 & 79.61 & 82.68 & 82.05          & 82.81 & 82.28 \\
FAT & 87.43 & \textbf{85.51} & \textbf{86.05} & \textbf{88.98} & 84.39 & \textbf{84.22} & \textbf{87.78} \\ \bottomrule
\end{tabular}
\end{small}
\end{table}

\begin{table}[t]
\centering
\caption{Adversarial training on MNIST-CIFAR: The table reports test accuracy on the MNIST-CIFAR test set and the MNIST-randomized test set. Our customized AT successfully overcomes SB, while others not. The MNIST-randomized accuracy indicates that our adversarially trained models achieve nontrivial performance when there are only CIFAR features exist in the inputs.}
\label{tab.sb}
\begin{small}
\begin{tabular}{l|ccc|ccc}
\toprule
\multirow{2}{*}{Model} & \multicolumn{3}{c|}{Test Accuracy on MNIST-CIFAR} & \multicolumn{3}{c}{MNIST-Randomized Accuracy} \\ 
 &\ \ \ \ \  ST & AT \cite{shah2020pitfalls} & AT (ours) &\ \ \ \ \  ST & AT \cite{shah2020pitfalls} & AT (ours) \\
\midrule
VGG-16 &\ \ \ \ \  99.9 & 100.0 & 91.3 &\ \ \ \ \  49.1 & 51.6 & \textbf{91.2} \\
ResNet-50 &\ \ \ \ \  100.0 & 99.9 & 89.7 &\ \ \ \ \  48.9 & 49.2 & \textbf{88.6} \\
DenseNet-121\ \ \ \ \  & \ \ \ \ \ 100.0 & 100.0 & 91.5 &\ \ \ \ \  48.8 & 49.2 & \textbf{90.8} \\
\bottomrule
\end{tabular}
\end{small}
\vspace{-5px}
\end{table}

\subsection{Evaluation on Rotation-based Self-supervised Learning}
\label{section.exp.ssl}

To further show the versatility of the attacks and defense, we conduct experiments on rotation-based self-supervised learning (SSL) \cite{gidaris2018unsupervised}, a process that learns representations by predicting rotation angles ($0^{\circ}$, $90^{\circ}$, $180^{\circ}$, $270^{\circ}$). SSL methods seem to be inherently resist to the poisoning attacks that require mislabeling, since they do not use human-annotated labels to learn representations. Here, we examine whether SSL can resist the delusive attacks. We use delusive attacks to perturb the training data for the pretext task. To evaluate the quality of the learned representations, the downstream task is trained on the clean data using logistic regression. Results are deferred to Figure \ref{fig.ssl-cifar_l2} in Appendix~\ref{appendix.figures}.

We observe that the learning of the pretext task can be largely hijacked by the attacks. Thus the learned representations are poor for the downstream task. Again, adversarial training can significantly improve natural accuracy in all cases. An interesting observation from Figure~\ref{fig.ssl-cifar_l2_downstream} is that the quality of the adversarially learned representations is slightly better than that of standard models trained on the original training set. This is consistent with recent hypotheses stating that robust models may transfer better~\citep{salman2020adversarially, utrera2021adversariallytrained, pmlr-v139-liang21b, deng2021adversarial, anonymous2022adversarially}. These results show the possibility of delusive attacks and defenses for SSL, and suggest that studying the robustness of other SSL methods~\citep{jing2020self, chen2020simple} against data poisoning is a promising direction for future research.

\subsection{Overcoming Simplicity Bias}
\label{section.exp.sb}

A recent work by~\citet{shah2020pitfalls} proposed the MNIST-CIFAR dataset to demonstrate the simplicity bias (SB) of using standard training to learn neural networks. Specifically, the MNIST-CIFAR images $\boldsymbol{x}$ are vertically concatenations of the ``simple'' MNIST images $\boldsymbol{x}_m$ and the more complex CIFAR-10 images $\boldsymbol{x}_c$ (i.e., $\boldsymbol{x} = [\boldsymbol{x}_m; \boldsymbol{x}_c]$). They found that standard models trained on MNIST-CIFAR will exclusively rely on the MNIST features and remain invariant to the CIFAR features. Thus randomizing the MNIST features drops the model accuracy to random guessing.

From the perspective of delusive adversaries, we can regard the MNIST-CIFAR dataset as a delusive version of the original CIFAR dataset. Thus, AT should mitigate the delusive perturbations, as Theorem \ref{thm.3} pointed out. However, \citet{shah2020pitfalls} tried AT on MNIST-CIFAR yet failed. Contrary to their results, here we demonstrate that AT is actually workable. The key factor is the choice of the threat model. They failed because they chose an improper ball $\mathcal{B}(\boldsymbol{x}, \epsilon) = \{\boldsymbol{x}^{\prime} \in \mathcal{X}: \| \boldsymbol{x} - \boldsymbol{x}^{\prime} \|_{\infty} \leq 0.3 \}$, while we set $\mathcal{B}(\boldsymbol{x}, \epsilon) = \{\boldsymbol{x}^{\prime} \in \mathcal{X}: \| \boldsymbol{x}_m - \boldsymbol{x}_m^{\prime} \|_{\infty} + \infty \cdot \| \boldsymbol{x}_c - \boldsymbol{x}_c^{\prime} \|_{\infty} \leq 1 \}$. Our choice forces the space of MNIST features to be a non-robust region during AT, and prohibits the CIFAR features from being perturbed. Results are summarized in Table \ref{tab.sb}. We observe that our choice leads to models that do not rely on the simple MNIST features, thus AT can eliminate the simplicity bias.

\section{Related Work}
\label{app:related-work}

\textbf{Data poisoning.}
The main focus of this paper is the threat of delusive attacks~\citep{newsome2006paragraph}, which belongs to data poisoning attacks~\citep{barreno2010security, biggio2018wild, goldblum2020dataset}. Generally speaking, data poisoning attacks manipulate the training data to cause a model to fail during inference. Both \textit{integrity} and \textit{availability} attacks were extensively studied for classical models~\citep{barreno2006can, globerson2006nightmare, newsome2006paragraph, nelson2008exploiting, biggio2011support, biggio2012poisoning, mei2015using, xiao2015feature, zhao2017efficient}. For neural networks, most of the existing works focused on targeted misclassification~\citep{koh2017understanding, shafahi2018poison, zhu2019transferable, aghakhani2020bullseye, huang2020metapoison, geiping2020witches, shan2020fawkes, cherepanova2021lowkey, radiya2021data} and \textit{backdoor} attacks~\citep{gu2019badnets, chen2017targeted, Trojannn, turner2019label, liu2020reflection, nguyen2020input, pang2020tale, nguyen2021wanet}, while there was little work on availability attacks~\citep{schwarzschild2021just, munoz2017towards, feng2019learning, shen2019tensorclog}. Recently, \citet{feng2019learning} showed that availability attacks are feasible for deep networks. This paper follows their setting where the perturbed training data is correctly labeled. We further point out that their studied threat is exactly the \textit{delusive adversary} (a.k.a. clean-label availability attacks), which was previously considered for classical models~\citep{newsome2006paragraph}. Besides, other novel directions of data poisoning are rapidly evolving such as semi-supervised learning~\citep{liu2019unified, franci2020effective, 274598}, contrastive learning~\citep{carlini2021poisoning, saha2021backdoor}, domain adaptation~\citep{mehra2021effectiveness}, and online learning~\citep{pang2021accumulative}, etc.

\textbf{Existing defenses.}
There were many defense strategies proposed for defending against targeted attacks and backdoor attacks, including detection-based defenses~\citep{steinhardt2017certified, tran2018spectral, chen2018detecting, diakonikolas2019sever, gao2019strip, peri2020deep, pmlr-v139-hayase21a, dong2021black}, randomized smoothing~\citep{rosenfeld2020certified, weber2020rab}, differential privacy~\citep{ma2019data, hong2020effectiveness}, robust training~\citep{borgnia2021strong, levine2021deep, li2021anti, geiping2021doesn}, and model repairing~\citep{Chen2021REFIT, li2021neural, wu2021Adversarial}, while some of them may be overwhelmed by adaptive attacks~\citep{koh2018stronger, veldanda2020evaluating, shokri2020bypassing}.
Robust learnability under data poisoning attacks can be analyzed from theoretical aspects~\citep{pmlr-v134-blum21a, pmlr-v139-wang21r, gao2021learning}. Similarly, our proposed defense is principled and theoretically justified. More importantly, previous defenses mostly focus on defending against integrity attacks, while none of them are specially designed to resist delusive attacks. The work most similar to ours is that of~\citet{farokhi2020regularization}. They only handle the \textit{linear regression} model by relaxing distributionally robust optimization (DRO) as a regularizer, while we can tackle delusive attacks for \textit{any} classifier.

\textbf{Adversarial training.}
Since the discovery of adversarial examples (a.k.a. evasion attacks at test time) in neural networks~\citep{biggio2013evasion, szegedy2013intriguing}, plenty of defense methods have been proposed to mitigate this vulnerability~\citep{goodfellow2014explaining, papernot2016distillation, athalye2018obfuscated}. Among them, adversarial training is practically considered as a principled defense against test-time adversarial examples~\citep{madry2018towards, croce2021robustbench} at the price of slightly worse natural accuracy~\citep{su2018robustness, tsipras2018robustness, yang2020closer} and moderate robust generalization~\citep{schmidt2018adversarially, rice2020overfitting}, and many variants were devoted to improving the performance~\citep{zhang2019theoretically, zhang2020attacks, dong2020adversarial, wu2020adversarial, pang2020bag, gao2021maximum, du2021learning, tack2021consistency, huang2021exploring, gowal2021improving, dong2021exploring}. Besides, it has been found that adversarial training may intensify backdoor attacks in experiments~\citep{weng2020trade}. In contrast, both of our theoretical and empirical evidences suggest that adversarial training can mitigate delusive attacks. On the other hand, adversarial training also led to further benefits in robustness to noisy labels~\citep{zhu2021understanding}, out-of-distribution generalization~\citep{xie2020adversarial, pmlr-v139-yi21a, kireev2021effectiveness}, transfer learning~\citep{salman2020adversarially, terzi2021adversarial, utrera2021adversariallytrained}, domain adaption~\citep{bai2021clustering}, novelty detection~\citep{goyal2020drocc, salehi2021arae}, explainability~\citep{zhang2019interpreting, noack2021empirical}, and image synthesis~\citep{santurkar2019image}.

\textbf{Concurrent work.}
The threat of delusive attacks~\citep{newsome2006paragraph, feng2019learning} is attracting attention from the community. Several attack techniques are concurrently and independently proposed, including Unlearnable Examples~\citep{huang2021unlearnable}, Alignment~\citep{fowl2021preventing}, NTGA~\citep{pmlr-v139-yuan21b}, Adversarial Shortcuts~\citep{evtimov2021disrupting}, and Adversarial Poisoning~\citep{fowl2021adversarial}. Contrary to them, we mainly focus on introducing a principled defense method (i.e., adversarial training), while as by-products, five delusive attacks are presented and investigated in this paper. Meanwhile, \citet{huang2021unlearnable} and~\citet{fowl2021adversarial} also experiment with adversarial training, but \textit{only} for their proposed delusive attacks. In contrast, this paper not only provides empirical evidence on the success of adversarial training against \textit{six} different delusive attacks, but also offers theoretical justifications for the defense, which is of great significance to security. We believe that our findings will promote the use of adversarial training in practical applications in the future.

\section{Conclusion and Future Work}
\label{section.conclusion}

In this paper, we suggest applying adversarial training in practical applications, rather than standard training whose performance risks being substantially deteriorated by delusive attacks.
Both theoretical and empirical results vote for adversarial training when confronted with delusive adversaries.
Nonetheless, some limitations remain and may lead to future directions:
\textit{i)} Our implementation of adversarial training adopts the popular PGD-AT framework~\citep{madry2018towards, pang2020bag}, which could be replaced with certified training methods~\citep{wong2018provable, zhang2021towards} for better robustness guarantee.
\textit{ii)} The success of our proposed defense relies on generalization, just like most ML algorithms, so an analysis of robust generalization error bound for this case would be useful.
\textit{iii)} Adversarial training may increase the disparity of accuracy between groups~\citep{xu2020robust, tian2021analysis}, which could be mitigated by fair robust learning~\citep{xu2020robust}.

\begin{ack}
This work was supported by the National Natural Science Foundation of China (Grant No. 62076124, 62076128) and the National Key R\&D Program of China (2020AAA0107000).
\end{ack}

\small
\bibliographystyle{plainnat}
\bibliography{ref}

\normalsize

\newpage
\appendix
\begin{center}{\bf {\LARGE Supplementary Material:}}
\end{center}
\begin{center}{\bf {\Large Better Safe Than Sorry: Preventing Delusive Adversaries with Adversarial Training}}
\end{center}
\vspace{0.1in}

\section{Experimental Setup}
\label{appendix.exps}

Experiments run with NVIDIA GeForce RTX 2080 Ti GPUs. We report the number with a single run of experiments.
Our implementation is based on PyTorch, and the code to reproduce our results is available at \url{https://github.com/TLMichael/Delusive-Adversary}.

\subsection{Datasets and Models}

Table \ref{tab.datasets} reports the parameters used for the datasets and models.

\textbf{CIFAR-10\footnote{\url{https://www.cs.toronto.edu/~kriz/cifar.html}}.} \quad
This dataset~\citep{krizhevsky2009learning} consists of 60,000 $32\times 32$ colour images (50,000 images for training and 10,000 images for testing) in 10 classes (``airplane'', ``car'', ``bird'', ``cat'', ``deer'', ``dog'', ``frog'', ``horse'', ``ship'', and ``truck'').
Early stopping is done with holding out 1000 examples from the training set.
We use various architectures for this dataset, including VGG-11, VGG-16, VGG-19 \cite{simonyan2014very}, ResNet-18, ResNet-50 \cite{he2016deep}, and DenseNet-121 \cite{huang2017densely}.
The initial learning rate is set to 0.1.
For supervised learning on this dataset, we run 150 epochs on the training set, where we decay the learning rate by a factor 0.1 in the 100th and 125th epochs.
For self-supervised learning, we run 70 epochs, where we decay the learning rate by a factor 0.1 in the 40th and 55th epochs.
The license for this dataset is unknown\footnote{\url{https://paperswithcode.com/dataset/cifar-10}}.

\textbf{SVHN\footnote{\url{http://ufldl.stanford.edu/housenumbers/}}.} \quad
This dataset~\citep{netzer2011reading} consists of 630,420 $32\times 32$ colour images (73,257 images for training, 26,032 images for testing, and 531,131 additional images to use as extra training data) in 10 classes (``0'', ``1'', ``2'', ``3'', ``4'', ``5'', ``6'', ``7'', ``8'', and ``9'').
Early stopping is done with holding out 1000 examples from the training set.
We use the ResNet-18 architecture for this dataset.
The initial learning rate is set to 0.01.
We run 50 epochs on the training set, where we decay the learning rate by a factor 0.1 in the 30th and 40th epochs.
The license for this dataset is custom (non-commercial)\footnote{\url{https://paperswithcode.com/dataset/svhn}}.

\textbf{Two-class ImageNet\footnote{\url{https://github.com/kingfengji/DeepConfuse}}.} \quad
Following \citet{feng2019learning},
this dataset is a subset of ImageNet \cite{russakovsky2015imagenet} that consists of 2,700 $224\times 224$ colour images (2,600 images for training and 100 images for testing) in 2 classes (``bulbul'', ``jellyfish'').
Early stopping is done with holding out 380 examples from the training set.
We use the ResNet-18 architecture for this dataset.
The initial learning rate is set to 0.1.
We run 100 epochs on the training set, where we decay the learning rate by a factor 0.1 in the 75th and 90th epochs.
The license for this dataset is custom (research, non-commercial)\footnote{\url{https://paperswithcode.com/dataset/imagenet}}.

\textbf{MNIST-CIFAR\footnote{\url{https://github.com/harshays/simplicitybiaspitfalls}}.} \quad
Following \citet{shah2020pitfalls},
this dataset consists of 11,960 $64\times 32$ colour images (10,000 images for training and 1,960 images for testing) in 2 classes: images in class $-1$ and class $1$ are vertical concatenations of MNIST digit zero \& CIFAR-10 car and MNIST digit one \& CIFAR-10 truck images, respectively.
We use various architectures for this dataset, including VGG-16, ResNet-50, and DenseNet-121.
The initial learning rate is set to 0.05.
We run 100 epochs on the training set, where we decay the learning rate by a factor 0.2 in the 50th and 150th epochs and a factor 0.5 in the 100th epoch.
The license for this dataset is unknown\footnote{\url{https://paperswithcode.com/dataset/mnist}}.

\begin{table}[h]
\centering
\begin{tabular}{lcccc}
\toprule
Parameter & CIFAR-10 & SVHN & Two-class ImageNet & MNIST-CIFAR \\
\midrule
\# training examples & 50,000 & 73,257 & 2,600 & 10,000 \\
\# test examples & 10,000 & 26,032 & 100 & 1,960 \\
\# features & 3,072 & 3,072 & 150,528 & 6,144 \\
\# classes & 10 & 10 & 2 & 2 \\
batch size & 128 & 128 & 32 & 256 \\
learning rate & 0.1 & 0.01 & 0.1 & 0.05 \\
SGD momentum & 0.9 & 0.9 & 0.9 & 0.9 \\
weight decay & $5\cdot 10^{-4}$ & $5\cdot 10^{-4}$ & $5\cdot 10^{-4}$ & $5\cdot 10^{-5}$ \\
\bottomrule
\end{tabular}
\vspace{10px}
\caption{Experimental setup and parameters for the each dataset.}
\label{tab.datasets}
\end{table}

\subsection{Adversarial Training}

Unless otherwise specified, we perform adversarial training to train robust classifiers by following~\citet{madry2018towards}. Specifically, we train against a projected gradient descent (PGD) adversary, starting from a random initial perturbation of the training data. 
We consider adversarial perturbations in $\ell_p$ norm where $p=\{2, \infty\}$. Unless otherwise specified, we use the values of $\epsilon$ provided in Table \ref{tab.adv_train} to train our models. We use 7 steps of PGD with a step size of $\epsilon/5$.
For overcoming simplicity bias on MNIST-CIFAR, we modify the original $\epsilon$-ball used in \citet{shah2020pitfalls} (i.e., $\mathcal{B}(\boldsymbol{x}, \epsilon) = \{\boldsymbol{x}^{\prime} \in \mathcal{X}: \| \boldsymbol{x} - \boldsymbol{x}^{\prime} \|_{\infty} \leq 0.3 \}$) to the entire space of the MNIST features $\mathcal{B}(\boldsymbol{x}, \epsilon) = \{\boldsymbol{x}^{\prime} \in \mathcal{X}: \| \boldsymbol{x}_m - \boldsymbol{x}_m^{\prime} \|_{\infty} + \infty \cdot \| \boldsymbol{x}_c - \boldsymbol{x}_c^{\prime} \|_{\infty} \leq 1 \}$, where $\boldsymbol{x}$ represents the vertical concatenation of a MNIST image $\boldsymbol{x}_m$ and a CIFAR-10 image $\boldsymbol{x}_c$.

\begin{table}[h]
\centering
\begin{tabular}{lccc}
\toprule
Adversary & CIFAR-10 & SVHN & Two-class ImageNet \\
\midrule
$\ell_{\infty}$ & 0.032 & - & 0.1 \\
$\ell_{2}$ & 0.5 & 0.5 & - \\
\bottomrule
\end{tabular}
\vspace{10px}
\caption{Value of $\epsilon$ used for adversarial training of each dataset and $\ell_p$ norm.}
\label{tab.adv_train}
\end{table}

\subsection{Delusive Adversaries}

Six delusive attacks are considered to validate our proposed defense. We reimplement the \texttt{L2C} attack~\citep{feng2019learning} using the code provided by the authors\footnote{\url{https://github.com/kingfengji/DeepConfuse}}.
The other five attacks are constructed as follows.
To execute \texttt{P1} $\sim$ \texttt{P4}, we perform normalized gradient descent ($\ell_{p}$-norm of gradient is fixed to be constant at each step). At each step we clip the input to in the $[0,1]$ range so as to ensure that it is a valid image.
To execute \texttt{P5}, noises are sampled from Gaussian distribution and then projected to the ball for $\ell_2$-norm bounded perturbations; for $\ell_{\infty}$-norm bounded perturbations, noises are directly sampled from a uniform distribution.
Unless otherwise specified, the attacker's $\epsilon$ are the same with adversarial training used by the defender. 
Details on the optimization procedure are shown in Table \ref{tab.poison_optim}.

\begin{table}[h]
\centering
\begin{tabular}{lccccc}
\toprule
Parameter & \texttt{P1} & \texttt{P2} & \texttt{P3} & \texttt{P4} & \texttt{P5} \\
\midrule
step size & $\epsilon/5$ & $\epsilon/5$ & $\epsilon/5$ & $\epsilon/5$ & $\epsilon$ \\
iterations & 100 & 100 & 500 & 500 & 1 \\
\bottomrule
\end{tabular}
\vspace{10px}
\caption{Parameters used for optimization procedure to construct each delusive dataset in Section \ref{section.attacks}.}
\label{tab.poison_optim}
\end{table}


\section{Omitted Figures}
\label{appendix.figures}

\begin{figure}[hbtp]
   \centering
    \vspace{-5px}
   \subfigure[VGG-19]{
      \centering
      \includegraphics[width=0.45\textwidth]{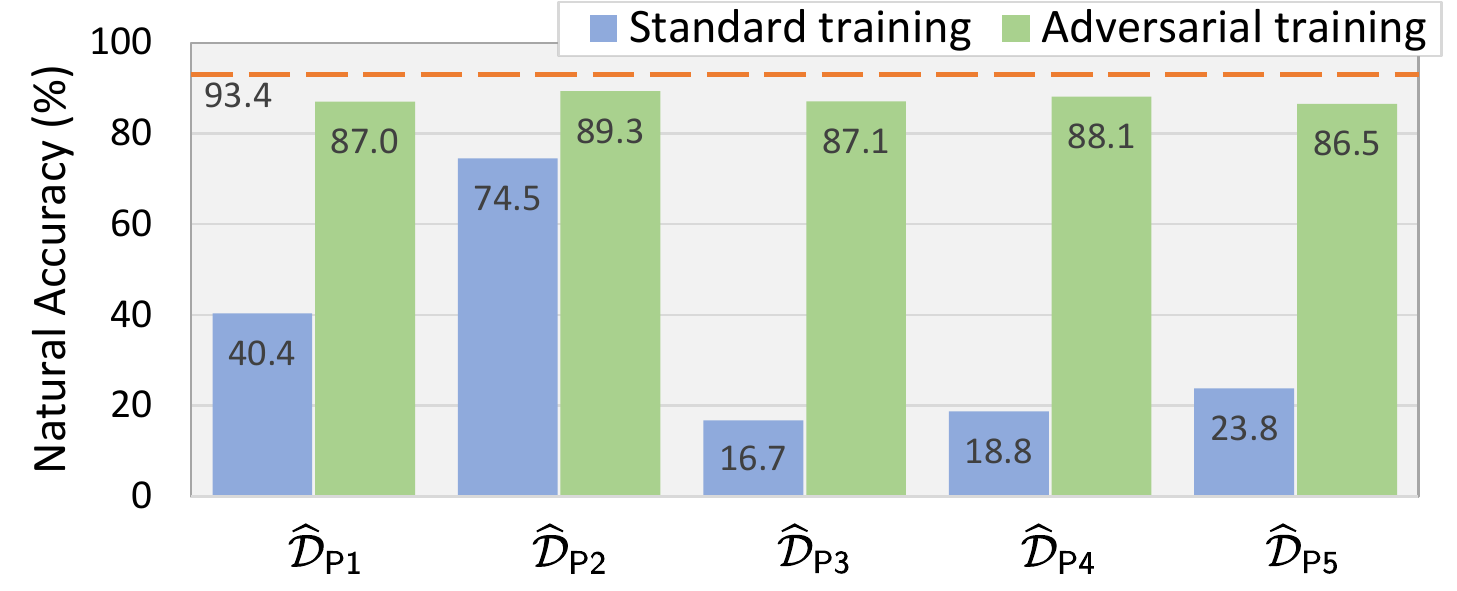}
      \label{fig.transfer-cifar_l2_vgg19}
   }
   \quad
   \subfigure[ResNet-18]{
      \centering
      \includegraphics[width=0.45\textwidth]{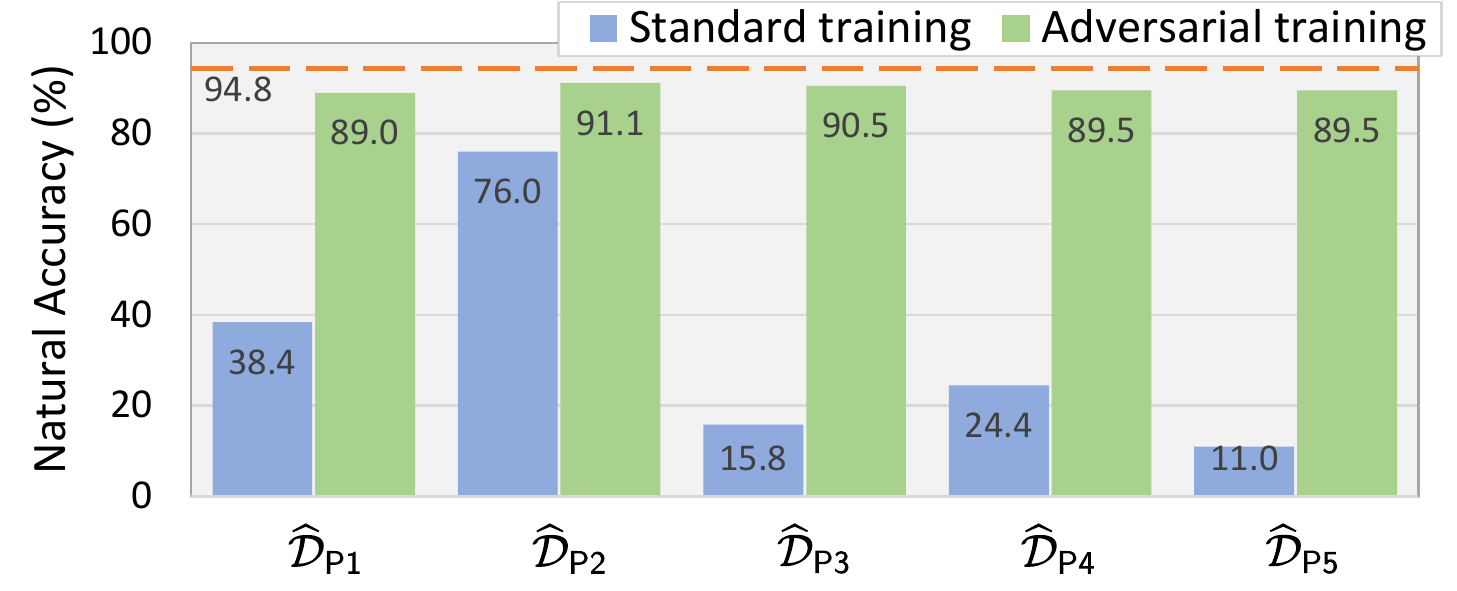}
      \label{fig.transfer-cifar_l2_resnet18}
   }
   \subfigure[ResNet-50]{
      \centering
      \includegraphics[width=0.45\textwidth]{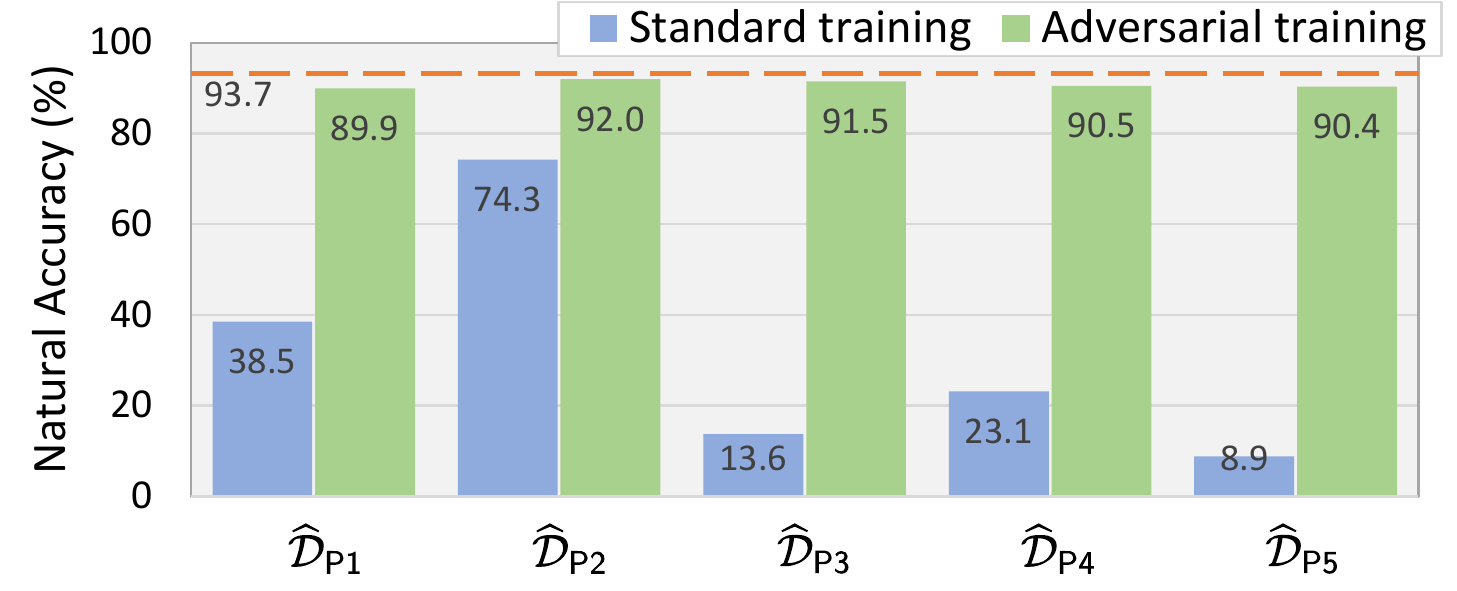}
      \label{fig.transfer-cifar_l2_resnet50}
   }
   \quad
   \subfigure[DenseNet-121]{
      \centering
      \includegraphics[width=0.45\textwidth]{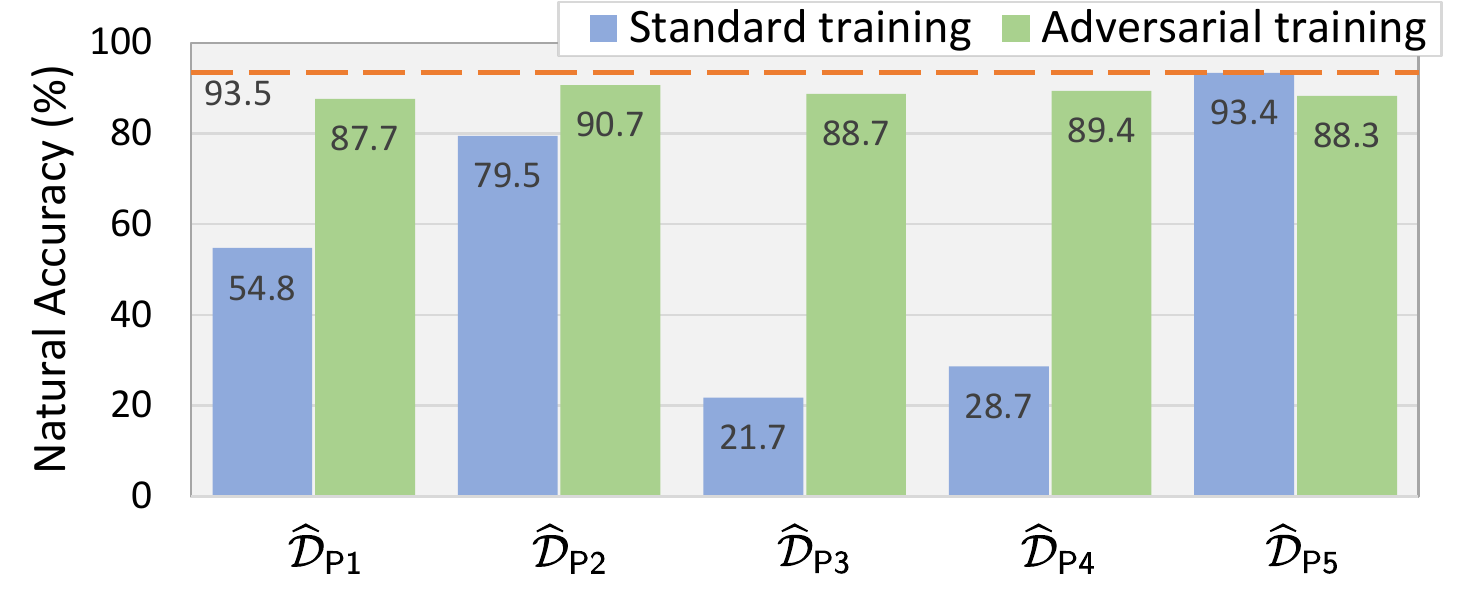}
      \label{fig.transfer-cifar_l2_densenet121}
   }
    \vspace{-5px}
   \caption{ Natural accuracy on CIFAR-10 under $\ell_2$ threat model.
   The horizontal orange line indicates the natural accuracy of a standard model trained on the clean training set.
   }
   \label{fig.transferability}
    \vspace{-5px}
\end{figure}

\begin{figure}[t]
   \centering
   \vspace{-5px}
   \subfigure[Pretext task]{
      \centering
      \includegraphics[width=0.45\textwidth]{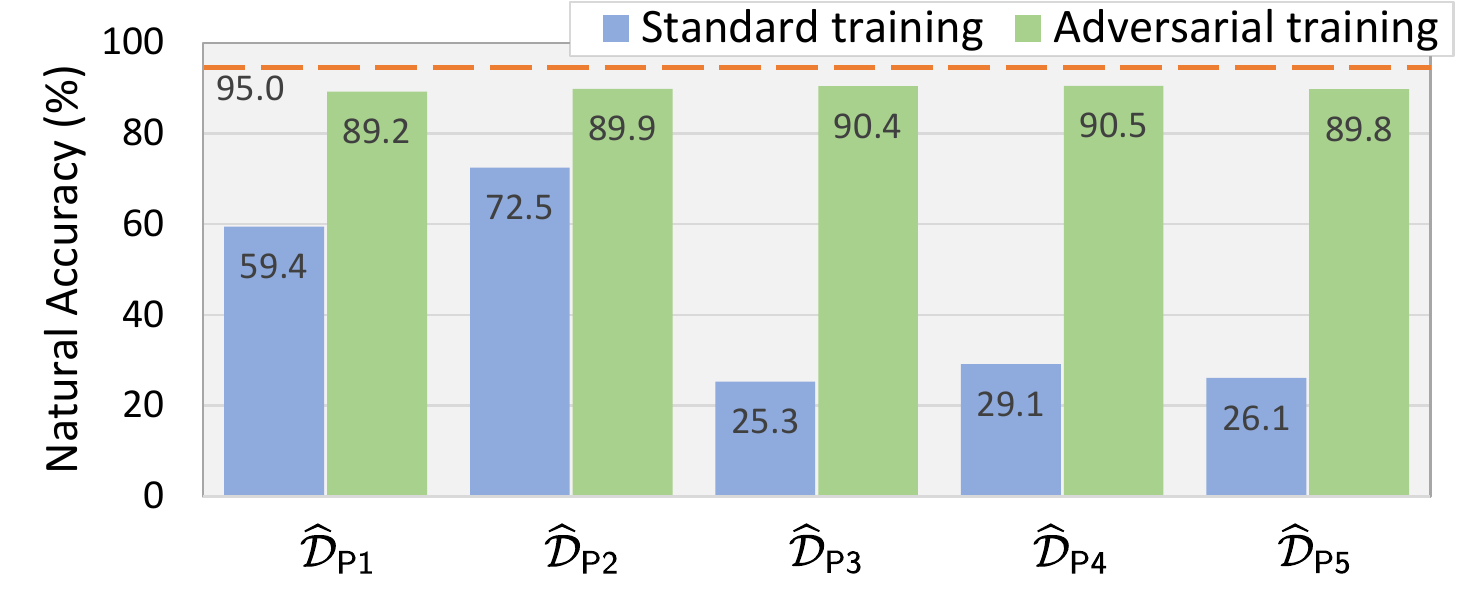}
      \label{fig.ssl-cifar_l2_pretext}
   }
   \quad
   \subfigure[Downstream task]{
      \centering
      \includegraphics[width=0.45\textwidth]{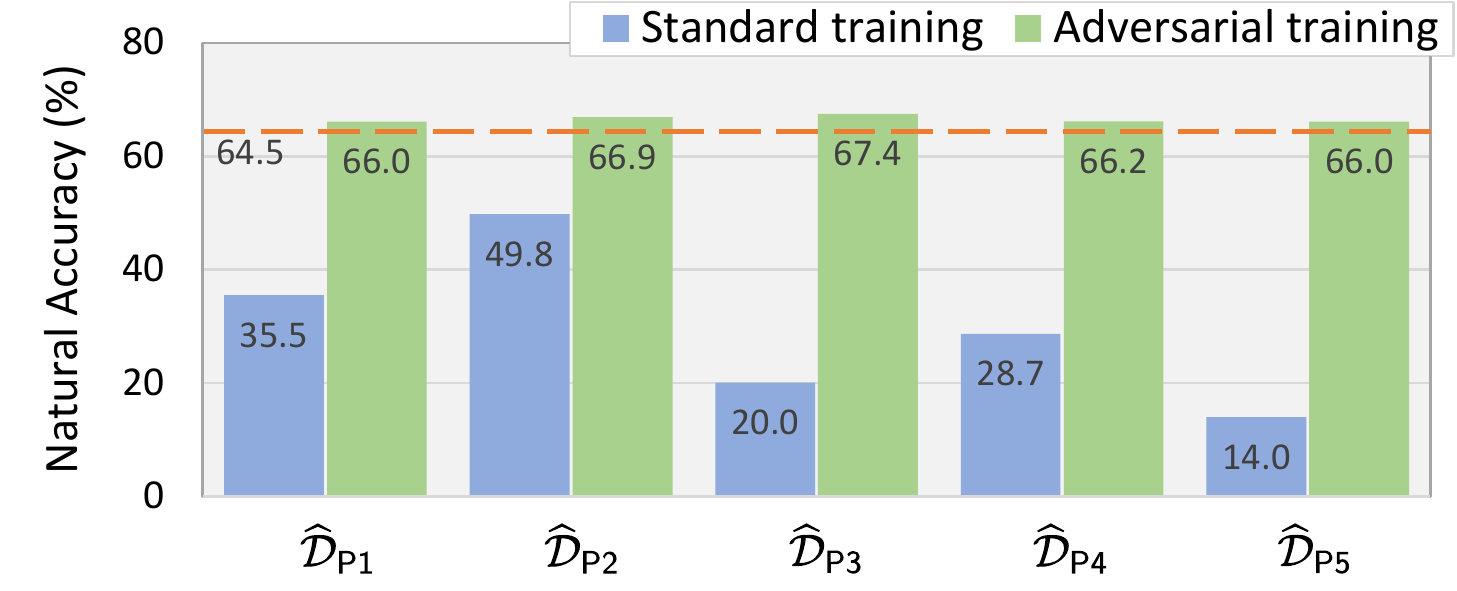}
      \label{fig.ssl-cifar_l2_downstream}
   }
   \vspace{-5px}
   \caption{ Rotation-based SSL on CIFAR-10 using ResNet-18 under $\ell_2$ threat model. The horizontal line indicates the natural accuracy of a standard model trained on the clean training set.
   }
   \label{fig.ssl-cifar_l2}
  \vspace{-5px}
\end{figure}

\begin{figure}[hbtp]
    \centering
    \subfigure[]{
    \centering
    \includegraphics[width=0.45\textwidth]{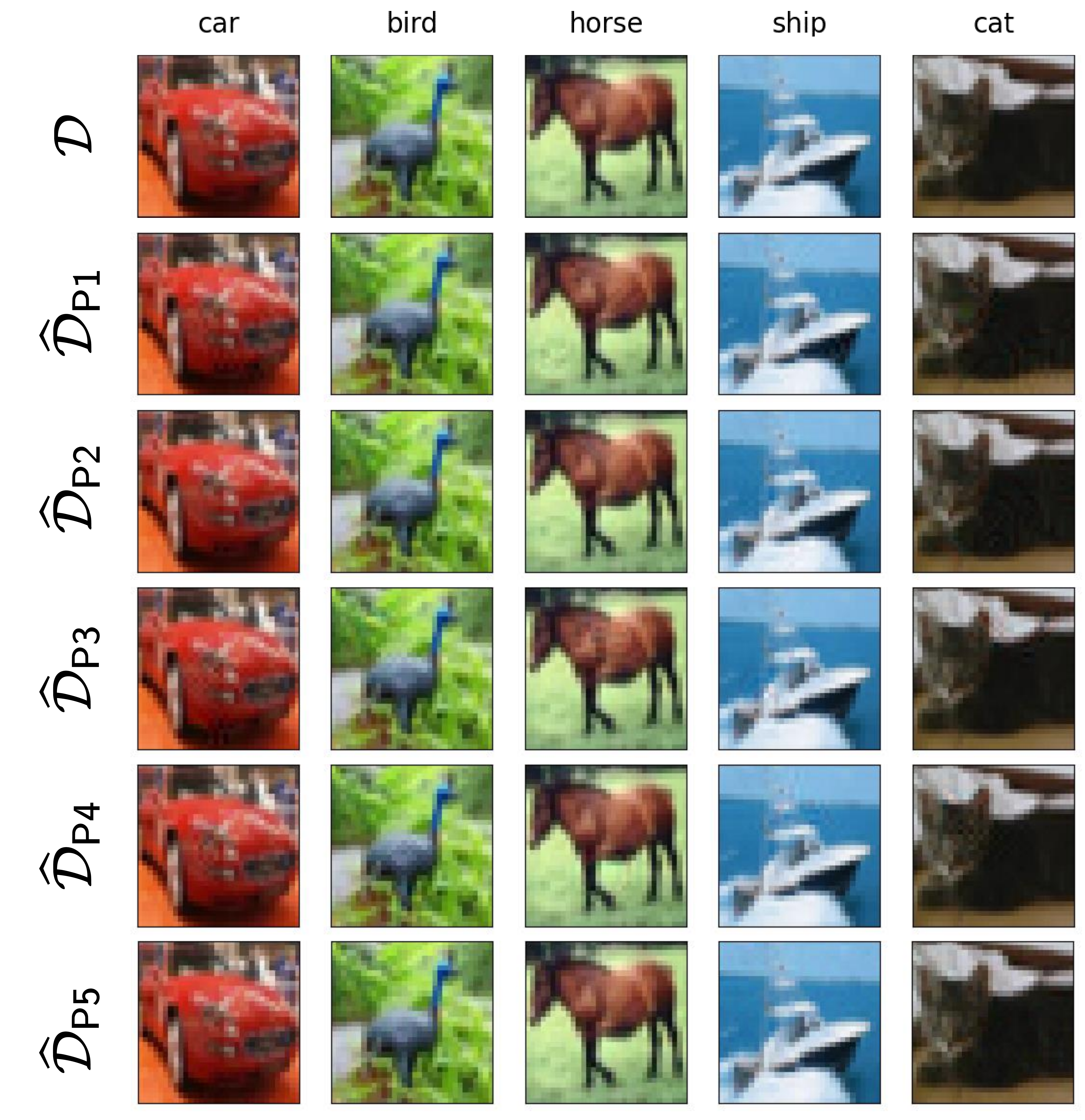}
    }
    \quad
    \quad
    \subfigure[]{
    \centering
    \includegraphics[width=0.45\textwidth]{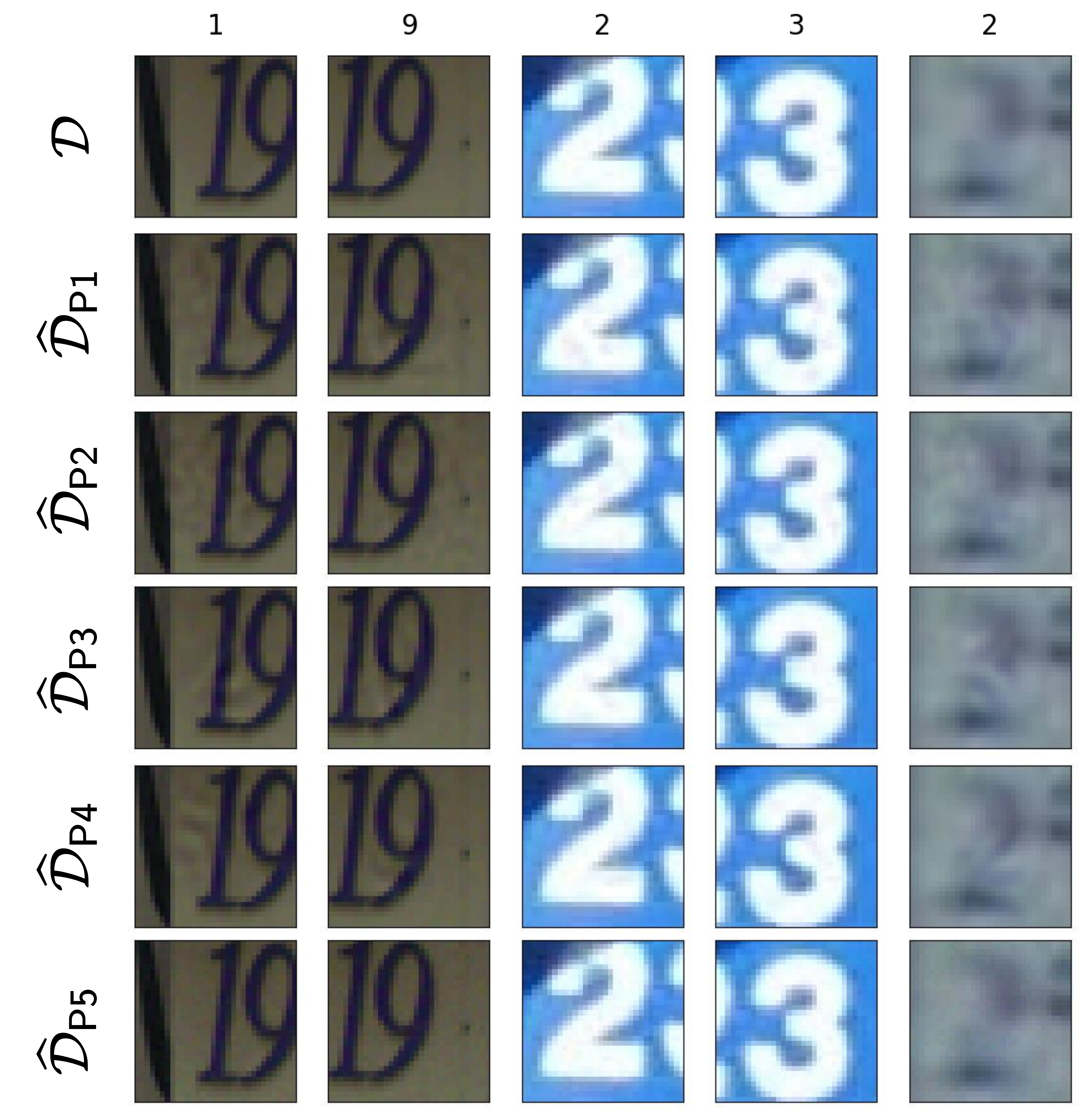}
    }
    \caption{
    \textbf{Left}: Random samples from the CIFAR-10 training set: the original training $\mathcal{D}$; the perturbed training sets $\widehat{\mathcal{D}}_{\mathsf{P1}}$,
    $\widehat{\mathcal{D}}_{\mathsf{P2}}$,
    $\widehat{\mathcal{D}}_{\mathsf{P3}}$,
    $\widehat{\mathcal{D}}_{\mathsf{P4}}$, and
    $\widehat{\mathcal{D}}_{\mathsf{P5}}$.
    The threat model is the $\ell_2$ ball with $\epsilon=0.5$.
    \textbf{Right}: First five examples from the SVHN training set: the original training $\mathcal{D}$; the perturbed training sets $\widehat{\mathcal{D}}_{\mathsf{P1}}$,
    $\widehat{\mathcal{D}}_{\mathsf{P2}}$,
    $\widehat{\mathcal{D}}_{\mathsf{P3}}$,
    $\widehat{\mathcal{D}}_{\mathsf{P4}}$, and
    $\widehat{\mathcal{D}}_{\mathsf{P5}}$.
    The threat model is the $\ell_2$ ball with $\epsilon=0.5$.
    }
    \label{fig.examples_l2}
\end{figure}

\begin{figure}[hbtp]
    \centering
    \subfigure[]{
    \centering
    \includegraphics[width=0.45\textwidth]{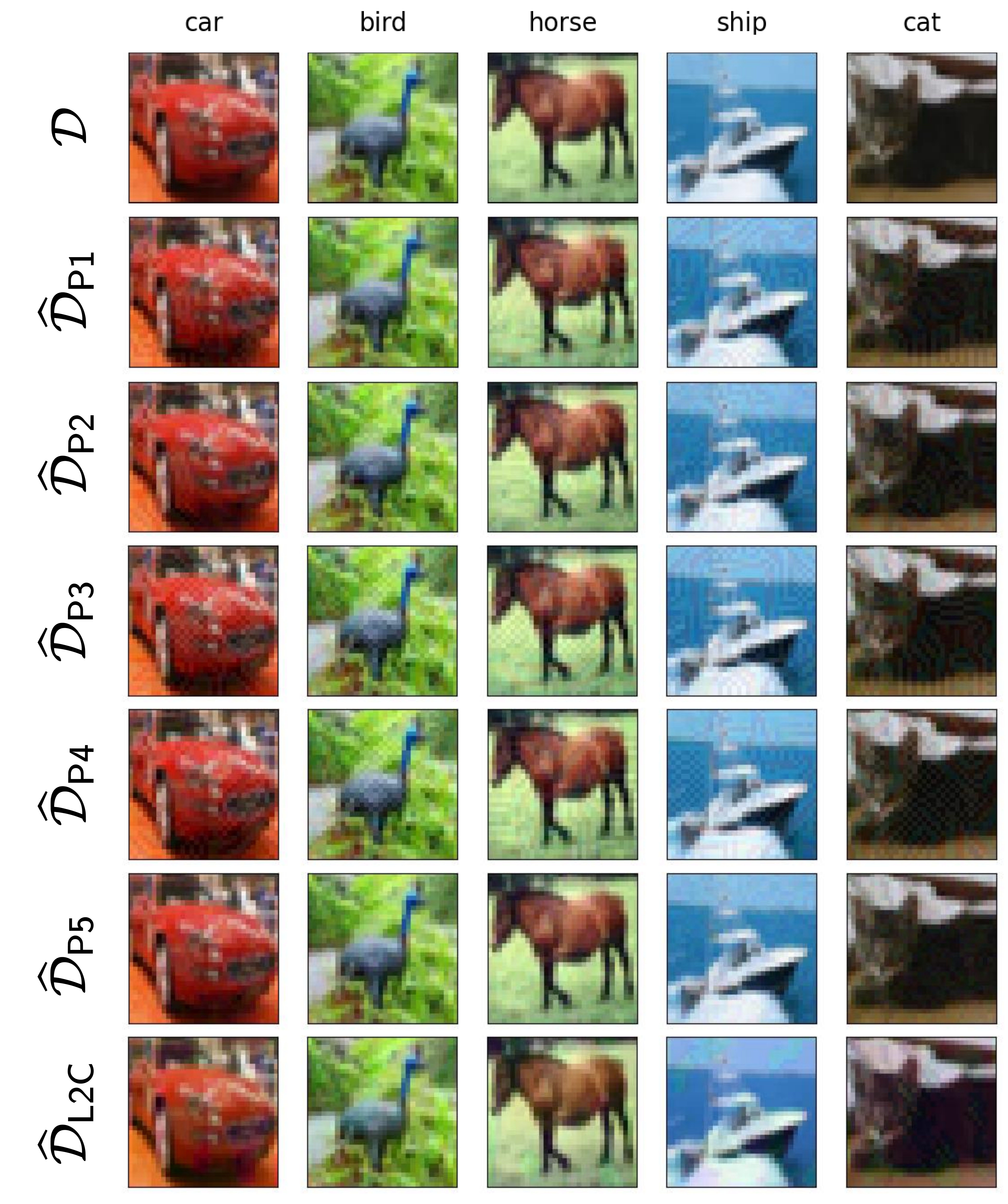}
    }
    \quad
    \quad
    \subfigure[]{
    \centering
    \includegraphics[width=0.45\textwidth]{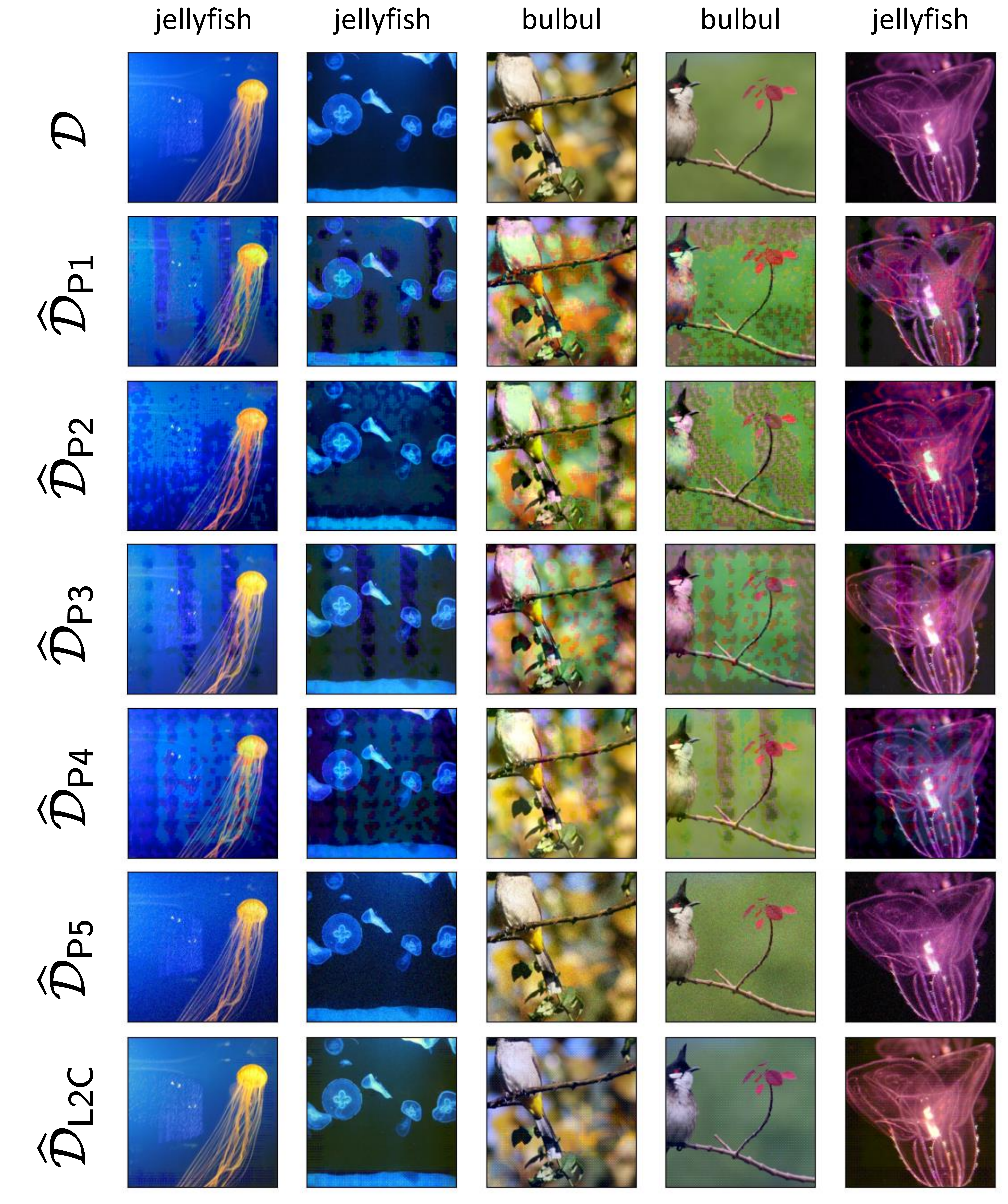}
    }
    \caption{
    \textbf{Left}: Random samples from the CIFAR-10 training set: the original training $\mathcal{D}$; the perturbed training sets $\widehat{\mathcal{D}}_{\mathsf{P1}}$,
    $\widehat{\mathcal{D}}_{\mathsf{P2}}$,
    $\widehat{\mathcal{D}}_{\mathsf{P3}}$,
    $\widehat{\mathcal{D}}_{\mathsf{P4}}$, and
    $\widehat{\mathcal{D}}_{\mathsf{P5}}$.
    The threat model is the $\ell_{\infty}$ ball with $\epsilon=0.032$.
    \textbf{Right}: First five examples from the two-class ImageNet training set: the original training $\mathcal{D}$; the perturbed training sets $\widehat{\mathcal{D}}_{\mathsf{P1}}$,
    $\widehat{\mathcal{D}}_{\mathsf{P2}}$,
    $\widehat{\mathcal{D}}_{\mathsf{P3}}$,
    $\widehat{\mathcal{D}}_{\mathsf{P4}}$, and
    $\widehat{\mathcal{D}}_{\mathsf{P5}}$.
    The threat model is the $\ell_{\infty}$ ball with $\epsilon=0.1$.
    }
    \label{fig.examples_linf}
\end{figure}

\begin{figure}[hbtp]
    \centering
    \includegraphics[width=0.45\textwidth]{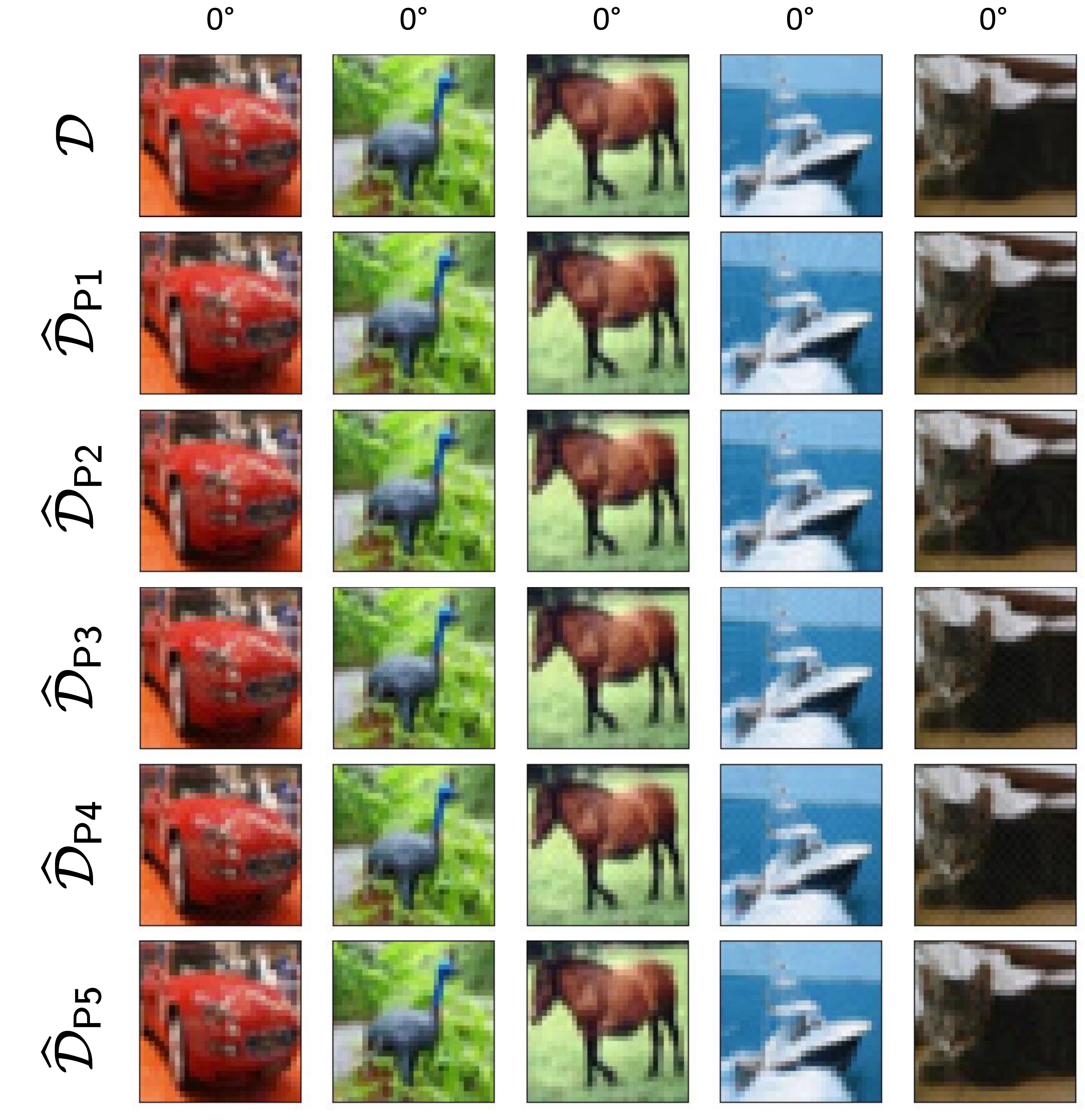}
    \caption{
    Random samples from the CIFAR-10 training set for rotation-based self-supervised learning: the original training $\mathcal{D}$; the perturbed training sets $\widehat{\mathcal{D}}_{\mathsf{P1}}$,
    $\widehat{\mathcal{D}}_{\mathsf{P2}}$,
    $\widehat{\mathcal{D}}_{\mathsf{P3}}$,
    $\widehat{\mathcal{D}}_{\mathsf{P4}}$, and
    $\widehat{\mathcal{D}}_{\mathsf{P5}}$.
    The threat model is the $\ell_2$ ball with $\epsilon=0.5$.
    }
    \label{fig.examples_ssl_l2}
\end{figure}

\clearpage

\section{Proofs}
\label{appendix.proofs}

In this section, we provide the proofs of our theoretical results in Section \ref{section.defense}.

\subsection{Proof of Theorem \ref{thm.3}}
\label{appendix.proofs.general_case}

The main tool in proving our key results is the following lemma, which characterizes the equivalence of adversarial risk and the DRO problem bounded in an $\infty$-Wasserstein ball.

\begin{lemma}
\label{lemma.DRO_is_adv}
Given a classifier $f: \mathcal{X} \rightarrow \mathcal{Y}$, for any data distribution $\mathcal{D}$, we have
\begin{equation*}
    \underset{(\boldsymbol{x}, y) \sim \mathcal{D}}{\mathbb{E}}  \left[ \max_{\boldsymbol{x}' \in \mathcal{B}_{\epsilon}(\boldsymbol{x}, \epsilon)} \ell (f( \boldsymbol{x}'), y) \right]
    =
    \underset{\mathcal{D}^{\prime} \in \mathcal{B}_{\mathrm{W}_{\infty}}(\widehat{\mathcal{D}}, \epsilon)}{\max} \underset{(\boldsymbol{x}, y) \sim \mathcal{D}}{\mathbb{E}}  \left[ \ell (f( \boldsymbol{x}), y) \right]
\end{equation*}
\end{lemma}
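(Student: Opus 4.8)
The plan is to prove the claimed equality in the form $\mathcal{R}_{\mathsf{adv}}(f,\mathcal{D}) = \max_{\mathcal{D}^{\prime}\in\mathcal{B}_{\mathrm{W}_{\infty}}(\mathcal{D},\epsilon)}\mathcal{R}_{\mathsf{nat}}(f,\mathcal{D}^{\prime})$ by establishing the two inequalities separately. The structural fact I would build everything on is a coupling characterization of the $\infty$-Wasserstein distance: $\mathcal{D}^{\prime}\in\mathcal{B}_{\mathrm{W}_{\infty}}(\mathcal{D},\epsilon)$ if and only if there exists $\gamma\in\Pi(\mathcal{D},\mathcal{D}^{\prime})$ with $c(\boldsymbol{z},\boldsymbol{z}^{\prime})\le\epsilon$ for $\gamma$-almost every $(\boldsymbol{z},\boldsymbol{z}^{\prime})$. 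Since $c(\boldsymbol{z},\boldsymbol{z}^{\prime}) = \Delta(\boldsymbol{x},\boldsymbol{x}^{\prime}) + \infty\cdot\mathbf{1}\{y\neq y^{\prime}\}$, the condition $c\le\epsilon$ $\gamma$-a.s.\ is exactly the statement that, under $\gamma$, the label is preserved ($y=y^{\prime}$) and $\boldsymbol{x}^{\prime}\in\mathcal{B}(\boldsymbol{x},\epsilon)$ almost surely.

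For the ``$\ge$'' direction (distributionally robust risk $\le$ adversarial risk), I would take an arbitrary $\mathcal{D}^{\prime}\in\mathcal{B}_{\mathrm{W}_{\infty}}(\mathcal{D},\epsilon)$ together with a coupling $\gamma\in\Pi(\mathcal{D},\mathcal{D}^{\prime})$ as above. Writing the natural risk on $\mathcal{D}^{\prime}$ through $\gamma$ and using $y^{\prime}=y$ and $\boldsymbol{x}^{\prime}\in\mathcal{B}(\boldsymbol{x},\epsilon)$ $\gamma$-a.s., one gets $\mathcal{R}_{\mathsf{nat}}(f,\mathcal{D}^{\prime}) = \mathbb{E}_{\gamma}[\ell(f(\boldsymbol{x}^{\prime}),y)] \le \mathbb{E}_{\gamma}[\sup_{\boldsymbol{x}^{\prime\prime}\in\mathcal{B}(\boldsymbol{x},\epsilon)}\ell(f(\boldsymbol{x}^{\prime\prime}),y)]$, which equals $\mathcal{R}_{\mathsf{adv}}(f,\mathcal{D})$ because the first marginal of $\gamma$ is $\mathcal{D}$. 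Taking the supremum over $\mathcal{D}^{\prime}$ in the ball finishes this half. For the ``$\le$'' direction I would construct a (near-)worst-case pushforward: for $\delta>0$, invoke a measurable selection theorem to obtain a measurable map $T_{\delta}(\boldsymbol{x},y)=(\tau_{\delta}(\boldsymbol{x},y),y)$ with $\tau_{\delta}(\boldsymbol{x},y)\in\mathcal{B}(\boldsymbol{x},\epsilon)$ and $\ell(f(\tau_{\delta}(\boldsymbol{x},y)),y)\ge\sup_{\boldsymbol{x}^{\prime}\in\mathcal{B}(\boldsymbol{x},\epsilon)}\ell(f(\boldsymbol{x}^{\prime}),y)-\delta$. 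Setting $\mathcal{D}^{\prime}_{\delta}=(T_{\delta})_{\#}\mathcal{D}$ and $\gamma_{\delta}=(\mathrm{id},T_{\delta})_{\#}\mathcal{D}$, the coupling $\gamma_{\delta}$ has $c\le\epsilon$ a.s., so $\mathcal{D}^{\prime}_{\delta}\in\mathcal{B}_{\mathrm{W}_{\infty}}(\mathcal{D},\epsilon)$, and by the change-of-variables identity $\mathbb{E}_{(T_{\delta})_{\#}\mathcal{D}}[g]=\mathbb{E}_{\mathcal{D}}[g\circ T_{\delta}]$ we get $\mathcal{R}_{\mathsf{nat}}(f,\mathcal{D}^{\prime}_{\delta})\ge\mathcal{R}_{\mathsf{adv}}(f,\mathcal{D})-\delta$; letting $\delta\to0$ gives the inequality, and when the pointwise supremum is attained (e.g.\ $\mathcal{B}(\boldsymbol{x},\epsilon)$ compact and $\ell\circ f$ upper semicontinuous) one may take $\delta=0$, so the right-hand side is a genuine maximum.

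The main obstacles are the two measure-theoretic ingredients rather than any computation. The first, and the one I expect to be most delicate, is the coupling characterization of $\mathrm{W}_{\infty}$ — in particular the claim that $\mathrm{W}_{\infty}(\mathcal{D},\mathcal{D}^{\prime})\le\epsilon$ \emph{implies} the existence of a coupling with $c\le\epsilon$ $\gamma$-a.s.\ (equivalently, that the infimum of the essential supremum of $c$ over $\Pi(\mathcal{D},\mathcal{D}^{\prime})$ is attained); I would handle this by either citing a standard reference on $\infty$-Wasserstein distances (or on the DRO-versus-adversarial-training equivalence) or by a Prokhorov/weak-compactness argument applied to the set of $\epsilon$-feasible couplings together with lower semicontinuity of $c$. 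The second ingredient is the measurable selection of the $\delta$-approximate maximizer $\tau_{\delta}$, for which I would cite the Kuratowski–Ryll-Nardzewski selection theorem (the set-valued map $(\boldsymbol{x},y)\mapsto\{\boldsymbol{x}^{\prime}\in\mathcal{B}(\boldsymbol{x},\epsilon):\ell(f(\boldsymbol{x}^{\prime}),y)\ge\sup(\cdot)-\delta\}$ being nonempty-valued and measurable). Everything else — linearity of expectation, disintegration of a coupling along its marginals, and the pushforward change-of-variables formula — is routine and I would only state it.
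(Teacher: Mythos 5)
Your argument is correct and is essentially the standard proof of this equivalence; the paper itself does not supply a proof but defers to Proposition 3.1 of Staib and Jegelka (2017) and Lemma 3.3 of Zhu et al.\ (2020), whose arguments proceed exactly as yours does — one inequality from the almost-sure coupling characterization of $\mathrm{W}_{\infty}$ (the integrand $\sup_{\boldsymbol{x}''\in\mathcal{B}(\boldsymbol{x},\epsilon)}\ell(f(\boldsymbol{x}''),y)$ depending only on the first marginal), the other from a pushforward along a measurable $\delta$-approximate selector of the inner maximizer. The two technical points you flag, namely attainment of the coupling infimum (via Prokhorov compactness of the feasible couplings together with lower semicontinuity of $c$, noting that $c$ enforces $y=y'$) and measurable selection of $\tau_{\delta}$, are indeed the only delicate steps, and your proposed remedies are the standard ones.
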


This lemma is proved by Proposition 3.1 in \citet{staib2017distributionally} and Lemma 3.3 in \citet{zhu2020learning}, which indicates that adversarial training is actually equivalent to the DRO problem that minimizes the worst-case distribution constrained by the $\infty$-Wasserstein distance.

Theorem \ref{thm.3}, restated below, shows that adversarial training on the poison data is optimizing an upper bound of natural risk on the original data.

\textbf{Theorem~\ref{thm.3} (restated).}
\emph{
Given a classifier $f: \mathcal{X} \rightarrow \mathcal{Y}$, for any data distribution $\mathcal{D}$ and any delusive distribution $\widehat{\mathcal{D}}$ such that $\widehat{\mathcal{D}} \in \mathcal{B}_{\mathrm{W}_{\infty}}(\mathcal{D}, \epsilon)$, we have
\begin{equation*}
    \mathcal{R}_{\mathsf{nat}}(f, \mathcal{D})
    \leq
    \underset{\mathcal{D}^{\prime} \in \mathcal{B}_{\mathrm{W}_{\infty}}(\widehat{\mathcal{D}}, \epsilon)}{\max} \mathcal{R}_{\mathsf{nat}}(f, \mathcal{D}^{\prime}) 
    =
    \mathcal{R}_{\mathsf{adv}}(f, \widehat{\mathcal{D}}).
\end{equation*}
}

\begin{proof}
The first inequality comes from the symmetry of Wasserstein distance:
\begin{equation*}
    \mathrm{W}_{\infty}(\mathcal{D}, \widehat{\mathcal{D}}) = \mathrm{W}_{\infty}(\widehat{\mathcal{D}}, \mathcal{D}),
\end{equation*}
which means that the original distribution exists in the neighborhood of $\widehat{\mathcal{D}}$:
\begin{equation*}
    \mathcal{D} \in \mathcal{B}_{\mathrm{W}_{\infty}}(\widehat{\mathcal{D}}, \epsilon).
\end{equation*}
Thus the natural risk on the original distribution can be upper bounded by DRO on the delusive distribution.

For the last equality, we simply use the fact in Lemma \ref{lemma.DRO_is_adv} that adversarial risk is equivalent to DRO defined with respect to the $\infty$-Wasserstein distance.
This concludes the proof.
\end{proof}

\subsection{Proof of Theorem \ref{thm.1}}
\label{appendix.proofs.gaussian_case_1}

We first review the original mixture-Gaussian distribution $\mathcal{D}$, the corresponding delusive distribution $\widehat{\mathcal{D}}_1$, and $\widehat{\mathcal{D}}_2$.

The original mixture-Gaussian distribution $\mathcal{D}$:
\begin{equation}
y \stackrel{u \cdot a \cdot r}{\sim}\{-1,+1\}, \quad \boldsymbol{x} \sim \mathcal{N}(y \cdot \boldsymbol{\mu}, \sigma^2 \boldsymbol{I}), \quad \text{where} \  \boldsymbol{\mu} = (1, \eta, \ldots, \eta) \in \mathbb{R}^{d+1}.
\end{equation}

The first delusive mixture-Gaussian distribution $\widehat{\mathcal{D}}_1$:
\begin{equation}
\label{equa.delusive_gaussian_1}
y \stackrel{u \cdot a \cdot r}{\sim}\{-1,+1\}, \quad \boldsymbol{x} \sim \mathcal{N}(y \cdot \widehat{\boldsymbol{\mu}}_1, \sigma^2 \boldsymbol{I}), \quad \text{where} \  \widehat{\boldsymbol{\mu}}_1=(1, -\eta, \ldots, -\eta) \in \mathbb{R}^{d+1}.
\end{equation}

The second delusive mixture-Gaussian distribution $\widehat{\mathcal{D}}_2$:
\begin{equation}
\label{equa.delusive_gaussian_2}
y \stackrel{u \cdot a \cdot r}{\sim}\{-1,+1\}, \quad \boldsymbol{x} \sim \mathcal{N}(y \cdot \widehat{\boldsymbol{\mu}}_2, \sigma^2 \boldsymbol{I}), \quad \text{where} \  \widehat{\boldsymbol{\mu}}_2=(1, 3\eta, \ldots, 3\eta) \in \mathbb{R}^{d+1}.
\end{equation}

Theorem \ref{thm.1}, restated below, compares the effect of the delusive distributions on natural risk.

\textbf{Theorem~\ref{thm.1} (restated).}
\emph{
Let $f_{\mathcal{D}}$, $f_{\widehat{\mathcal{D}}_1}$, and $f_{\widehat{\mathcal{D}}_2}$ be the Bayes optimal classifiers for the mixture-Gaussian distributions $\mathcal{D}$, $\widehat{\mathcal{D}}_1$, and $\widehat{\mathcal{D}}_2$, defined in Eqs. 5, 6, and 7, respectively. For any $\eta > 0$, we have
\begin{equation*}
    \mathcal{R}_{\mathsf{nat}}(f_{\mathcal{D}}, \mathcal{D}) < \mathcal{R}_{\mathsf{nat}}(f_{\widehat{\mathcal{D}}_2}, \mathcal{D}) < \mathcal{R}_{\mathsf{nat}}(f_{\widehat{\mathcal{D}}_1}, \mathcal{D}).
\end{equation*}
}

\begin{proof}
In the mixture-Gaussian distribution setting described above, the Bayes optimal classifier is linear.
In particular, the expression for the classifier of $\mathcal{D}$ is
\begin{equation*}
    f_{\mathcal{D}} (\boldsymbol{x}) = \underset{c\in \mathcal{Y}}{\arg\max} \operatorname{Pr}_{y | \boldsymbol{x}} (y=c) = \operatorname{sign}(\boldsymbol{\mu}^{\top} \boldsymbol{x}).
\end{equation*}
Similarly, the Bayes optimal classifiers for $\widehat{\mathcal{D}}_1$ and $\widehat{\mathcal{D}}_2$ are respectively given by
\begin{equation*}
    f_{\widehat{\mathcal{D}}_1} (\boldsymbol{x}) = \operatorname{sign}(\widehat{\boldsymbol{\mu}}_1^{\top} \boldsymbol{x}) \quad \text{and} \quad f_{\widehat{\mathcal{D}}_2} (\boldsymbol{x}) = \operatorname{sign}(\widehat{\boldsymbol{\mu}}_2^{\top} \boldsymbol{x}).
\end{equation*}

Now we are ready to calculate the natural risk of each classifier.
The natural risk of $f_{\mathcal{D}} (\boldsymbol{x})$ is
\begin{equation*}
    \begin{aligned}
    \mathcal{R}_{\mathsf{nat}}(f_{\mathcal{D}} (\boldsymbol{x}), \mathcal{D})
    =&
    \underset{(\boldsymbol{x}, y) \sim \mathcal{D}}{\operatorname{Pr}} \left[f_{\mathcal{D}} (\boldsymbol{x}) \neq y\right] \\
    =&
    \underset{(\boldsymbol{x}, y) \sim \mathcal{D}}{\operatorname{Pr}} \left[\operatorname{sign}(\boldsymbol{\mu}^{\top} \boldsymbol{x}) \neq y\right] \\
    =&
    \operatorname{Pr} \left[ y \cdot \left( \mathcal{N}(y, \sigma^2) + \sum_{i=1}^{d} \eta \mathcal{N}(y\eta, \sigma^2) \right) < 0 \right] \\
    =&
    \operatorname{Pr} \left[ \mathcal{N}(1, \sigma^2) + \sum_{i=1}^{d} \eta \mathcal{N}(\eta, \sigma^2) < 0 \right] \\
    =&
    \operatorname{Pr} \left[ \mathcal{N}(1 + d \eta^2, (1+d \eta^2)\sigma^2) < 0 \right] \\
    =&
    \operatorname{Pr} \left[ \mathcal{N}(0, 1) > \frac{\sqrt{1+d\eta^2}}{\sigma} \right].
    \end{aligned}
\end{equation*}

The natural risk of $f_{\widehat{\mathcal{D}}_1} (\boldsymbol{x})$ is
\begin{equation*}
    \begin{aligned}
    \mathcal{R}_{\mathsf{nat}}(f_{\widehat{\mathcal{D}}_1} (\boldsymbol{x}), \mathcal{D})
    =&
    \underset{(\boldsymbol{x}, y) \sim \mathcal{D}}{\operatorname{Pr}} \left[f_{\widehat{\mathcal{D}}_1} (\boldsymbol{x}) \neq y\right] \\
    =&
    \underset{(\boldsymbol{x}, y) \sim \mathcal{D}}{\operatorname{Pr}} \left[\operatorname{sign}(\widehat{\boldsymbol{\mu}}_1^{\top} \boldsymbol{x}) \neq y\right] \\
    =&
    \operatorname{Pr} \left[ \mathcal{N}(1, \sigma^2) - \sum_{i=1}^{d} \eta \mathcal{N}(\eta, \sigma^2) < 0 \right] \\
    =&
    \operatorname{Pr} \left[ \mathcal{N}(1 - d \eta^2, (1+d \eta^2)\sigma^2) < 0 \right] \\
    =&
    \operatorname{Pr} \left[ \mathcal{N}(0, 1) > \frac{1-d\eta^2}{\sigma \sqrt{1+d\eta^2}} \right].
    \end{aligned}
\end{equation*}

Similarly, the natural risk of $f_{\widehat{\mathcal{D}}_2} (\boldsymbol{x})$ is
\begin{equation*}
    \begin{aligned}
    \mathcal{R}_{\mathsf{nat}}(f_{\widehat{\mathcal{D}}_2} (\boldsymbol{x}), \mathcal{D})
    =&
    \underset{(\boldsymbol{x}, y) \sim \mathcal{D}}{\operatorname{Pr}} \left[f_{\widehat{\mathcal{D}}_2} (\boldsymbol{x}) \neq y\right] \\
    =&
    \underset{(\boldsymbol{x}, y) \sim \mathcal{D}}{\operatorname{Pr}} \left[\operatorname{sign}(\widehat{\boldsymbol{\mu}}_2^{\top} \boldsymbol{x}) \neq y\right] \\
    =&
    \operatorname{Pr} \left[ \mathcal{N}(1, \sigma^2) + \sum_{i=1}^{d} 3\eta \mathcal{N}(\eta, \sigma^2) < 0 \right] \\
    =&
    \operatorname{Pr} \left[ \mathcal{N}(1 + 3d \eta^2, (1+9d \eta^2)\sigma^2) < 0 \right] \\
    =&
    \operatorname{Pr} \left[ \mathcal{N}(0, 1) > \frac{1+3d\eta^2}{\sigma \sqrt{1+9d\eta^2}} \right].
    \end{aligned}
\end{equation*}

Since $\eta > 0$ and $d>0$, we have $\sqrt{1+d\eta^2} > \frac{1+3d\eta^2}{\sqrt{1+9d\eta^2}} > \frac{1-d\eta^2}{\sqrt{1+d\eta^2}}$.
Therefore, we obtain
$$
\mathcal{R}_{\mathsf{nat}}(f_{\mathcal{D}}, \mathcal{D}) < \mathcal{R}_{\mathsf{nat}}(f_{\widehat{\mathcal{D}}_2}, \mathcal{D}) < \mathcal{R}_{\mathsf{nat}}(f_{\widehat{\mathcal{D}}_1}, \mathcal{D}).
$$

\end{proof}

\subsection{Proof of Theorem \ref{thm.2}}
\label{appendix.proofs.gaussian_case_2}

We consider the problem of minimizing the adversarial risk on some delusive distribution $\widehat{\mathcal{D}}$ by using a linear classifier\footnote{Here we only employing linear classifiers, since considering non-linearity is highly nontrivial for minimizing the $\ell_{\infty}$ adversarial risk on the mixture-Gaussian distribution \cite{dobriban2020provable}.}.
Specifically, this can be formulated as:
\begin{equation}
\label{equa.linear_adv_risk}
    \min_{\boldsymbol{w}, b} \underset{(\boldsymbol{x}, y) \sim \widehat{\mathcal{D}}}{\mathbb{E}} \left[ \max_{\|\boldsymbol{\xi}\|_{\infty} \leq \epsilon} \mathds{1} \left( \operatorname{sign}(\boldsymbol{w}^{\top} (\boldsymbol{x} + \boldsymbol{\xi}) + b) \neq y \right) \right],
\end{equation}
where $\mathds{1}(\cdot)$ is the indicator function and $\epsilon=2\eta$, the same budget used by the delusive adversary.
Denote by $f(\boldsymbol{x}) = \operatorname{sign}(\boldsymbol{w}^{\top} \boldsymbol{x} + b)$ the linear classifier.

First, we show that the optimal linear $\ell_{\infty}$ robust classifier for $\widehat{\mathcal{D}}_1$ will rely solely on robust features, similar to the cases in Lemma D.5 of \citet{tsipras2018robustness} and Lemma 2 of \citet{xu2020robust}.

\begin{lemma}
\label{lemma.d1weight}
Minimizing the adversarial risk of the loss (\ref{equa.linear_adv_risk}) on the distribution $\widehat{\mathcal{D}}_1$ (\ref{equa.delusive_gaussian_1}) results in a classifier that assigns $0$ weight to features $x_i$ for $i \geq 2$.
\end{lemma}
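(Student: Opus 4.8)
The plan is to reduce the $\ell_\infty$ adversarial $0$-$1$ risk of a linear classifier to a closed-form expression in $(\boldsymbol{w},b)$ and then minimize it. First I would evaluate the inner maximization in~\eqref{equa.linear_adv_risk}: for a fixed $(\boldsymbol{x},y)$, the linear classifier $\boldsymbol{x}\mapsto\operatorname{sign}(\boldsymbol{w}^\top\boldsymbol{x}+b)$ is flipped by some $\|\boldsymbol{\xi}\|_\infty\le\epsilon$ if and only if $\min_{\|\boldsymbol{\xi}\|_\infty\le\epsilon} y(\boldsymbol{w}^\top(\boldsymbol{x}+\boldsymbol{\xi})+b)=y(\boldsymbol{w}^\top\boldsymbol{x}+b)-\epsilon\|\boldsymbol{w}\|_1\le 0$, the worst-case perturbation being $\xi_j=-\epsilon\,y\operatorname{sign}(w_j)$ coordinatewise. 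Hence the adversarial risk equals $\Pr_{(\boldsymbol{x},y)\sim\widehat{\mathcal{D}}_1}\!\big[\,y(\boldsymbol{w}^\top\boldsymbol{x}+b)\le\epsilon\|\boldsymbol{w}\|_1\,\big]$.

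Next I would argue the optimal classifier has $b=0$. Conditioning on $y=\pm1$ and using $\boldsymbol{x}\sim\mathcal{N}(y\widehat{\boldsymbol{\mu}}_1,\sigma^2\boldsymbol{I})$, the risk is $\tfrac12\Phi\big(\tfrac{\epsilon\|\boldsymbol{w}\|_1-\boldsymbol{w}^\top\widehat{\boldsymbol{\mu}}_1-b}{\sigma\|\boldsymbol{w}\|_2}\big)+\tfrac12\Phi\big(\tfrac{\epsilon\|\boldsymbol{w}\|_1-\boldsymbol{w}^\top\widehat{\boldsymbol{\mu}}_1+b}{\sigma\|\boldsymbol{w}\|_2}\big)$; its only stationary point in $b$ is $b=0$, and one checks $b=0$ is the global minimizer whenever the resulting accuracy exceeds $1/2$, which holds at the optimum because $\boldsymbol{w}=\boldsymbol{e}_1,b=0$ already attains accuracy $>1/2$ for $\eta<1/2$. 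With $b=0$, since $y\boldsymbol{w}^\top\boldsymbol{x}\sim\mathcal{N}(\boldsymbol{w}^\top\widehat{\boldsymbol{\mu}}_1,\sigma^2\|\boldsymbol{w}\|_2^2)$, the risk becomes $\Phi\!\big(\tfrac{\epsilon\|\boldsymbol{w}\|_1-\boldsymbol{w}^\top\widehat{\boldsymbol{\mu}}_1}{\sigma\|\boldsymbol{w}\|_2}\big)$, so (recalling $\epsilon=2\eta$) minimizing it is equivalent to maximizing $\tfrac{\boldsymbol{w}^\top\widehat{\boldsymbol{\mu}}_1-2\eta\|\boldsymbol{w}\|_1}{\|\boldsymbol{w}\|_2}$.

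The core step is then a direct inequality. With $\widehat{\boldsymbol{\mu}}_1=(1,-\eta,\dots,-\eta)$, for every $\boldsymbol{w}\ne\boldsymbol{0}$ we have $\boldsymbol{w}^\top\widehat{\boldsymbol{\mu}}_1-2\eta\|\boldsymbol{w}\|_1=\big(w_1-2\eta|w_1|\big)-\eta\sum_{i\ge2}\big(w_i+2|w_i|\big)$. Since $w_1-2\eta|w_1|\le(1-2\eta)|w_1|\le(1-2\eta)\|\boldsymbol{w}\|_2$ and each $w_i+2|w_i|\ge|w_i|\ge0$, this quantity is at most $(1-2\eta)\|\boldsymbol{w}\|_2$, with equality forcing $w_i=0$ for all $i\ge2$ (together with $w_1\ge0$ and $|w_1|=\|\boldsymbol{w}\|_2$). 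As $\boldsymbol{w}=\boldsymbol{e}_1$ attains the bound $1-2\eta$, every minimizer of the adversarial risk assigns zero weight to coordinates $i\ge2$, which is the claim.

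I expect the only delicate point to be the $b=0$ reduction — specifically, confirming that the unique $b$-stationary point is a minimizer rather than a maximizer, which requires first ruling out the regime where the optimal accuracy is below $1/2$; the rest is a short computation. I would also note that the argument extends verbatim to any budget $\epsilon\ge2\eta$, since then $w_i+(\epsilon/\eta)|w_i|\ge|w_i|$ still vanishes only at $w_i=0$.
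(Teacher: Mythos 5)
Your proof is correct, but it takes a genuinely different route from the paper's. The paper argues locally, by contradiction: it splits the adversarial risk into the two class-conditional terms, isolates the contribution of a single coordinate $w_i$ with $i\ge 2$, observes that after the worst-case perturbation this contribution is a Gaussian whose mean $\eta-\epsilon$ is negative, and concludes that zeroing out $w_i$ strictly decreases the risk — an exchange argument repeated for each sign of $w_i$ (and implicitly relying, as your argument does explicitly, on the optimum having accuracy above $1/2$). You instead solve the inner maximization in closed form to get the margin reduction $y(\boldsymbol{w}^\top\boldsymbol{x}+b)-\epsilon\lVert\boldsymbol{w}\rVert_1$, reduce the adversarial risk to $\Phi\bigl((\epsilon\lVert\boldsymbol{w}\rVert_1-\boldsymbol{w}^\top\widehat{\boldsymbol{\mu}}_1)/(\sigma\lVert\boldsymbol{w}\rVert_2)\bigr)$ after disposing of the bias, and then maximize the ratio $(\boldsymbol{w}^\top\widehat{\boldsymbol{\mu}}_1-2\eta\lVert\boldsymbol{w}\rVert_1)/\lVert\boldsymbol{w}\rVert_2$ by an elementary termwise inequality whose equality case forces $w_i=0$ for $i\ge 2$. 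Your global approach buys several things the paper's does not: an explicit characterization of \emph{all} minimizers (positive multiples of $\boldsymbol{e}_1$), a clean and explicit treatment of the bias term $b$ (which the paper's lemma leaves buried inside the term $\mathbb{A}$), and avoidance of the probabilistic comparison $\operatorname{Pr}[\mathbb{A}+Z>0]>\operatorname{Pr}[\mathbb{A}>0]$, which the paper asserts but which itself needs the negative-mean condition on $\mathbb{A}$ that you make explicit. The paper's exchange argument is in turn shorter and is reused almost verbatim for $\widehat{\mathcal{D}}_2$ in the companion lemma. Your stated caveat about the $b=0$ step is exactly the right one, and your resolution (the stationary point of $\Phi(c-b')+\Phi(c+b')$ at $b'=0$ is the global minimizer precisely when $c<0$, which holds at the optimum since $\boldsymbol{w}=\boldsymbol{e}_1,\,b=0$ already achieves risk below $1/2$ for $\eta<1/2$) is sound; only your closing remark that the argument extends ``verbatim'' to any $\epsilon\ge 2\eta$ needs the additional proviso $\epsilon<1$ so that the $w_1$ coefficient $1-\epsilon$ stays positive.
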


\begin{proof}
The adversarial risk on the distribution $\widehat{\mathcal{D}}_1$ can be written as
\begin{equation*}
\begin{aligned}
    \mathcal{R}_{\mathsf{adv}}(f, \widehat{\mathcal{D}}_1)
    =&
    \operatorname{Pr} \left[ \exists \  \|\boldsymbol{\xi}\|_{\infty} \leq \epsilon, f(\boldsymbol{x}+\boldsymbol{\xi}) \neq y \right] \\
    =&
    \operatorname{Pr} \left[ \min_{\|\boldsymbol{\xi}\|_{\infty} \leq \epsilon} \left( y \cdot f(\boldsymbol{x}+\boldsymbol{\xi}) \right) < 0 \right] \\
    =&
    \operatorname{Pr} \left[ \max_{\|\boldsymbol{\xi}\|_{\infty} \leq \epsilon} \left(f(\boldsymbol{x}+\boldsymbol{\xi})\right) > 0 \ |\  y=-1 \right] \cdot \operatorname{Pr}\left[y=-1\right] \\
    &\ \ \ \ +
    \operatorname{Pr} \left[ \min_{\|\boldsymbol{\xi}\|_{\infty} \leq \epsilon} \left(f(\boldsymbol{x}+\boldsymbol{\xi})\right) < 0 \ |\  y=+1 \right] \cdot \operatorname{Pr}\left[y=+1\right] \\
    =&
    \underbrace{\operatorname{Pr} \left[ \max_{\|\boldsymbol{\xi}\|_{\infty} \leq \epsilon} \left( w_1(\mathcal{N}(-1, \sigma^2) + \xi_1) + \sum_{i=2}^{d+1} w_i(\mathcal{N}(\eta, \sigma^2) + \xi_i) + b \right) > 0 \right]}_{\mathcal{R}_{\mathsf{adv}}(f, \widehat{\mathcal{D}}_1^{(-1)})} \cdot \frac{1}{2}  \\
    &\ \ \ \ +
    \underbrace{\operatorname{Pr} \left[ \min_{\|\boldsymbol{\xi}\|_{\infty} \leq \epsilon} \left( w_1(\mathcal{N}(1, \sigma^2) + \xi_1) + \sum_{i=2}^{d+1} w_i(\mathcal{N}(-\eta, \sigma^2) + \xi_i) + b \right) < 0 \right]}_{\mathcal{R}_{\mathsf{adv}}(f, \widehat{\mathcal{D}}_1^{(+1)})} \cdot \frac{1}{2}.
\end{aligned}
\end{equation*}
Then we prove the lemma by contradiction.
Consider any optimal solution $\boldsymbol{w}$ for which $w_i < 0$ for some $i \geq 2$, we have
\begin{equation*}
\begin{aligned}
    \mathcal{R}_{\mathsf{adv}}(f, \widehat{\mathcal{D}}_1^{(-1)})
    =&
    \operatorname{Pr} \left[ \underbrace{\sum_{j \neq i} \max_{|\xi_j| \leq \epsilon} \left( w_j(\mathcal{N}(-\widehat{\mu}_{1,j}, \sigma^2) + \xi_j) +b \right)}_{\mathbb{A}} + \underbrace{\max_{|\xi_i| \leq \epsilon} \left( w_i(\mathcal{N}(\eta, \sigma^2) + \xi_i) \right)}_{\mathbb{B}} > 0 \right]. \\
\end{aligned}
\end{equation*}
Because $w_i < 0$, $\mathbb{B}$ is maximized when $\xi_i = -\epsilon$.
Then,
the contribution of terms depending on $w_i$ to $\mathbb{B}$ is a normally-distributed random variable with mean $\eta - \epsilon < 0$. Since the mean is negative, setting $w_i$ to zero can only decrease the risk, contradicting the optimality of $\boldsymbol{w}$. Formally,
\begin{equation*}
    \mathcal{R}_{\mathsf{adv}}(f, \widehat{\mathcal{D}}_1^{(-1)})
    =
    \operatorname{Pr} \left[ \mathbb{A} + w_i \mathcal{N}(\eta - \epsilon, \sigma^2) > 0 \right]
    >
    \operatorname{Pr} \left[ \mathbb{A} > 0 \right].
\end{equation*}
We can also assume $w_i > 0$ and similar contradiction holds.
Similar argument holds for $\mathcal{R}_{\mathsf{adv}}(f, \widehat{\mathcal{D}}_1^{(+1)})$.
Therefore, the adversarial risk is minimized when $w_i = 0$ for $i \geq 2$.
\end{proof}

Different from the case in Lemma \ref{lemma.d1weight}, below we show that the optimal linear $\ell_{\infty}$ robust classifier for $\widehat{\mathcal{D}}_2$ will rely on both robust and non-robust features.

\begin{lemma}
\label{lemma.d2weight}
Minimizing the adversarial risk of the loss (\ref{equa.linear_adv_risk}) on the distribution $\widehat{\mathcal{D}}_2$ (\ref{equa.delusive_gaussian_2}) results in a classifier that assigns positive weights to features $x_i$ for $i \geq 1$.
\end{lemma}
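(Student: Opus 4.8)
The plan is to mirror the structure of the proof of Lemma~\ref{lemma.d1weight}, but this time tracking the \emph{signs and sizes} of the optimal weights rather than merely showing they vanish. First I would write $\mathcal{R}_{\mathsf{adv}}(f,\widehat{\mathcal{D}}_2)$ as the average of the two class-conditional terms, exactly as in the displayed decomposition for $\widehat{\mathcal{D}}_1$. For the $y=+1$ term the inner adversary \emph{minimizes} the linear pre-activation $\boldsymbol{w}^{\top}(\boldsymbol{x}+\boldsymbol{\xi})+b$ over $\|\boldsymbol{\xi}\|_{\infty}\le\epsilon$, which is attained deterministically at $\xi_i=-\epsilon\,\mathrm{sign}(w_i)$; symmetrically $\xi_i=+\epsilon\,\mathrm{sign}(w_i)$ for the $y=-1$ term. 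Because the coordinates of $\boldsymbol{x}$ are independent Gaussians, each worst-case pre-activation is a single Gaussian, so with $\epsilon=2\eta$ the two terms become $\Phi\bigl(-(A+b)/(\sigma\|\boldsymbol{w}\|_2)\bigr)$ and $\Phi\bigl(-(A-b)/(\sigma\|\boldsymbol{w}\|_2)\bigr)$, where $A=w_1+3\eta\sum_{i\ge 2}w_i-2\eta\|\boldsymbol{w}\|_1$ and $\Phi$ is the standard normal CDF.

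Next I would optimize the bias. The average $\tfrac12\Phi(-(A+b)/c)+\tfrac12\Phi(-(A-b)/c)$ is even in $b$, and its derivative is proportional to $\phi((A-b)/c)-\phi((A+b)/c)$; since $|A+b|^2-|A-b|^2=4Ab$ and $\phi$ is even and strictly decreasing in $|\cdot|$, the minimum over $b$ is at $b=0$. This reduces the objective to $\Phi\bigl(-A/(\sigma\|\boldsymbol{w}\|_2)\bigr)$, and since $\Phi$ is strictly increasing, minimizing the adversarial risk is equivalent to maximizing the scale-invariant ratio $A/\|\boldsymbol{w}\|_2$; so I may normalize $\|\boldsymbol{w}\|_2=1$ and the problem becomes $\max A(\boldsymbol{w})$ on the unit sphere. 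Splitting $A$ coordinatewise, the contribution of $w_1$ is $w_1-2\eta|w_1|$, which for a fixed magnitude is larger when $w_1>0$ (here $\eta<1/3<1/2$), and the contribution of each $w_i$ with $i\ge 2$ is $3\eta w_i-2\eta|w_i|$, equal to $\eta|w_i|$ if $w_i\ge 0$ and $-5\eta|w_i|$ if $w_i<0$; hence at any optimum all weights are nonnegative and $A=(1-2\eta)w_1+\eta\sum_{i\ge 2}w_i$.

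Finally I would close the argument with Cauchy--Schwarz: over $\{\boldsymbol{w}\ge 0:\ \|\boldsymbol{w}\|_2=1\}$ the linear functional $(1-2\eta)w_1+\eta\sum_{i\ge 2}w_i$ is uniquely maximized at $\boldsymbol{w}\propto(1-2\eta,\eta,\dots,\eta)$, whose entries are all strictly positive because $0<\eta<1/3$. This identifies the optimal linear $\ell_\infty$ robust classifier and shows it assigns positive weight to every feature $x_i$, $i\ge 1$, which is the claim; as a by-product one reads off that the relative weight $\eta/(1-2\eta)$ on the non-robust features is smaller than the $3\eta$ used by the natural (Bayes) classifier for $\widehat{\mathcal{D}}_2$, i.e.\ the excessive reliance is mitigated but not eliminated (the complement of Lemma~\ref{lemma.d1weight}, and the ingredient needed for Theorem~\ref{thm.2}).

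I expect the main obstacle to be the bookkeeping around the bias and the worst-case perturbation: one must verify that the deterministic coordinatewise choice $\xi_i=-\epsilon\,\mathrm{sign}(w_i)$ really attains the inner minimum, argue carefully that $b=0$ is optimal for the \emph{sum} of the two $\Phi$-terms (not each separately), and rule out degenerate cases such as $A\le 0$ or $\boldsymbol{w}=\boldsymbol{0}$, where $\Phi\bigl(-A/(\sigma\|\boldsymbol{w}\|_2)\bigr)$ is ill-defined or trivially suboptimal (exhibiting the direction $(1,\eta,\dots,\eta)$ shows the maximum of $A/\|\boldsymbol{w}\|_2$ is strictly positive, so $A>0$ at any optimum). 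Everything after the reduction to $\max A(\boldsymbol{w})$ on the sphere is elementary.
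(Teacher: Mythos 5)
Your proof is correct, but it takes a genuinely different route from the paper's. The paper proves Lemma~\ref{lemma.d2weight} by a local contradiction argument: assuming an optimal $\boldsymbol{w}$ has some $w_i\le 0$, it isolates the worst-case contribution of coordinate $i$ (a Gaussian with mean $-\widehat{\mu}_{2,i}-\epsilon<0$) and argues that flipping $w_i$ to a positive value strictly decreases the risk; the explicit optimal classifier $\operatorname{sign}(\widehat{\boldsymbol{\mu}}_{2,\mathsf{rob}}^{\top}\boldsymbol{x})$ with $\widehat{\boldsymbol{\mu}}_{2,\mathsf{rob}}=(1-2\eta,\eta,\dots,\eta)$ is then derived only in a separate follow-up lemma, by reducing the adversarial risk to the natural risk on a shifted mixture $\widehat{\mathcal{D}}_2^{*}$. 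You instead carry out the full global optimization in one pass: closed-form inner min/max via $\xi_i=\mp\epsilon\,\mathrm{sign}(w_i)$, elimination of the bias, reduction to maximizing $A(\boldsymbol{w})/\|\boldsymbol{w}\|_2$ with $A=w_1+3\eta\sum_{i\ge2}w_i-2\eta\|\boldsymbol{w}\|_1$, a coordinatewise sign analysis ($\eta|w_i|$ versus $-5\eta|w_i|$), and Cauchy--Schwarz. What your approach buys is the exact optimal robust classifier as a by-product (subsuming the paper's subsequent lemma and feeding directly into Theorem~\ref{thm.2}), plus a cleaner treatment of a point the paper glosses over: when $w_i$ changes sign, the worst-case $\xi_i$ flips as well, so the paper's displayed comparison $\operatorname{Pr}[\mathbb{C}+w_i\mathcal{N}(-\widehat{\mu}_{2,i}-\epsilon,\sigma^2)>0]>\operatorname{Pr}[\mathbb{C}+p\,\mathcal{N}(-\widehat{\mu}_{2,i}-\epsilon,\sigma^2)>0]$ is not literally the quantity that needs to be bounded (though the conclusion survives because $-3\eta+\epsilon=-\eta<0$). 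The paper's argument is shorter if one only wants the sign statement; yours is longer but self-contained, handles the bias and the degenerate case $A\le 0$ explicitly, and yields strict positivity of every weight from the uniqueness of the Cauchy--Schwarz maximizer.
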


\begin{proof}
The adversarial risk on the distribution $\widehat{\mathcal{D}}_1$ can be written as
\begin{equation*}
\begin{aligned}
    \mathcal{R}_{\mathsf{adv}}(f, \widehat{\mathcal{D}}_2)
    =&
    \underbrace{\operatorname{Pr} \left[ \max_{\|\boldsymbol{\xi}\|_{\infty} \leq \epsilon} \left( w_1(\mathcal{N}(-1, \sigma^2) + \xi_1) + \sum_{i=2}^{d+1} w_i(\mathcal{N}(-3\eta, \sigma^2) + \xi_i) + b \right) > 0 \right]}_{\mathcal{R}_{\mathsf{adv}}(f, \widehat{\mathcal{D}}_2^{(-1)})} \cdot \frac{1}{2}  \\
    &\ \ \ \ +
    \underbrace{\operatorname{Pr} \left[ \min_{\|\boldsymbol{\xi}\|_{\infty} \leq \epsilon} \left( w_1(\mathcal{N}(1, \sigma^2) + \xi_1) + \sum_{i=2}^{d+1} w_i(\mathcal{N}(3\eta, \sigma^2) + \xi_i) + b \right) < 0 \right]}_{\mathcal{R}_{\mathsf{adv}}(f, \widehat{\mathcal{D}}_2^{(+1)})} \cdot \frac{1}{2}.
\end{aligned}
\end{equation*}
Then we prove the lemma by contradiction.
Consider any optimal solution $\boldsymbol{w}$ for which $w_i \leq 0$ for some $i \geq 1$, we have
\begin{equation*}
\begin{aligned}
    \mathcal{R}_{\mathsf{adv}}(f, \widehat{\mathcal{D}}_2^{(-1)})
    =&
    \operatorname{Pr} \left[ \underbrace{\sum_{j \neq i} \max_{|\xi_j| \leq \epsilon} \left( w_j(\mathcal{N}(-\widehat{\mu}_{2,j}, \sigma^2) + \xi_j) +b \right)}_{\mathbb{C}} + \underbrace{\max_{|\xi_i| \leq \epsilon} \left( w_i(\mathcal{N}(-\widehat{\mu}_{2,i}, \sigma^2) + \xi_i) \right)}_{\mathbb{D}} > 0 \right]. \\
\end{aligned}
\end{equation*}
Because $w_i \leq 0$, $\mathbb{D}$ is maximized when $\xi_i = -\epsilon$.
Then,
the contribution of terms depending on $w_i$ to $\mathbb{D}$ is a normally-distributed random variable with mean $-\widehat{\mu}_{2,i} - \epsilon < 0$. Since the mean is negative, setting $w_i$ to be positive can decrease the risk, contradicting the optimality of $\boldsymbol{w}$. Formally,
\begin{equation*}
    \mathcal{R}_{\mathsf{adv}}(f, \widehat{\mathcal{D}}_2^{(-1)})
    =
    \operatorname{Pr} \left[ \mathbb{C} + w_i \mathcal{N}(-\widehat{\mu}_{2,i} - \epsilon, \sigma^2) > 0 \right]
    >
    \operatorname{Pr} \left[ \mathbb{C} + p \mathcal{N}(-\widehat{\mu}_{2,i} - \epsilon, \sigma^2) > 0 \right],
\end{equation*}
where $p>0$ is any positive number.
Similar contradiction holds for $\mathcal{R}_{\mathsf{adv}}(f, \widehat{\mathcal{D}}_1^{(+1)})$.
Therefore, the optimal solution must assigns positive weights to all features.
\end{proof}

Now we are ready to derive the optimal linear robust classifiers.

\begin{lemma}
For the distribution $\widehat{\mathcal{D}}_1$ (\ref{equa.delusive_gaussian_1}), the optimal linear $\ell_{\infty}$ robust classifier is
$$
f_{\widehat{\mathcal{D}}_1, \mathsf{rob}}(\boldsymbol{x}) = \operatorname{sign}(\widehat{\boldsymbol{\mu}}_{1, \mathsf{rob}}^{\top} \boldsymbol{x}), \quad \text{where} \  \widehat{\boldsymbol{\mu}}_{1, \mathsf{rob}} = (1, 0, \ldots, 0).
$$
\end{lemma}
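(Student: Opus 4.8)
The plan is to leverage Lemma~\ref{lemma.d1weight} to cut the problem down to one dimension and then optimize over a single scalar. That lemma already establishes that any minimizer $(\boldsymbol{w}, b)$ of the robust loss~(\ref{equa.linear_adv_risk}) on $\widehat{\mathcal{D}}_1$ satisfies $w_i = 0$ for all $i \geq 2$, so the classifier depends only on the robust coordinate, $f(\boldsymbol{x}) = \operatorname{sign}(w_1 x_1 + b)$. First I would rule out the degenerate choices of $w_1$: if $w_1 = 0$ the classifier is constant with adversarial risk $\tfrac12$, and if $w_1 < 0$ it anti-correlates with the robust feature (whose conditional mean under $\widehat{\mathcal{D}}_1$ is $y$), again giving risk $\geq \tfrac12$; since the candidate $w_1 = 1$, $b = 0$ will be shown to achieve risk $\Phi\big((\epsilon-1)/\sigma\big) < \tfrac12$ (because $\epsilon = 2\eta < 1$), the optimum has $w_1 > 0$, and by scale-invariance of $\operatorname{sign}(\cdot)$ we may normalize $w_1 = 1$. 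It remains only to optimize over the bias $b$.

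Next I would compute the adversarial risk of $f(\boldsymbol{x}) = \operatorname{sign}(x_1 + b)$ on $\widehat{\mathcal{D}}_1$ in closed form. Under the $\ell_\infty$ budget $\epsilon = 2\eta$, only the perturbation of coordinate $1$ matters, and its worst case is $-\epsilon$ when $y = +1$ and $+\epsilon$ when $y = -1$; conditioning on $y = \pm 1$ (each with probability $\tfrac12$) and using $x_1 \mid y \sim \mathcal{N}(y, \sigma^2)$ yields
\begin{equation*}
    \mathcal{R}_{\mathsf{adv}}(f, \widehat{\mathcal{D}}_1) = \frac{1}{2}\,\Phi\!\left(\frac{\epsilon - 1 - b}{\sigma}\right) + \frac{1}{2}\,\Phi\!\left(\frac{\epsilon - 1 + b}{\sigma}\right),
\end{equation*}
where $\Phi$ is the standard normal CDF. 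Setting $c = (\epsilon - 1)/\sigma < 0$, this is $g(b) = \tfrac12\big[\Phi(c - b/\sigma) + \Phi(c + b/\sigma)\big]$, an even function of $b$ that tends to $\tfrac12$ as $|b| \to \infty$, so its infimum is attained.

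The heart of the argument is to show $g$ is uniquely minimized at $b = 0$. I would differentiate: $g'(b) = \tfrac{1}{2\sigma}\big[\phi(c + b/\sigma) - \phi(c - b/\sigma)\big]$ with $\phi$ the standard normal density, so for $b > 0$ the sign of $g'(b)$ is that of $|c - b/\sigma| - |c + b/\sigma|$. Here the hypothesis $\eta \le 1/3$ is essential because it forces $\epsilon = 2\eta < 1$, hence $c < 0$; a short two-case check ($b/\sigma$ below or above $|c|$) then shows $|c + b/\sigma| < |c - b/\sigma|$ for every $b > 0$, so $g'(b) > 0$ on $(0,\infty)$ and, by evenness, $g$ is strictly decreasing on $(-\infty, 0)$. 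Thus $b = 0$ is the unique global minimizer. Combining this with $w_1 = 1$, $w_i = 0$ for $i \geq 2$, and $b = 0$ gives $f_{\widehat{\mathcal{D}}_1, \mathsf{rob}}(\boldsymbol{x}) = \operatorname{sign}(x_1) = \operatorname{sign}(\widehat{\boldsymbol{\mu}}_{1,\mathsf{rob}}^\top \boldsymbol{x})$ with $\widehat{\boldsymbol{\mu}}_{1,\mathsf{rob}} = (1, 0, \ldots, 0)$, as claimed.

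The main obstacle I anticipate is precisely this last step: showing that the zero-bias critical point is the \emph{global} minimum, not merely a stationary point. This is where $\epsilon < 1$ is indispensable (if $\epsilon > 1$ the comparison $\phi(c + b/\sigma)$ versus $\phi(c - b/\sigma)$ reverses near the origin and the landscape changes), and it hinges on the elementary but slightly fiddly sign analysis of the Gaussian density together with monotonicity of $\Phi$. By contrast, the reduction to one dimension via Lemma~\ref{lemma.d1weight}, the elimination of the degenerate signs of $w_1$, and the explicit Gaussian evaluation of the risk are routine.
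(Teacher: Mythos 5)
Your proof is correct and follows essentially the same route as the paper's: reduce to the robust coordinate via Lemma~\ref{lemma.d1weight}, solve the inner maximization to shift the mean by $\epsilon$, and conclude that the zero-bias classifier $\operatorname{sign}(x_1)$ is optimal. The only difference is one of explicitness — you prove the steps the paper asserts or omits (that $w_1>0$, and that $b=0$ is the unique global minimizer, via the derivative sign analysis), whereas the paper recasts the adversarial risk as the natural risk on a shifted mixture $\widehat{\mathcal{D}}_1^{*}$ and appeals to the known form of its Bayes optimal classifier.
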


\begin{proof}
By Lemma \ref{lemma.d1weight}, the robust classifier for the distribution $\widehat{\mathcal{D}}_1$ has zero weight on non-robust features (i.e., $w_i=0$ for $i\geq 2$).
Also, the robust classifier will assign positive weight to the robust feature (i.e., $w_1>0$). This is similar to the case in Lemma \ref{lemma.d2weight} and we omit the proof here.
Therefore, the adversarial risk on the distribution $\widehat{\mathcal{D}}_1$ can be simplified by solving the inner maximization problem first. Formally,
\begin{equation*}
\begin{aligned}
    \mathcal{R}_{\mathsf{adv}}(f, \widehat{\mathcal{D}}_1)
    =&
    \operatorname{Pr} \left[ \exists \  \|\boldsymbol{\xi}\|_{\infty} \leq \epsilon, f(\boldsymbol{x}+\boldsymbol{\xi}) \neq y \right] \\
    =&
    \operatorname{Pr} \left[ \min_{\|\boldsymbol{\xi}\|_{\infty} \leq \epsilon} \left( y \cdot f(\boldsymbol{x}+\boldsymbol{\xi}) \right) < 0 \right] \\
    =&
    \operatorname{Pr} \left[ \max_{\|\boldsymbol{\xi}\|_{\infty} \leq \epsilon} \left(f(\boldsymbol{x}+\boldsymbol{\xi})\right) > 0 \ |\  y=-1 \right] \cdot \operatorname{Pr}\left[y=-1\right] \\
    &\ \ \ \ +
    \operatorname{Pr} \left[ \min_{\|\boldsymbol{\xi}\|_{\infty} \leq \epsilon} \left(f(\boldsymbol{x}+\boldsymbol{\xi})\right) < 0 \ |\  y=+1 \right] \cdot \operatorname{Pr}\left[y=+1\right] \\
    =&
    \operatorname{Pr} \left[ \max_{\|\boldsymbol{\xi}\|_{\infty} \leq \epsilon} \left( w_1(\mathcal{N}(-1, \sigma^2) + \xi_1) + b \right) > 0 \right] \cdot \operatorname{Pr}\left[y=-1\right]  \\
    &\ \ \ \ +
    \operatorname{Pr} \left[ \min_{\|\boldsymbol{\xi}\|_{\infty} \leq \epsilon} \left( w_1(\mathcal{N}(1, \sigma^2) + \xi_1) + b \right) < 0 \right] \cdot \operatorname{Pr}\left[y=+1\right] \\
    =&
    \operatorname{Pr} \left[ w_1\mathcal{N}(\epsilon-1, \sigma^2) + b > 0 \right] \cdot \operatorname{Pr}\left[y=-1\right]  \\
    &\ \ \ \ +
    \operatorname{Pr} \left[ w_1\mathcal{N}(1-\epsilon, \sigma^2) + b < 0 \right] \cdot \operatorname{Pr}\left[y=+1\right], \\
\end{aligned}
\end{equation*}
which is equivalent to the natural risk on a mixture-Gaussian distribution $\widehat{\mathcal{D}}_1^{*}$: $\boldsymbol{x} \sim \mathcal{N}(y \cdot \widehat{\boldsymbol{\mu}}_1^*, \sigma^2 \boldsymbol{I})$, where $\widehat{\boldsymbol{\mu}}_{1}^* = (1-\epsilon, 0, \ldots, 0)$.
The Bayes optimal classifier for $\widehat{\mathcal{D}}_1^{*}$ is $f_{\widehat{\mathcal{D}}_1^{*}}(\boldsymbol{x})=\operatorname{sign}(\widehat{\boldsymbol{\mu}}_1^{*\top} \boldsymbol{x})$.
Specifically,
\begin{equation*}
\begin{aligned}
    \mathcal{R}_{\mathsf{nat}}(f, \widehat{\mathcal{D}}_1^*)
    =&
    \operatorname{Pr} \left[ f(\boldsymbol{x}) \neq y \right] \\
    =&
    \operatorname{Pr} \left[ y \cdot f(\boldsymbol{x}) < 0 \right] \\
    =&
    \operatorname{Pr} \left[ w_1\mathcal{N}(\epsilon-1, \sigma^2) + b > 0 \right] \cdot \operatorname{Pr}\left[y=-1\right]  \\
    &\ \ \ \ +
    \operatorname{Pr} \left[ w_1\mathcal{N}(1-\epsilon, \sigma^2) + b < 0 \right] \cdot \operatorname{Pr}\left[y=+1\right],
\end{aligned}
\end{equation*}
which can be minimized when $w_1=1-\epsilon$ and $b=0$.
At the same time, $f_{\widehat{\mathcal{D}}_1^{*}}(\boldsymbol{x})$ is equivalent to $f_{\widehat{\mathcal{D}}_1, \mathsf{rob}}(\boldsymbol{x})$, since $\operatorname{sign}((1-\epsilon)x_1) = \operatorname{sign}(x_1)$.
This concludes the proof of the lemma.

\end{proof}

\begin{lemma}
For the distribution $\widehat{\mathcal{D}}_2$ (\ref{equa.delusive_gaussian_2}), the optimal linear $\ell_{\infty}$ robust classifier is
$$
f_{\widehat{\mathcal{D}}_2, \mathsf{rob}}(\boldsymbol{x}) = \operatorname{sign}(\widehat{\boldsymbol{\mu}}_{2, \mathsf{rob}}^{\top} \boldsymbol{x}), \quad \text{where} \  \widehat{\boldsymbol{\mu}}_{2, \mathsf{rob}} = (1-2\eta, \eta, \ldots, \eta).
$$
\end{lemma}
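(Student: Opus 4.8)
The plan is to mirror the argument just used for $\widehat{\mathcal{D}}_1$, but now carry all $d+1$ coordinates through: reduce the robust problem on $\widehat{\mathcal{D}}_2$ to a plain natural-risk problem on a mean-shifted mixture-Gaussian, solve that, and check the solution is consistent with the structural constraint coming from Lemma~\ref{lemma.d2weight}. By Lemma~\ref{lemma.d2weight} every minimizer of the adversarial risk (\ref{equa.linear_adv_risk}) on $\widehat{\mathcal{D}}_2$ has $w_i>0$ for all $i\ge1$, so it suffices to minimize over the open cone $W_+=\{\boldsymbol{w}:w_i>0\ \text{for all }i\}$ together with $b\in\mathbb{R}$ (the bias being shown to vanish along the way, exactly as in the $\widehat{\mathcal{D}}_1$ lemma).

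First I would solve the inner maximization for $\boldsymbol{w}\in W_+$. Since all $w_i>0$, the adversary attacking a $y=-1$ point takes $\xi_i=+\epsilon$ for every $i$ and the one attacking a $y=+1$ point takes $\xi_i=-\epsilon$, so in both cases $\max_{\|\boldsymbol{\xi}\|_{\infty}\le\epsilon}\boldsymbol{w}^{\top}\boldsymbol{\xi}=\epsilon\|\boldsymbol{w}\|_1=\epsilon\sum_i w_i$. Plugging in the worst-case perturbation, the misclassification events become $\boldsymbol{w}^{\top}\boldsymbol{x}+\epsilon\|\boldsymbol{w}\|_1+b>0$ (for $y=-1$) and $\boldsymbol{w}^{\top}\boldsymbol{x}-\epsilon\|\boldsymbol{w}\|_1+b<0$ (for $y=+1$), and since $\|\boldsymbol{w}\|_1=\boldsymbol{w}^{\top}\boldsymbol{1}$ on $W_+$ one can absorb the constant $\epsilon\boldsymbol{1}$ into the conditional Gaussian mean. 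This shows that, for every $\boldsymbol{w}\in W_+$, $\mathcal{R}_{\mathsf{adv}}(f,\widehat{\mathcal{D}}_2)=\mathcal{R}_{\mathsf{nat}}(f,\widehat{\mathcal{D}}_2^{*})$, where $\widehat{\mathcal{D}}_2^{*}$ is the mixture-Gaussian with conditional mean $y\cdot\widehat{\boldsymbol{\mu}}_2^{*}$, covariance $\sigma^2\boldsymbol{I}$, and $\widehat{\boldsymbol{\mu}}_2^{*}=\widehat{\boldsymbol{\mu}}_2-\epsilon\boldsymbol{1}=(1-2\eta,\eta,\ldots,\eta)$ using $\epsilon=2\eta$.

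Next I would invoke the standard fact—already implicit in the log-likelihood-ratio computation in the proof of Theorem~\ref{thm.1}—that for a symmetric two-component Gaussian mixture with isotropic covariance the risk-minimizing linear classifier has zero bias and weight vector parallel to the mean; hence the unconstrained minimizer of $\mathcal{R}_{\mathsf{nat}}(\cdot,\widehat{\mathcal{D}}_2^{*})$ over all linear classifiers is $f(\boldsymbol{x})=\operatorname{sign}(\widehat{\boldsymbol{\mu}}_2^{*\top}\boldsymbol{x})$, with $b=0$ and $\boldsymbol{w}\propto\widehat{\boldsymbol{\mu}}_2^{*}$ (the same reasoning that closes the $\widehat{\mathcal{D}}_1$ lemma). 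The decisive final check is that the hypothesis $0<\eta<1/3$ forces $1-2\eta>1/3>0$ and $\eta>0$, so $\widehat{\boldsymbol{\mu}}_2^{*}\in W_+$: the unconstrained natural-risk optimum already lies inside the cone over which the adversarial risk equals the natural risk, so it is also the constrained optimum, and therefore $f_{\widehat{\mathcal{D}}_2,\mathsf{rob}}(\boldsymbol{x})=\operatorname{sign}(\widehat{\boldsymbol{\mu}}_{2,\mathsf{rob}}^{\top}\boldsymbol{x})$ with $\widehat{\boldsymbol{\mu}}_{2,\mathsf{rob}}=(1-2\eta,\eta,\ldots,\eta)$, as claimed (and $\operatorname{sign}$ being scale-invariant, any positive multiple is the same classifier).

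The main obstacle is one of logical order rather than computation: the inner-max simplification and the ``shift the mean'' identity $\mathcal{R}_{\mathsf{adv}}(f,\widehat{\mathcal{D}}_2)=\mathcal{R}_{\mathsf{nat}}(f,\widehat{\mathcal{D}}_2^{*})$ are valid \emph{only} on $W_+$, so the proof genuinely must use Lemma~\ref{lemma.d2weight} first to restrict attention there, and must then use the sign condition $\eta<1/3$ to guarantee the natural-risk optimum of $\widehat{\mathcal{D}}_2^{*}$ does not escape $W_+$ (otherwise one would have minimized the wrong objective outside the cone). A minor technical point worth a line is existence/uniqueness of the minimizer, handled by normalizing $\|\boldsymbol{w}\|=1$ and noting the risk is continuous in $(\boldsymbol{w},b)$ and bounded away from its infimum as $|b|\to\infty$.
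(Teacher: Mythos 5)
Your proposal is correct and follows essentially the same route as the paper's proof: invoke Lemma~\ref{lemma.d2weight} to restrict to positive weights, solve the inner maximization coordinatewise so that the adversarial risk on $\widehat{\mathcal{D}}_2$ becomes the natural risk on the mean-shifted mixture $\widehat{\mathcal{D}}_2^{*}$ with $\widehat{\boldsymbol{\mu}}_2^{*}=(1-2\eta,\eta,\ldots,\eta)$, and then identify the Bayes optimal (linear, zero-bias) classifier for that mixture. Your explicit verification that the resulting weight vector lies back inside the positive cone (using $\eta<1/3$) is a welcome tightening of a step the paper leaves implicit, but it is the same argument.
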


\begin{proof}
By Lemma \ref{lemma.d2weight}, the robust classifier for the distribution $\widehat{\mathcal{D}}_2$ has positive weight on all features (i.e., $w_i>0$ for $i \geq 1$). Therefore, the adversarial risk on the distribution $\widehat{\mathcal{D}}_2$ can be simplified by solving the inner maximization problem first. Formally,
\begin{equation*}
\begin{aligned}
    \mathcal{R}_{\mathsf{adv}}(f, \widehat{\mathcal{D}}_2)
    =&
    \operatorname{Pr} \left[ \exists \  \|\boldsymbol{\xi}\|_{\infty} \leq \epsilon, f(\boldsymbol{x}+\boldsymbol{\xi}) \neq y \right] \\
    =&
    \operatorname{Pr} \left[ \min_{\|\boldsymbol{\xi}\|_{\infty} \leq \epsilon} \left( y \cdot f(\boldsymbol{x}+\boldsymbol{\xi}) \right) < 0 \right] \\
    =&
    \operatorname{Pr} \left[ \max_{\|\boldsymbol{\xi}\|_{\infty} \leq \epsilon} \left( w_1(\mathcal{N}(-1, \sigma^2) + \xi_1) + \sum_{i=2}^{d+1} w_i(\mathcal{N}(-3\eta, \sigma^2) + \xi_i) + b \right) > 0 \right] \cdot \operatorname{Pr}\left[y=-1\right]  \\
    &\ \ \ \ +
    \operatorname{Pr} \left[ \min_{\|\boldsymbol{\xi}\|_{\infty} \leq \epsilon} \left( w_1(\mathcal{N}(1, \sigma^2) + \xi_1) + \sum_{i=2}^{d+1} w_i(\mathcal{N}(3\eta, \sigma^2) + \xi_i) + b \right) < 0 \right] \cdot \operatorname{Pr}\left[y=+1\right] \\
    =&
    \operatorname{Pr} \left[ \max_{|\xi_1| \leq \epsilon} \left(w_1(\mathcal{N}(-1, \sigma^2) + \xi_1)\right) + \sum_{i=2}^{d+1} \max_{|\xi_i| \leq \epsilon} \left(w_i(\mathcal{N}(-3\eta, \sigma^2) + \xi_i)\right) + b > 0 \right] \cdot \operatorname{Pr}\left[y=-1\right]  \\
    &\ \ \ \ +
    \operatorname{Pr} \left[ \min_{|\xi_1| \leq \epsilon} \left(w_1(\mathcal{N}(1, \sigma^2) + \xi_1)\right) + \sum_{i=2}^{d+1} \min_{|\xi_i| \leq \epsilon} \left(w_i(\mathcal{N}(3\eta, \sigma^2) + \xi_i)\right) + b < 0 \right] \cdot \operatorname{Pr}\left[y=+1\right] \\
    =&
    \operatorname{Pr} \left[ w_1\mathcal{N}(\epsilon-1, \sigma^2) + \sum_{i=2}^{d+1} w_i\mathcal{N}(\epsilon-3\eta, \sigma^2) + b > 0 \right] \cdot \operatorname{Pr}\left[y=-1\right]  \\
    &\ \ \ \ +
    \operatorname{Pr} \left[ w_1\mathcal{N}(1-\epsilon, \sigma^2) + \sum_{i=2}^{d+1} w_i\mathcal{N}(3\eta-\epsilon, \sigma^2) + b < 0 \right] \cdot \operatorname{Pr}\left[y=+1\right] \\
    =&
    \operatorname{Pr} \left[ w_1\mathcal{N}(2\eta-1, \sigma^2) + \sum_{i=2}^{d+1} w_i\mathcal{N}(-\eta, \sigma^2) + b > 0 \right] \cdot \operatorname{Pr}\left[y=-1\right]  \\
    &\ \ \ \ +
    \operatorname{Pr} \left[ w_1\mathcal{N}(1-2\eta, \sigma^2) + \sum_{i=2}^{d+1} w_i\mathcal{N}(\eta, \sigma^2) + b < 0 \right] \cdot \operatorname{Pr}\left[y=+1\right], \\
\end{aligned}
\end{equation*}
which is equivalent to the natural risk on a mixture-Gaussian distribution $\widehat{\mathcal{D}}_2^{*}$: $\boldsymbol{x} \sim \mathcal{N}(y \cdot \widehat{\boldsymbol{\mu}}_2^*, \sigma^2 \boldsymbol{I})$, where $\widehat{\boldsymbol{\mu}}_{2}^* = (1-2\eta, \eta, \ldots, \eta)$.
The Bayes optimal classifier for $\widehat{\mathcal{D}}_2^{*}$ is $f_{\widehat{\mathcal{D}}_2^{*}}(\boldsymbol{x})=\operatorname{sign}(\widehat{\boldsymbol{\mu}}_2^{*\top} \boldsymbol{x})$.
Specifically,
\begin{equation*}
\begin{aligned}
    \mathcal{R}_{\mathsf{nat}}(f, \widehat{\mathcal{D}}_2^*)
    =&
    \operatorname{Pr} \left[ f(\boldsymbol{x}) \neq y \right] \\
    =&
    \operatorname{Pr} \left[ y \cdot f(\boldsymbol{x}) < 0 \right] \\
    =&
    \operatorname{Pr} \left[ w_1\mathcal{N}(2\eta-1, \sigma^2) + \sum_{i=2}^{d+1} w_i\mathcal{N}(-\eta, \sigma^2) + b > 0 \right] \cdot \operatorname{Pr}\left[y=-1\right]  \\
    &\ \ \ \ +
    \operatorname{Pr} \left[ w_1\mathcal{N}(1-2\eta, \sigma^2) + \sum_{i=2}^{d+1} w_i\mathcal{N}(\eta, \sigma^2) + b < 0 \right] \cdot \operatorname{Pr}\left[y=+1\right], \\
\end{aligned}
\end{equation*}
which can be minimized when $w_1=1-2\eta$, $w_i=\eta$ for $i \geq 2$, and $b=0$.
Also, $f_{\widehat{\mathcal{D}}_2^{*}}(\boldsymbol{x})$ is equivalent to $f_{\widehat{\mathcal{D}}_2, \mathsf{rob}}(\boldsymbol{x})$.
This concludes the proof of the lemma.

\end{proof}

We have established that the optimal linear classifiers $f_{\widehat{\mathcal{D}}_1, \mathsf{rob}}$ and $f_{\widehat{\mathcal{D}}_2, \mathsf{rob}}$ in adversarial training. Now we are ready to compare their natural risks with standard classifiers.
Theorem \ref{thm.2}, restated below, indicates that adversarial training can mitigate the effect of delusive attacks.

\textbf{Theorem~\ref{thm.2} (restated).}
\emph{
Let $f_{\widehat{\mathcal{D}}_1, \mathsf{rob}}$ and $f_{\widehat{\mathcal{D}}_2, \mathsf{rob}}$ be the optimal linear $\ell_{\infty}$ robust classifiers for the delusive distributions $\widehat{\mathcal{D}}_1$ and $\widehat{\mathcal{D}}_2$, defined in Eqs. 6 and 7, respectively. For any $0 < \eta < 1/3$, we have
\begin{equation*}
    \mathcal{R}_{\mathsf{nat}}(f_{\widehat{\mathcal{D}}_1}, \mathcal{D})
    >
    \mathcal{R}_{\mathsf{nat}}(f_{\widehat{\mathcal{D}}_1, \mathsf{rob}}, \mathcal{D})
    \quad \text{and} \quad
    \mathcal{R}_{\mathsf{nat}}(f_{\widehat{\mathcal{D}}_2}, \mathcal{D})
    >
    \mathcal{R}_{\mathsf{nat}}(f_{\widehat{\mathcal{D}}_2, \mathsf{rob}}, \mathcal{D}).
\end{equation*}
}

\begin{proof}
The natural risk of $f_{\widehat{\mathcal{D}}_{1, \mathsf{rob}}} (\boldsymbol{x})$ is
\begin{equation*}
    \begin{aligned}
    \mathcal{R}_{\mathsf{nat}}(f_{\widehat{\mathcal{D}}_{1, \mathsf{rob}}} (\boldsymbol{x}), \mathcal{D})
    =&
    \underset{(\boldsymbol{x}, y) \sim \mathcal{D}}{\operatorname{Pr}} \left[f_{\widehat{\mathcal{D}}_{1, \mathsf{rob}}} (\boldsymbol{x}) \neq y\right] \\
    =&
    \underset{(\boldsymbol{x}, y) \sim \mathcal{D}}{\operatorname{Pr}} \left[\operatorname{sign}(\widehat{\boldsymbol{\mu}}_{1, \mathsf{rob}}^{\top} \boldsymbol{x}) \neq y\right] \\
    =&
    \operatorname{Pr} \left[ \mathcal{N}(1, \sigma^2) < 0 \right] \\
    =&
    \operatorname{Pr} \left[ \mathcal{N}(0, 1) > \frac{1}{\sigma} \right].
    \end{aligned}
\end{equation*}

Similarly, the natural risk of $f_{\widehat{\mathcal{D}}_{2, \mathsf{rob}}} (\boldsymbol{x})$ is
\begin{equation*}
    \begin{aligned}
    \mathcal{R}_{\mathsf{nat}}(f_{\widehat{\mathcal{D}}_{2, \mathsf{rob}}} (\boldsymbol{x}), \mathcal{D})
    =&
    \underset{(\boldsymbol{x}, y) \sim \mathcal{D}}{\operatorname{Pr}} \left[f_{\widehat{\mathcal{D}}_{2, \mathsf{rob}}} (\boldsymbol{x}) \neq y\right] \\
    =&
    \underset{(\boldsymbol{x}, y) \sim \mathcal{D}}{\operatorname{Pr}} \left[\operatorname{sign}(\widehat{\boldsymbol{\mu}}_{2, \mathsf{rob}}^{\top} \boldsymbol{x}) \neq y\right] \\
    =&
    \operatorname{Pr} \left[ (1-2\eta)\mathcal{N}(1, \sigma^2) + \sum_{i=1}^{d} \eta \mathcal{N}(\eta, \sigma^2) < 0 \right] \\
    =&
    \operatorname{Pr} \left[ \mathcal{N}(1 - 2\eta + d \eta^2, ((1-2\eta)^2+d\eta^2)\sigma^2) < 0 \right] \\
    =&
    \operatorname{Pr} \left[ \mathcal{N}(0, 1) > \frac{1-2\eta+d\eta^2}{\sigma \sqrt{(1-2\eta)^2+d\eta^2}} \right].
    \end{aligned}
\end{equation*}

Recall that the natural risk of the standard classifier $f_{\widehat{\mathcal{D}}_1} (\boldsymbol{x})$ is
\begin{equation*}
    \begin{aligned}
    \mathcal{R}_{\mathsf{nat}}(f_{\widehat{\mathcal{D}}_1} (\boldsymbol{x}), \mathcal{D})
    =&
    \operatorname{Pr} \left[ \mathcal{N}(0, 1) > \frac{1-d\eta^2}{\sigma \sqrt{1+d\eta^2}} \right],
    \end{aligned}
\end{equation*}
and the natural risk of the standard classifier $f_{\widehat{\mathcal{D}}_2} (\boldsymbol{x})$ is
\begin{equation*}
    \begin{aligned}
    \mathcal{R}_{\mathsf{nat}}(f_{\widehat{\mathcal{D}}_2} (\boldsymbol{x}), \mathcal{D})
    =&
    \operatorname{Pr} \left[ \mathcal{N}(0, 1) > \frac{1+3d\eta^2}{\sigma \sqrt{1+9d\eta^2}} \right].
    \end{aligned}
\end{equation*}

It is easy to see that $\frac{1-d\eta^2}{\sqrt{1+d\eta^2}} < 1$. Thus, we have
\begin{equation*}
    \mathcal{R}_{\mathsf{nat}}(f_{\widehat{\mathcal{D}}_1}, \mathcal{D}) > \mathcal{R}_{\mathsf{nat}}(f_{\widehat{\mathcal{D}}_1, \mathsf{rob}}, \mathcal{D}).
\end{equation*}
Also, $\frac{1+3d\eta^2}{\sqrt{1+9d\eta^2}} < \frac{1-2\eta+d\eta^2}{\sqrt{(1-2\eta)^2+d\eta^2}}$ is true when $0 < \eta < 1/3$ and $d>0$.
Therefore, we obtain
\begin{equation*}
    \mathcal{R}_{\mathsf{nat}}(f_{\widehat{\mathcal{D}}_2}, \mathcal{D}) > \mathcal{R}_{\mathsf{nat}}(f_{\widehat{\mathcal{D}}_2, \mathsf{rob}}, \mathcal{D}).
\end{equation*}
This concludes the proof.

\end{proof}

\section{Practical Attacks for Testing Defense (Detailed Version)}
\label{section.attacks.details}

In this section, we introduce the attacks used in our experiments to show the destructiveness of delusive attacks on real datasets and thus the necessity of adversarial training.

In practice, we focus on the empirical distribution $\mathcal{D}_n$ over $n$ data points drawn from $\mathcal{D}$. To avoid the difficulty to search through the entire $\infty$-Wasserstein ball, one common choice is to consider the following set of empirical distributions \citep{zhu2020learning}:
\begin{equation}
    \mathcal{A}(\mathcal{D}_n, \epsilon)=\left\{ \frac{1}{n} \sum _{(\boldsymbol{x}, y) \sim \mathcal{D}_n} \delta(\boldsymbol{x}^{\prime}, y): \boldsymbol{x}^{\prime} \in \mathcal{B}(\boldsymbol{x}, \epsilon)\right\},
\end{equation}
where $\delta(\boldsymbol{x}, y)$ is the dirac measure at $(\boldsymbol{x}, y)$. Note that the considered set $\mathcal{A}(\mathcal{D}_n, \epsilon) \subseteq \mathcal{B}_{\mathrm{W}_{\infty}}(\mathcal{D}_n, \epsilon)$, since each perturbed point $\boldsymbol{x}^{\prime}$ is at most $\epsilon$-away from $\boldsymbol{x}$.

The \texttt{L2C} attack proposed in \citet{feng2019learning} is actually searching for the worst-case data in $\mathcal{A}(\mathcal{D}_n, \epsilon)$ with $\ell_{\infty}$ metric. However, \texttt{L2C} directly optimizes the bi-level optimization problem (\ref{equa.delusive_adversary}), resulting in a very huge computational cost. Instead, we present five efficient attacks below, which are inspired by ``non-robust features suffice for classification'' \citep{ilyas2019adversarial}. Our delusive attacks are constructed by injecting non-robust features correlated consistently with a specific label to each example.

\textbf{Poison 1 (\texttt{P1}: Adversarial perturbations)}:
The first construction is similar to that of the deterministic dataset in~\citet{ilyas2019adversarial}. In our construction, the robust features are still correlated with their original labels.
We modify each input-label pair $(\boldsymbol{x}, y)$ as follows. We first select a target class $t$ deterministically according to the source class $y$ (e.g., using a fixed permutation of labels). Then, we add a small adversarial perturbation to $\boldsymbol{x}$ in order to ensure that it is misclassified as $t$ by a standard model. Formally:
\begin{equation}
    \boldsymbol{x}_{\mathsf{adv }}=\underset{\boldsymbol{x}' \in \mathcal{B}(\boldsymbol{x}, \epsilon)}{\arg \min } \  \ell\left(f_{\mathcal{D}}(\boldsymbol{x}^{\prime}), t\right),
\end{equation}
where $f_{\mathcal{D}}$ is a standard classifier trained on the original distribution $\mathcal{D}$ (or its finite-sample counterpart $\mathcal{D}_n$). Finally, we assign the correct label $y$ to the perturbed input. The resulting input-label pairs $(\boldsymbol{x}_{\mathsf{adv }}, y)$ make up the delusive dataset $\widehat{\mathcal{D}}_{\mathsf{P1}}$. This attack resembles the mixture-Gaussian distribution $\widehat{\mathcal{D}}_1$ in Eq.~(\ref{equa.mixGau.del1}). 

It is worth noting that this type of data poisoning was mentioned in the addendum of~\citet{nakkiran2019a}, but was not gotten further exploration. Concurrently, this attack is also suggested in~\citet{fowl2021adversarial} and achieves state-of-the-art performance by employing additional techniques such as differentiable data augmentation. Importantly, our P3 attack (a variant of P1) can yield a competitive performance compared with~\citet{fowl2021adversarial} under $\linf$ threat model (see Table~\ref{tab.at-variants} in our main text).

\textbf{Poison 2 (\texttt{P2}: Hypocritical perturbations)}:
This attack is motivated by recent studies on so-called ``hypocritical examples''~\citep{tao2020false} or ``unadversarial examples''~\citep{salman2021unadversarial}. Here we inject helpful non-robust features to the inputs so that a standard model can easily produce a correct prediction. Formally:
\begin{equation}
    \boldsymbol{x}_{\mathsf{hyp}}=\underset{\boldsymbol{x}' \in \mathcal{B}(\boldsymbol{x}, \epsilon)}{\arg \min } \  \ell\left(f_{\mathcal{D}}(\boldsymbol{x}'), y\right).
\end{equation}
The resulting input-label pairs $(\boldsymbol{x}_{\mathsf{hyp}}, y)$ make up the delusive dataset $\widehat{\mathcal{D}}_{\mathsf{P2}}$, where the helpful features are prevalent in all examples. However, those artificial features are relatively sparse in the clean data. Thus the resulting classifier that overly relies on the artificial features may perform poorly on the clean test set. This attack resembles the mixture-Gaussian distribution $\widehat{\mathcal{D}}_2$ in Eq.~(\ref{equa.mixGau.del2}). 

As a note, this attack can be regarded as a special case of the error-minimizing noise proposed in~\citet{huang2021unlearnable}, where an alternating iterative optimization process is designed to generate perturbations. In contrast, our \texttt{P2} attack is a one-level problem that makes use of a pre-trained standard classifier.

\textbf{Poison 3 (\texttt{P3}: Universal adversarial perturbations)}:
This attack is a variant of \texttt{P1}. To improve the transferability of the perturbation between examples, we adopt the class-specific universal adversarial perturbation \citep{moosavi2017universal, jetley2018friends}. Formally:
\begin{equation}
    \boldsymbol{\xi}_t = \underset{\boldsymbol{\xi} \in \mathcal{B}(\boldsymbol{0}, \epsilon)}{\arg \min } \ 
    \underset{(\boldsymbol{x}, y) \sim \mathcal{D}}{\mathbb{E}} \ell\left(f_{\mathcal{D}}(\boldsymbol{x} + \boldsymbol{\xi}), t\right),
\end{equation}
where $t$ is chosen deterministically based on $y$. The resulting input-label pairs $(\boldsymbol{x} + \boldsymbol{\xi}_t, y)$ make up the delusive dataset $\widehat{\mathcal{D}}_{\mathsf{P3}}$. Intuitively, if specific features repeatedly appears in all examples from the same class, the resulting classifier may easily capture such features. Although beyond the scope of this paper, we note that~\citet{zhao2021deep} concurrently find that the class-wise perturbation is also closely related to backdoor attacks.

\textbf{Poison 4 (\texttt{P4}: Universal hypocritical perturbations)}:
This attack is a variant of \texttt{P2}. We adopt class-specific universal unadversarial perturbations, and the resulting input-label pairs $(\boldsymbol{x} + \boldsymbol{\xi}_y, y)$ make up the delusive dataset $\widehat{\mathcal{D}}_{\mathsf{P4}}$.

\textbf{Poison 5 (\texttt{P5}: Universal random perturbations)}:
This attack injects class-specific random perturbations to training data. We first generate a random perturbation $\boldsymbol{r}_y \in \mathcal{B}(\boldsymbol{0}, \epsilon)$ for each class $y$ (using Gaussian noise or uniform noise). Then the resulting input-label pairs $(\boldsymbol{x} + \boldsymbol{r}_y, y)$ make up the delusive dataset $\widehat{\mathcal{D}}_{\mathsf{P5}}$. Despite the simplicity of this attack, we find that it is surprisingly effective in some cases.

\end{document}